\definecolor{DarkGreen}{rgb}{0.1,0.5,0.1}
\definecolor{DarkRed}{rgb}{0.5,0.1,0.1}
\definecolor{DarkBlue}{rgb}{0.1,0.1,0.5}
\definecolor{Gray}{rgb}{0.2,0.2,0.2}
\crefname{assumption}{\protect{assumption}}{\protect{assumption}}
\Crefname{assumption}{\protect{Assumption}}{\protect{Assumption}}
\title{Causal Inference out of Control:\\ Estimating the Steerability of Consumption}
\author{Gary Cheng\footnote{Stanford University} , \;  Moritz Hardt\footnote{Max Planck Institute for Intelligent Systems, T\"ubingen, and T\"ubingen AI Center} , \; and Celestine Mendler-D\"unner\footnotemark[2]}
\date{}
\begin{document}

\maketitle

\begin{abstract}
Regulators and academics are increasingly interested in the causal effect that algorithmic actions of a digital platform have on consumption. We introduce a general causal inference problem we call the \emph{steerability of consumption} that abstracts many settings of interest. Focusing on observational designs and exploiting the structure of the problem, we exhibit a set of assumptions for causal identifiability that significantly weaken the often unrealistic overlap assumptions of standard designs. The key novelty of our approach is to explicitly model the dynamics of consumption over time, viewing the platform as a controller acting on a dynamical system. From this dynamical systems perspective, we are able to show that exogenous variation in consumption and appropriately responsive algorithmic control actions are sufficient for identifying steerability of consumption. Our results illustrate the fruitful interplay of control theory and causal inference, which we illustrate with examples from econometrics, macroeconomics, and machine learning.
\end{abstract}

\newcommand{\remove}[1]{}

\section{Introduction}\label{sec:motivation}

How much do advertisements decrease screen time? Do algorithmic recommendations increase consumption of inflammatory content? Does exposure to diverse news sources mitigate political polarization? 
These are a few questions that firms, researchers, and regulators alike ask about digital platforms \citep{Barber2015TweetingFL,Brown2022EchoCR}. We unify these questions under the task we term: estimating the \emph{steerability of consumption}---i.e., estimating the effect of platform actions on consumer behavior.

Estimating the steerability of consumption requires causal inference because past consumption and platform actions influence both future consumption and future actions.
In other words, they introduce confounding.
Resolving confounding through randomization in the form of A/B tests is standard in the industry. However, randomization is not always possible on digital platforms. As past experience shows, experiments may be ethically fraught~\citep{KramerGuHa14, editorialcomment14}, technically challenging to implement, or prohibitively expensive.  Moreover, external investigators may simply not have the power to experimentally intervene in the practices of a platform.
Observational causal inference is a promising alternative. 
However, standard observational causal designs
do require the observed data satisfy an overlap assumption: the data generating distribution must assign positive probability to treatment in all strata defined by any realizable choice of the confounders. 
But since the interaction of participants with digital platforms often spans multiple time steps, the confounding set could become very large. High dimensional confounders make overlap unlikely to hold~\citep{DAmourDiFeLeSe17}, ultimately resulting in invalid inferences. An additional challenge is that algorithmic platform actions are not randomized treatments: the actions they take are strongly correlated with---or in some cases---deterministic functions of the data observed, making overlap assumptions with respect to past consumer and platform actions even less likely to hold.

To address these challenges, we take advantage of the structure of the interaction between digital platforms and their participants to expose weaker assumptions that permit \emph{valid} observational causal inference.
To do so, we take a control-theoretic perspective on the problem of estimating the steerability of consumption. Rather than omitting the role of time, as is common in causal inference, we explicitly keep track of the interactions between the platform and the participants over time.
In particular, we model consumption as a dynamical system where the consumer's features $x_t$ at time~$t$ are determined by the platform action $u_{t-1}$, the previous state $x_{t-1}$, as well as exogenous noise. The platform's action $u_t$ is then updated based on the most recent observations of $x_t$. 
As a concrete example, let $x_t$ measure what a consumer clicks on and $u_t$ as what a recommender system suggests. Applied to this example, our model captures the time-dependent interplay between user and recommender system.
Our model posits that the dynamics are Markovian---that the current time step is only affected by the previous time step---which serves to reduce the dimension of the confounder. We argue this assertion is reasonable for digital settings, as future recommendations are dictated largely by consumption in the recent past.
Building on this model, we demonstrate that it is possible to circumvent directly assuming exogenous variations in platform actions in order to establish overlap and identifiability of the steerability of consumption. We show that a) sufficient exogenous variation on the consumer's features and b) the platform control action being non-degenerate, is sufficient for identifiability.
We emphasize that, in contrast to standard approaches, our results hold even when the platform's action is a deterministic function of the past consumption and actions (e.g., a predictive model), a plausible setting in digital systems.

\begin{figure*}[t]
\small
    \centering
    \subfigure[standard model]{
    \begin{tikzpicture}[x=0.7cm,y=0.6cm]
    \begin{scope}[every node/.style={circle,thick,draw,minimum size=9mm, fill = black!10!white}]
        \node (A) at (0,0) {$u$};
        \node (D) at (3,3) {$x$};
    \end{scope}
    \begin{scope}[every node/.style={circle,thick,draw,minimum size=9mm}]
        \node (C) at (0,3) {$z$};
    \end{scope}
    \begin{scope}[>={Stealth[black]},
                  every node/.style={fill=white,circle},
                  every edge/.style={draw=black,very thick}]
        \path [->] (C) edge (D);
        \path [->] (C) edge (A);
    \end{scope}
    \begin{scope}[>={Stealth[orange]},
                  every node/.style={fill=white,circle},
                  every edge/.style={draw=orange,very thick}]
        \path [->] (A) edge (D);
    \end{scope}
    \end{tikzpicture}\label{fig:general-causal-graph}}
    \hspace{2cm}
    \subfigure[modeling temporal confounding structure]{
    \begin{tikzpicture}[x=0.7cm,y=0.6cm]
    \begin{scope}[every node/.style={circle,thick,draw,minimum size=8mm}]
        \node (B) at (6,0) {$u_{t-2}\hid $};
        \node (D) at (6,3) {$x_{t-2}\hid $};
        \node (E) at (9,3) {$x_{t-1}\hid $};
    \end{scope}
    \begin{scope}[every node/.style={thick,minimum size=9mm}]
        \node (G) at (6,5) {$\noise_{t-2}$} ;
        \node (H) at (9,5) {$\noise_{t-1}$} ;
        \node (M) at (12,5) {$\noise_{t}$} ;
    \end{scope}
    \begin{scope}[every node/.style={circle,thick,draw,minimum size=8mm, fill = black!10!white}]
        \node (L) at (9,0) {$u_{t-1}\hid $};
        \node (K) at (12,3) {$x_{t}\hid $};
    \end{scope}
    \node at (3,0) {$\dots$};
    \node at (3,3) {$\dots$};    
    \begin{scope}[>={Stealth[black]},
                  every node/.style={fill=white,circle},
                  every edge/.style={draw=black,very thick}]
        \path [->] (4.5,3) edge (D);
        \path [->] (D) edge (E);
        \path [->] (E) edge (K);
        \path [->] (E) edge (K);
        \path [->] (E) edge (L);
        \path [->] (4.5,0) edge (B);
    \end{scope}
    \begin{scope}[>={Stealth[black]},
                  every node/.style={fill=white,circle},
                  every edge/.style={draw=black,very thick}]
        \path [->] (G) edge (D);
        \path [->] (H) edge (E);
        \path [->] (M) edge (K);
        \path [->] (D) edge (B);
        \path [->] (B) edge (L);
        \path [->] (B) edge (E);
        \path [->] (4.5,1.5) edge (D);
    \end{scope}
    \begin{scope}[>={Stealth[orange]},
                  every node/.style={fill=white,circle},
                  every edge/.style={draw=orange,very thick}]
        \path [->] (L) edge (K);
    \end{scope}
        \end{tikzpicture}
        \label{fig:autoregressive-causal-graph}}
    \caption{The causal inference problem of estimating the steerability of consumption.}
\end{figure*}
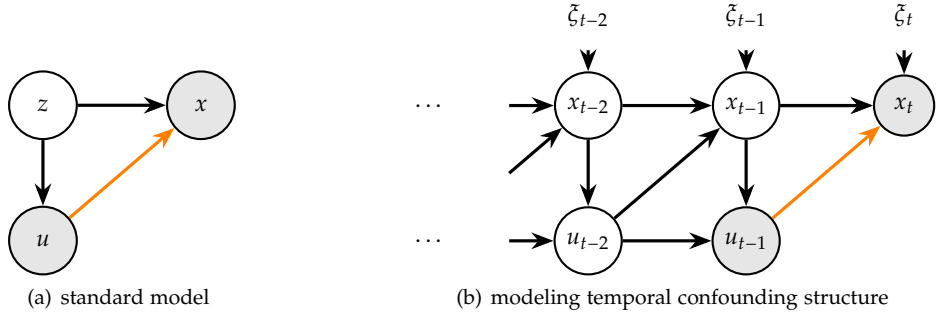

\paragraph{Contributions.} We unify a class of important causal inference problems under the umbrella of steerability of consumption. We propose a time-aware dynamical systems model to study these problems, and we design associated assumptions for observational causal inference. Working with our model,
we establish necessary and sufficient conditions for identifiability of the steerability of consumption. We demonstrate that sufficient exogenous variation in consumption and sufficient expressivity in the platform response enable causal identification, circumventing the need for direct interventions or exogenous variation on the platform action. 
We show that exogenous variation in consumption at two time steps is sufficient for identifiability, whereas one consumption shock, in general, is not.
We analyze two estimators---the \twostageregression and the adjustment formula estimators---for estimating the steerability of consumption from finite samples.
Finally, we experiment on real data to test the efficacy of our Markovian assumption at reducing overlap violations.

Practitioners routinely apply causal inference methods well outside the guardrails of typical assumptions. 
Our work can be seen as a route towards justifying the valid use of observational causal inference for estimating  steerability of consumption. Along the way, we connect problems of causal inference with the technical repertoire of control theory, a fruitful avenue for further research.

\subsection{Background}

The fact that digital platforms, their predictions, and their actions non-trivially impact the individuals that interact with the platform has widely been recognized in diverse applications spanning content recommendation, prediction policy problems and labor markets~\citep[c.f.,][]{shmueli20, thai16traffic, fleder10recom,admoavicius13, Krauth2022BreakingFL, Barber2015TweetingFL,Brown2022EchoCR}. 
In the machine learning community, the implications of predictions on populations have formally been studied in several works~\citep[e.g.,][]{PerdomoZrMeHa20, Dean2022PreferenceDU, Kalimeris2021PreferenceAI, Chaney2018HowAC}. 
We point out the work by \citet{HardtJaMe22}. They relate the extent to which a platform can steer user behavior to the economic concept of power, and introduce performative power to quantify it. Assessing performative power crucially relies on estimating the causal effect of algorithmic actions. Thus, our work provides sufficient conditions for how performative power can be assessed from observational data. 
Related to our work, \citet{mendler22causal} also focus on identifying the causal effect of predictions on eventual outcomes in settings where the covariates and the prediction are deterministically bound. However, they do not take advantage of repeated interactions between the predictor and the population, but instead take advantage of potential incongruences in modality.
Similarly, estimating the steerability of consumption has also been the motivation of a recent work on causal inference in the presence of confounding by~\cite{shah22steer}. However, the authors focus on dealing with partially unobserved confounding $z$, while taking overlap in the rollout for granted by assuming that the joint distribution $p(u,x,z)$ belongs to an exponential family. 

Our modeling approach is inspired by the literature on dynamical systems in control theory. Taking this perspective, the task of estimating the steerability of consumption in our causal model maps to a system identification problem~\citep{Ljung10}. However, our problem setup differs from the standard control theory setting because we focus on purely observational designs, where we do not choose what platform control actions (i.e., interventions) are taken. Within the system identification literature, we highlight the work of \cite{Abbasi-YadkoriSz11} because of the similarity of their model to the linear model we study in \Cref{sec:linear-model}. Their work proposes a method of controlling linear quadratic control systems with unknown dynamics via the principle of certainty equivalence; their results hinge on a finite-sample system identification result, similar in spirit to the type of identifiability results found in this paper.

From a technical standpoint the causal question we are interested in is related to studies of dose response and treatment-effect estimation under overlap violations in causal inference~\citep[c.f.,][]{PetersenPoGrWaVa12}. By approaching the problem from a control theoretic angle we arrive at a principled approach to shrink the adjustment set and make identifiability possible.

\section{Model}
\label{sec:general-model}

The standard causal model for our problem is shown in \Cref{fig:general-causal-graph}. Estimating the steerability of consumption corresponds to quantifying the causal effect of a platform action $u$ on a state $x$, subject to observed confounding $z$, where actions $u$ represent the algorithmic decisions of a digital platform, and the variable $x$ captures relevant user features, such as what content the user consumed. The confounding variable $z$ captures all available past information that influences both the choice of platform action $u$ and the variable $x$. 
As we have explained earlier, high dimensional confounding due to long rollouts and correlated platform actions suggest overlap is unlikely to hold in the standard setting, making the standard model unsuitable for estimating the steerability of consumption. 

The unique feature of our model---outlined in \Cref{fig:autoregressive-causal-graph}---is that it makes the temporal component of interactions among the confounding variables explicit. 
We let $x_t\hid \in \R^d$ and $u_t\hid  \in \R^p$ denote the consumption and platform action at time step $t$ respectively. 
We assume for all $t\geq 0$ the dynamics of the system follow 
\begin{align}
\begin{split}
    x_t\hid &= \Hx{f(x_{t-1}\hid)}{ g(u_{t-1}\hid)} + \noise_t\\
  u_t\hid &= \Hu{h(x_t\hid)}{ r(u_{t-1}\hid)}
\end{split}
\label{eqn:gen_model}   
\end{align}
with $\noise_t \in \R^d$ modeling potential exogenous variations in $x_t$ and the functions $f: \R^d\to \R^d$, $g: \R^p \to \R^d$, $h: \R^d  \to \R^p$, and $r: \R^p \to \R^p$ describe how consumption and platform actions affect one another.  
%
 We make the following assumption on the exogenous noise:\footnote{We choose to use \Cref{ass:independent_noise} for clarity, even though it is stronger than we need for our results. See \Cref{sec:independence-discussion} for a discussion of how to relax the assumption.}
\begin{assumption}[Mutually Independent Exogenous Variation]\label{ass:independent_noise}
    For any $t \geq 1$, the random variable $\noise_t$ is mutually independent of $\noise_{k}$ for all $k\neq t$ and independent of $(x_{0}, u_{0})\sim P_{0}$. 
\end{assumption}
With respect to the model we outlined above, we define steerability of consumption as the ability of the platform to change user consumption. More formally, given a time step $t$, a base action $u$, and an intervention $u'$, we define the steerability of consumption as
\begin{align*}
    \steer_t(u,u')\defeq\E[x_{t} \mid \mathrm{do}(u_{t-1} \defeq u')] - \E[x_{t} \mid \mathrm{do}(u_{t-1} \defeq u)].
\end{align*}
In our model, a sufficient condition for identifying the steerability of consumption is to identify the following causal effect
\[\bar{x}_t(u)\defeq \E[x_{t} \mid \mathrm{do}(u_{t-1} \defeq u)]\,.\]
Because our system dynamics~\eqref{eqn:gen_model} are time-invariant and the structural equations for $x$ are assumed to be separable, we have $\steer_t=\steer_{t'}$ for all $t, t'$. 
Thus, without loss of generality, we will focus on identifying $\steer_\bT$ via identifying $\bar{x}_\bT$, letting $\bT$ denote the index we are interested in estimating the steerability of consumption.
For $\bK\geq 1$, we use $R_\bK$ to denote a rollout of the previous $\bK$ time indices leading up to the chosen time index $\bT$: 
\[R_\bK\defeq(\{x_{\bT-t}, u_{\bT - t}\}_{t=1}^{\bK}, x_\bT).\]
In this work, we assume access to iid observations of rollouts $R_\bK$. We will specify $\bK$ in each result.

\subsection{Running example}
\label{sec:running-example}
We instantiate our model with an example. Consider an auditor who is interested in estimating the impact of the recommendation algorithm of a video streaming platform---like Twitch or YouTube---on the consumption patterns of its users. Let $y_t \in \R^p$ be some measure of content consumption (e.g., number of hours streamed) for $p$ video categories of interest during week $t$ for a given user. 
Let $z_t \in \R^{d_z}$ be comprised of measurements about the platform such as revenue per category, click-through rate per category, unique weekly users, unique advertisers per category, competitors' performance, etc. which could be confounders. We can think of the joint vector $[y_t; z_t] \in \R^d$ as the state variable $x_t$ for $d=p+d_z$. The platform action $u_t \in \R^p$ is a measure of how many videos from the $p$ categories of interest are recommended to a given user
during week $t$. The platform interfaces using $u_t$ with the goal of maximizing total profits, which is some deterministic function of $x_t$. The auditor is interested in estimating how the platform action $u_{t-1}$ impacts the average watch habits $y_t$ of users. More specifically, they are interested in the first $p$ coordinates of the steerability of consumption $\steer(u, u')$. 
    
Our model postulates that user consumption changes over time based on the recommendations by the algorithm, as well as external factors (e.g., new trends).  Formally, taking inspiration from \cite{JamborWaLa12}, we model the dynamics of the system as 
\begin{align*}
    z_t & = f_{1}(z_{t-1}, y_{t-1}) + g_1(u_{t-1}) + \noise_t^{(1)}\\
    y_t &= f_{2}(z_{t-1}, y_{t-1}) + g_2(u_{t-1})+ \noise_t^{(2)}.
\end{align*}
The function~$f_1$ models how the performance metrics chosen as a target variable by the firm evolve over time, while the function~$g_1$ models the platform's ability to control this metric. The function~$f_{2}$ models how much interest users retain in each video category from week to week, as well as the effect of confounders on viewership (e.g., how many hours of viewing time can a competitor poach). 
The auditor wants to estimate the relationship~$g_2$ that governs how much consumption increases as more recommendations get served. The noise variables~$\noise_t^{(1)}, \noise_t^{(2)}$ allow for natural variation in user preferences. For example, the price of Bitcoin may increase due to changes in economic conditions, leading to many more users watching cryptocurrency videos; this change in behavior is independent of past consumption and the platform's recommendations. 
We can model the platform action similarly as
\[u_t = h(z_t, y_t) + r(u_{t-1}),\]
where $h$ models the platform's algorithm of how viewer statistics and other metrics affect  recommendations in the future. 
The function $r$ models how the video streaming service regularizes its recommendations to avoid overfitting to recent activity. 

\paragraph{Plausibility of modeling assumptions.} 
Our model posits a Markovian assumption on the platform and consumption dynamics and an assumption that the consumption and platform action updates are additive (separable) in nature. 
The Markovian assumption on the platform action dynamics is reasonable for two main reasons.
First, digital platforms are constantly retraining machine learning models on fresh data as a way to improve performance, mitigate distribution shift, and quickly fix models which have suffered unexpected drops in performance \citep{Shankar2022OperationalizingML}. Given that this retraining occurs on a daily or even hourly cadence, this suggests that the machine-learning-based algorithmic control actions a platform takes at any time $t$ 
mainly depend on the state and actions from the recent past.
Second, the Markovian view of digital platform control actions is an accepted view in the recommendation system literature. For example, the contextual multi-armed bandit models used to study recommendation systems are Markovian by construction---the platform uses fresh context provided at every time step to make its decisions \citep{Langford2007TheEA, Bouneffouf2019ASO}.
The Markovian assumption on consumer dynamics is based on the belief that there are few long range causal effects that affect consumption, and that the ones that do exist---say inherent biases, interests, or habits---can be encoded directly or by proxy into all of the states, without blowing up the dimension. For example, we could encode long-term, content-specific click habits by estimating click proportions by content category and placing this information into all of the states.
We can generalize our non-linear results to settings beyond additive-update dynamics. We choose to focus on additive updates because a) it is the simplest model which still conveys the nuance of our results, b) it is well accepted in the dynamical system and causal inference literature, and c) additive updates are prevalent in machine learning (e.g., gradient methods).

\paragraph{Beyond recommender systems.} 
The steerability of consumption is not a term specific to recommender systems; rather, it is a general term referring to the impact platform actions have on user behavior. It certainly is applicable to other digital settings. For example, many digital advertising platforms (and third-party auditors) are interested in whether personalized advertising increases platform activity. On one hand, advertisements clutter user interfaces, making the user experience less streamlined, but on the other hand, personalized advertisements provide users with more opportunities to engage, giving the platform more influence over user lives. To model this scenario, let $x_t$ be some measure of engagement (e.g., clicks, time online) and $u_t$ be some measure of the type and quantity of ads served. Confounders could include other platform performance measures such as monthly active users. 
Besides digital platforms, our model also applies to some economic settings. Micro-economists are often interested in estimating the effect product prices have on demand, termed the \textit{price elasticity of demand}. If we model product demand using $x_t$ and model product prices using $u_t$, then the price elasticity of demand is precisely the steerability of consumption. Confounders like product quality can be accounted for in the state variable $x_t$. In macroeconomics, a classical problem is estimating the effect the Federal Interest Rate has on inflation and unemployment. We can use our framework to model the Federal Interest Rate as the platform action $u_t$ and inflation and unemployment rates as the state $x_t$. In this example, GDP and other measures of the global economy could be possible confounders to account for.

\section{Identifiability from exogenous variations on consumption}
\label{sec:identifiability}

In this section, we outline necessary and sufficient conditions for identifying $\steer_\bT$ given iid observations of $R_{\bK=1}$.
A quantity is \textit{identifiable} if it can be uniquely determined from observational data probability distribution. Conversely, if there exists multiple values of said quantity which are all consistent with the observational data probability distribution, then we say it is \textit{unidentifiable}. 

To provide some context and intuition for our proof strategy for showing identifiability, let us start from the general causal graph in Figure~\ref{fig:general-causal-graph} and recall classical results from causal inference in the presence of observed confounding~\citep{pearl09book}. 
Standard results tell us that a sufficient condition for identifiability of the causal effect of $u$ on $x$  is \emph{admissibility} and \emph{overlap}. 

\begin{definition}[Admissibility]\label{def:adjustment-formula}
We say a continuous random variable $Z$ with density $p$ is admissible for adjustment with respect to treatment $U$ and outcome $X$ if the adjustment formula is valid: 
\begin{equation}\label{eqn:adjustment-formula}
    \E[X \mid \mathrm{do}(U \defeq u)] = \int \E[X \mid U=u, Z=z] p(z) \,dz.
\end{equation}
\end{definition}

\begin{definition}[Overlap]
Given an action $U$ and a confounding variable $Z$ with well-defined joint density $p$. Then, we say overlap of $(u,z)$ is satisfied if $p_{U \mid Z}(u' \mid z') >0$ for all $u' \in \R^p$ and $z'$ where $p_Z(z') > 0$.
\end{definition}

Overlap guarantees that every $z$ in the support has non-zero probability to co-occur any action $u$, and thus $\E[X\mid U=u, Z=z]$ is well defined.
Overlap with admissibility guarantees that $\E[X \mid \mathrm{do}(U \defeq u)]$ can be uniquely expressed as a function of observational data distributions, via \eqref{eqn:adjustment-formula}, implying $\E[X \mid \mathrm{do}(U \defeq u)]$ is identifiable.

Now, we return to our model. In order to show $\bar{x}_\bT$ is identifiable, we first show that $x_{\bT-1}$ is admissible for adjustment. 
The proof of \Cref{lem:admissibility} is found in \Cref{proof:lem:admissibility}.

\begin{proposition}[Admissibility in our model]\label{lem:admissibility}
Given the structural equations in \eqref{eqn:gen_model} and let \Cref{ass:independent_noise} hold.  Then, $x_{\bT-1}$ is admissible with respect to $u_{\bT}$ and $x_{\bT}$ for any $\bT \geq 0$.
\end{proposition}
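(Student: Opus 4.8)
The plan is to verify the adjustment formula of Definition~\ref{def:adjustment-formula} directly with $Z = x_{\bT-1}$, $U = u_{\bT-1}$, and $X = x_{\bT}$. Wait—I need to match the indices carefully. The statement says $x_{\bT-1}$ is admissible with respect to $u_\bT$ and $x_\bT$, but the steerability target intervenes on $u_{\bT-1}$. Let me re-read: the quantity of interest is $\bar{x}_\bT(u) = \E[x_\bT \mid \mathrm{do}(u_{\bT-1} \defeq u)]$, so the relevant treatment is $u_{\bT-1}$, not $u_\bT$. I will treat the proposition as claiming that $x_{\bT-1}$ serves as a valid adjustment set for the effect of $u_{\bT-1}$ on $x_\bT$, which is what the downstream identifiability argument needs.

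\medskip
\noindent\textbf{Approach.} The natural strategy is to invoke the back-door criterion on the temporal causal graph in Figure~\ref{fig:autoregressive-causal-graph}. I would argue that $x_{\bT-1}$ blocks every back-door path from $u_{\bT-1}$ to $x_\bT$. First I would identify the structural parents: from \eqref{eqn:gen_model}, $x_\bT = f(x_{\bT-1}) + g(u_{\bT-1}) + \noise_\bT$ depends on $u_{\bT-1}$ only through the explicit $g(u_{\bT-1})$ term, and $u_{\bT-1} = h(x_{\bT-1}) + r(u_{\bT-2})$. The only way $u_{\bT-1}$ and $x_\bT$ share a common ancestor is through the history feeding into $x_{\bT-1}$ (since $u_{\bT-1}$ is a function of $x_{\bT-1}$ and earlier quantities, and $x_\bT$ is a function of $x_{\bT-1}$ and $\noise_\bT$). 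Conditioning on $x_{\bT-1}$ therefore severs all such paths, and by Assumption~\ref{ass:independent_noise} the fresh noise $\noise_\bT$ is independent of everything in the past, so it introduces no unblocked path. I would then conclude admissibility by the standard back-door theorem of \citet{pearl09book}.

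\medskip
\noindent\textbf{Alternative direct verification.} If a graph-free argument is preferred, I would verify \eqref{eqn:adjustment-formula} by hand. The key identity is that, conditioning on $x_{\bT-1} = x'$, the interventional and observational conditional distributions of $x_\bT$ given $u_{\bT-1}$ coincide: since $x_\bT = f(x') + g(u_{\bT-1}) + \noise_\bT$ with $\noise_\bT \perp (x_{\bT-1}, u_{\bT-1})$, we have $\E[x_\bT \mid u_{\bT-1}=u, x_{\bT-1}=x'] = f(x') + g(u) + \E[\noise_\bT] = \E[x_\bT \mid \mathrm{do}(u_{\bT-1}\defeq u), x_{\bT-1}=x']$. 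Integrating against $p(x')$ and using that the intervention on $u_{\bT-1}$ does not alter the marginal law of $x_{\bT-1}$ (which is causally upstream of $u_{\bT-1}$) yields \eqref{eqn:adjustment-formula}.

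\medskip
\noindent\textbf{Main obstacle.} The subtle point—and the step I would spend the most care on—is justifying that the do-intervention on $u_{\bT-1}$ leaves the marginal distribution of $x_{\bT-1}$ unchanged, i.e., $p(x_{\bT-1} \mid \mathrm{do}(u_{\bT-1}\defeq u)) = p(x_{\bT-1})$. This is where the \emph{temporal ordering} of the dynamical system is essential: $x_{\bT-1}$ is generated strictly before $u_{\bT-1}$ in \eqref{eqn:gen_model}, so surgically setting $u_{\bT-1}$ cannot propagate backward. I would make this rigorous by appealing to the recursive (triangular) structure of the structural equations together with Assumption~\ref{ass:independent_noise}, which guarantees the noise driving $x_{\bT-1}$ is independent of the intervention. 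Everything else is bookkeeping on the additive form of the $x$-update.
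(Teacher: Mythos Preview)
Your diagnosis of the index mismatch is correct (the relevant treatment is $u_{\bT-1}$), and both of your approaches are sound. Your ``direct verification'' is essentially the paper's proof: the paper computes $\E[x_\bT \mid \mathrm{do}(u_{\bT-1}\defeq u)] = \E[f(x_{\bT-1})+g(u)+\noise_\bT]$, rewrites this as an integral against $p_{x_{\bT-1}}$, and then uses independence of $\noise_\bT$ from $(x_{\bT-1},u_{\bT-1})$ to reinsert the conditioning on $u_{\bT-1}=u$, arriving at the adjustment formula in three lines---exactly your second route, including the observation that the do-intervention does not alter the marginal of $x_{\bT-1}$.
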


Hence, the main challenge for establishing identifiability of $\steer_\bT$ is to argue about overlap of $(u_{\bT-1},x_{\bT-1})$. Once we show overlap, we can rewrite $\bar{x}_\bT$ as a function of the observational probability distribution (of $R_{\bK =1}$) by way of the adjustment formula \eqref{eqn:adjustment-formula}.
This would mean $\bar{x}_\bT$ is identifiable and therefore the steerability of consumption $\steer_\bT$ is as well.

\subsection{Key assumptions}
\label{sec:key-assm}

We highlight the two requirements on the dynamical system in \eqref{eqn:gen_model} that will allow us to establish overlap of $(u_{\bT-1},x_{\bT-1})$. The first assumption requires that there is exogenous noise in the system that leads to sufficient variation in consumption $x$ across time. 

\begin{definition}[Consumption shock]\label{ass:noise-coverage}
For a given time step $t\geq 0$ we say there is a consumption shock at time $t$, if the noise $\noise_t$ satisfies $p_{\noise_t}(a) > 0$ for all $a \in \R^d$ where $p_{\noise_t}$ denotes the density of $\noise_t$. 
\end{definition}

 We say the system is exposed to $\bM$ shocks prior to $\bT$ if for all $t \in \{\bT-\bM, \ldots, \bT-1\}$, there is a shock in consumption. We expect that variations in consumption naturally occur in the presence of unexpected news events, economic shocks, or new trends.
In order to leverage these consumption shocks for the purpose of identifiability, we need one crucial assumption on the platform action, which will allow us to circumvent directly assuming exogenous variation on the platform action.
Namely, the platform needs to be sufficiently sensitive to the variations in consumption $x$, so that the consumption shocks propagate into the platform action $u$ at consecutive time steps.

\begin{definition}[Responsive platform action]\label{ass:expressive-control}
For a platform, let $q_c: \R^d \to \R^p$ defined as 
$q_c(y) \defeq r(h(y) + c)$ 
describe how the current state $y$ affects the next platform action, given that the previous platform action was $c$. If $q_c$ is a surjective, continuously differentiable map with a Jacobian $J \in \R^{p, d}$ such that $\rank(J) = \min(p, d)$ always holds for all $c\in \R^p$, then we say the platform action is responsive.
\end{definition}

To put our assumptions in context, recall the video recommender system example from Section~\ref{sec:running-example}.  
We expect that variations in user video consumption (\Cref{ass:noise-coverage}) naturally occur in the presence of unexpected news events, economic shocks, or new trends. 
To investigate \Cref{ass:expressive-control}, consider $r(u) = \alpha u$ as a plausible example. This corresponds to a model where the platform uses previous platform actions as a regularizer for how they select future actions. Note that this simple choice of $r$ is surjective.  Furthermore, we expect the number of metrics and confounders which can be affected by platform actions to be large compared to the dimensionality of the platform action, and hence $d \geq p$. In this regime, surjectivity of $h$ is a reasonable assumption and because $q_c$ is the composition of $h$ and $r$, surjectivity of $q_c$ follows. The Jacobian rank condition imposes a form of ``monotonicity'' on $q_c$. In the video recommender system setting this could correspond to: more views in category $i$ cause more recommendations in category $i$---a plausible assumption on a ML-driven system. \Cref{ass:expressive-control} is also supported by ideas proposed in \citet{Dean2019RecommendationsAU}; they suggest that recommendation systems should be designed such that users have the ability to design the recommendations they see indirectly via the actions they take. This prescription corresponds in spirit to the surjectivity condition of \Cref{ass:expressive-control}.

\subsection{General identifiability result}\label{sec:general-identifiability}

We now present our main identifiability result. The proof can be found in \Cref{proof:lem:finite-discrete-full-coverage}.

\newcommand{\umo}{z_{-1}}

\begin{theorem}
\label{lem:finite-discrete-full-coverage}
Let the dynamical system in \eqref{eqn:gen_model} have a responsive platform action. 
Let \Cref{ass:independent_noise} hold. Fix a $\bT\geq2$ and let the auditor observe $R_{\bK=1}$.
Then, 
\begin{enumerate}
\item if the system exhibits $\bM=2$ consumption shocks prior to time $\bT$, 
the steerabiltiy of consumption $\steer_\bT(u, u')$  
is identifiable for any $u, u' \in \R^p$. 
\item if the system exhibits $\bM<2$ consumption shocks prior to $T$, then for any $f, g, h, r$, there exists a distribution of $(x_{\bT-2}, u_{\bT-2})$ such that for all $u \neq u'$, the steerability of consumption $\steer_\bT(u, u')$ is unidentifiable.
\end{enumerate}
\end{theorem}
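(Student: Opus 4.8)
\emph{Setup and reduction.} Substituting the intervention $\mathrm{do}(u_{\bT-1}\defeq u)$ into the separable dynamics \eqref{eqn:gen_model} gives $x_\bT = f(x_{\bT-1}) + g(u) + \noise_\bT$, where $x_{\bT-1}$ keeps its observational law (it is upstream of the intervention) and $\noise_\bT$ is independent. Hence $\bar{x}_\bT(u) = \E[f(x_{\bT-1})] + g(u) + \E[\noise_\bT]$ and so $\steer_\bT(u,u') = g(u')-g(u)$. The whole question is therefore whether the difference $g(u')-g(u)$ is pinned down by the law of $R_{\bK=1}$. By \Cref{lem:admissibility} the confounder $x_{\bT-1}$ is admissible, so the only ingredient still needed to identify $\bar{x}_\bT$ through the adjustment formula \eqref{eqn:adjustment-formula} is overlap of $(u_{\bT-1},x_{\bT-1})$; part~1 establishes it from two shocks, and part~2 shows it must fail with fewer.

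\emph{Part 1 (two shocks $\Rightarrow$ overlap).} The plan is to show $p_{u_{\bT-1}\mid x_{\bT-1}}(u\mid z)>0$ for every $u\in\R^p$ and $z\in\R^d$. The shock at $\bT-1$ plays two roles: since $x_{\bT-1} = f(x_{\bT-2}) + g(u_{\bT-2}) + \noise_{\bT-1}$ with $\noise_{\bT-1}$ of full support, $x_{\bT-1}$ has full support $\R^d$, and moreover conditioning on $x_{\bT-1}=z$ does not shrink the support of the earlier variables, because the conditional density keeps the strictly positive factor $p_{\noise_{\bT-1}}(z - f(x_{\bT-2}) - g(u_{\bT-2}))$. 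Next I would expand the action update $u_{\bT-2}=h(x_{\bT-2})+r(u_{\bT-3})$ to rewrite $u_{\bT-1} = h(x_{\bT-1}) + q_c(x_{\bT-2})$ with $c\defeq r(u_{\bT-3})$ and $q_c$ exactly the map of \Cref{ass:expressive-control} (for $\bT\ge3$; the boundary $\bT=2$ is analogous, using the exogenous initial action). Conditioned on $x_{\bT-1}=z$ and $c$ this reads $u_{\bT-1} = h(z) + q_c(x_{\bT-2})$. The shock at $\bT-2$ gives $x_{\bT-2}$ full support, which support-preservation keeps intact under the conditioning. The step I expect to be most delicate is then the pushforward: responsiveness makes $q_c$ a surjective submersion, and one must invoke its everywhere full-rank Jacobian to rule out critical points, since only then does the everywhere-positive conditional density of $x_{\bT-2}$ push forward to an everywhere-positive density on $\R^p$; the fixed shift $h(z)$ and integrating out $c$ preserve positivity, yielding overlap. (Note a $C^1$ surjection with this rank condition forces $p\le d$, matching the regime discussed for the running example.) Admissibility plus overlap then validates \eqref{eqn:adjustment-formula}, identifying $\bar{x}_\bT$ and hence $\steer_\bT$.

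\emph{Part 2 (fewer than two shocks $\Rightarrow$ unidentifiable).} I would take the law of $(x_{\bT-2},u_{\bT-2})$ to be a point mass at some $(a_0,b_0)$. With at most one shock, $u_{\bT-1} = h(x_{\bT-1}) + r(b_0)$ is a \emph{deterministic} function of $x_{\bT-1}$, so the support of $(u_{\bT-1},x_{\bT-1})$ collapses onto the graph $\{(z,\,h(z)+r(b_0))\}$, a measure-zero subset of $\R^{d+p}$, and overlap fails. The law of $R_{\bK=1}$ is then pinned down only through the noise laws, the curve $\Phi(z)\defeq f(z)+g(h(z)+r(b_0))$, and the scalar $\mu\defeq f(a_0)+g(b_0)$. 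I would build an alternative model with the same $h$, $r$, and noise, setting $\tilde g = g+\delta$ and $\tilde f(z) = f(z) - \delta(h(z)+r(b_0))$; this leaves $\Phi$ unchanged, and matching $\mu$ imposes only the single constraint $\delta(b_0)=\delta(h(a_0)+r(b_0))$. Any $\delta$ meeting this constraint with $\delta(u')\neq\delta(u)$ reproduces the entire observational law yet changes $\tilde g(u')-\tilde g(u)$, proving $\steer_\bT(u,u')$ unidentifiable.

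The obstacle is the qualifier ``for all $u\neq u'$'': the constraint locks exactly the one direction $\{b_0,\,h(a_0)+r(b_0)\}$, which the construction above cannot perturb. For that lone pair I would instead compare against an alternative whose initial point mass sits at a different $(a_1,b_1)$, chosen so its locked direction $\{b_1,\,h(a_1)+r(b_1)\}$ avoids $\{u,u'\}$, and re-solve for compensating $\tilde f,\tilde g$ so that the marginal of $x_{\bT-1}$, the deterministic map $z\mapsto u_{\bT-1}$, and the conditional law of $x_\bT$ are all preserved. Verifying that such a shifted point mass and compensating functions exist for \emph{every} $f,g,h,r$---including degenerate cases such as constant $h$---is the main technical hurdle of part~2.
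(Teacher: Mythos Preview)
Your Part~1 argument is essentially the paper's: condition on the earlier history to freeze $c$, use one shock to give $x_{\bT-1}$ full support, use Bayes' rule together with the other shock to show $x_{\bT-2}$ retains full support conditional on $x_{\bT-1}$, then push forward through $q_c$ and shift by $h(z)$. The paper isolates the pushforward step as a standalone measure-theoretic lemma (phrased via Lusin's condition and non-singular measurable transformations), but its content is precisely the ``strictly positive density through a surjective full-rank-Jacobian map stays strictly positive'' step you flag as delicate.

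Your Part~2 perturbation $\tilde g=g+\delta$, $\tilde f(z)=f(z)-\delta(h(z)+r(b_0))$ is the paper's construction: the paper places the point mass at the origin and writes $\hat f(a)=f(a)+\Delta(h(a))$, $\hat g(b)=g(b)-\Delta(b)$, keeping $h$ and the noise laws fixed. Your version is in fact stated more carefully, since the paper's displayed computation treats $u_1$ as $h(x_1)$ and thus silently uses $r(0)=0$; your $\delta(h(z)+r(b_0))$ absorbs that shift explicitly. The divergence is your final paragraph. The paper does \emph{not} discuss the single constraint $\delta(b_0)=\delta(h(a_0)+r(b_0))$ you derive from matching the law of $x_{\bT-1}$, nor the exceptional pair it locks; it simply asserts that any nonzero $\Delta$ does the job and stops. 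So your proposed detour---building a second alternative model whose point mass sits at a shifted $(a_1,b_1)$ and re-solving for compensating $\tilde f,\tilde g,\tilde r$---is extra relative to the paper. Your caution is defensible (the constraint is genuine and the paper glosses over it), but if your aim is to reproduce the paper's argument, that last paragraph is surplus; the paper regards the basic perturbation as already sufficient.
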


In words, this result states that consumption shocks on two preceding state variables are necessary and sufficient for the auditor to identify the steerability of consumption from observations. 
A single consumption shock is not enough for identifiability because $h$ can be a deterministic function. Thus, in this case for any given combination of of $x_t, u_{t-1}$, the auditor is only able to see one corresponding value of $u_t$, which means overlap is not satisfied. The second noise spike is necessary to provide another degree of freedom which provides enough variation for overlap, making the steerability of consumption identifiable. 
This result suggests that auditors should select $\bT$ to be a time step following the occurrence of consumption shocks; e.g., the auditor should use observations following unexpected news events or economic shocks to estimate the steerability of consumption.
We note that our analysis crucially relies on accounting for how the noise propagates through the system across multiple time steps. Because the standard causal model in \Cref{fig:general-causal-graph} is time agnostic, it is not expressive enough to make a claim like \Cref{lem:finite-discrete-full-coverage}.

The two main advantages of our approach are that
a) \Cref{ass:expressive-control} is an assumption on the design of the platform action which can be verified with enough knowledge of the platform, and b) we allow the platform action to be deterministic in its inputs, a setting which subsumes many practical ML-driven systems.
This stands in contrast to typical overlap assumptions, which are often unverifiable and de facto require explicit (and potentially unnatural) exogenous variation on the platform action.

\section{Exploiting longer rollouts for identifiability in the linear model}
\label{sec:linear-model}

In practice, an auditor may have access to longer rollouts of observations ($\bK > 1$). A natural question is whether they can exploit this information to make it easier to estimate the steerability of consumption. In this section we investigate this question in the linear setting, while we leave the general setting for future work.
More specifically, in this section, we will instantiate our model \eqref{eqn:gen_model} as follows: 
\begin{align}\label{eqn:lin_dynamics}
    \begin{split}
        f(x) &\defeq A x \qquad g(u) \defeq B u  \\
        h(x) &\defeq C x \qquad r(u) \defeq D u,
    \end{split}
\end{align}
where $A\in \R^{d, d}, B\in\R^{d, p}, C\in \R^{p, d}, D\in \R^{p,p}$.
The linear dynamics admit a clean characterization of the tradeoff between rollout length and conditions for identifiability. Linear state dynamics is certainly a strong assumption, but in has proven to be a useful approximation in control theory---e.g., quadrotors can be effectively controlled with a linear controller (e.g.,  a proportional-integral (PI) controller) relying on a linear state dynamics model \citep{BouabdallahNoSi04}.

\newcommand{\Plin}{P}
\newcommand{\hPlin}{\hat{P}}

In this linear setting, identifying the steerability of consumption reduces to identifying the matrix $B$, namely because $\steer_\bT(u,u') = B (u'-u)$.
We will again consider identifiability under consumption shocks. However, for the linear case a weaker definition suffices\footnote{To show that full-support implies full-span, apply \Cref{lem:positive-q-density} to the function $h(x) =  a^\top x$.}. 
\begin{definition}[Fully-spanning consumption shock]
\label{ass:noise-span}
We say there is a fully-spanning consumption shock at time $t$, if $\xi_t$ is such that for all vectors $a \in \R^{d}$ with $a \neq 0$, $a^\top  \noise_{t}$ is almost surely not a constant. 
\end{definition}
We will also replace the responsive platform action assumption (\Cref{ass:expressive-control}) with a full rank condition on the linear system.
\begin{definition}[Full-row-rank platform action]
\label{ass:row-rank} 
For a given $\bM \geq 2$, we say the platform has a full-row-rank platform action over a span of $\bM$ steps if $C$ and $D$ are such that the matrix $[DC, \ldots, D^{\bM-1}C]$ has full row rank. 
\end{definition}
In the linear setting, a full-row rank platform action which spans $M=2$ time steps is also an expressive platform action (\Cref{ass:expressive-control}). Similarly, an expressive platform action is also a full-row rank platform action which spans $M=2$ time steps.
\Cref{ass:row-rank} serves to generalize \Cref{ass:expressive-control} beyond the $\bK = 1$ setting of \Cref{sec:identifiability}. This generalization turns out to be the crucial piece for characterizing the benefits of observing longer rollouts, which we  formalize in the following result. The proof can be found in \Cref{proof:thm:identifiabile-five-tuple}.

\newcommand{\xmat}{X}
\newcommand{\umat}{U}
\begin{theorem}
\label{thm:identifiabile-five-tuple}
Consider the dynamical system in \eqref{eqn:gen_model} 
with linear functions $f, g, h, r$ defined in \eqref{eqn:lin_dynamics}.
Let \Cref{ass:independent_noise} hold.
Fix a time step $\bT\geq \bK+1$, let the auditor observe iid samples of $R_\bK$. Let there be a fully-spanning consumption shock at time step $\bT-\bK$. Then, 
\begin{enumerate}
\item[a)]
if $\bK =1$, then for any $A, B, C, D$, there exists a distribution over $(x_{\bT - 2}, u_{\bT-2})$ such that $\steer_\bT(u, u')$ is unidentifiable.
\item[b)] 
if $\bK \geq 2$, then full-row-rank platform action over the span of $\bK$ steps is sufficient for identifiability of  $\steer_\bT(u, u')$ for any $u, u'$.
\item[c)]
if $\bK \geq 2$, $x_{\bT - \bK - 1} = u_{\bT - \bK -1} = 0$, and $\noise_t = 0$ for $t \geq \bT - \bK + 1$, then full-row-rank platform action over the span of $\bK$ steps is necessary for identifiability of $\steer_\bT(u, u')$ for any $u, u' $.
\end{enumerate}
\end{theorem}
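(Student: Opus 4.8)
\emph{Reduction and the identifiability criterion.} Since $\steer_\bT(u,u')=B(u'-u)$, it suffices to identify the pair $[A,B]$; write $s\defeq\bT-\bK$ for the time of the shock. By the structural form in \eqref{eqn:lin_dynamics} together with \Cref{ass:independent_noise} (this is the content of \Cref{lem:admissibility}), every transition satisfies $\E[x_{t+1}\mid x_t,u_t]=Ax_t+Bu_t+\E[\noise_{t+1}]$ with $\noise_{t+1}$ independent of $(x_t,u_t)$. Hence the common slope $[A,B]$ is pinned down by the cross-covariances $\mathrm{Cov}(x_{t+1},(x_t,u_t))$ pooled over $t\in\{s,\dots,\bT-1\}$ as soon as the pooled covariance of the regressors is nonsingular. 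The plan is therefore to show that $B$ is identifiable exactly when
\[
(\star)\qquad \text{there is no }(a,b)\neq 0\text{ with }a^\top x_t+b^\top u_t\text{ a.s.\ constant for every }t\in\{s,\dots,\bT-1\}.
\]

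\emph{Isolating the shock and the algebraic core.} I would decompose each observable into its shock component and the rest, $x_t=\chi_t+M_t\noise_s$ and $u_t=\upsilon_t+N_t\noise_s$, where $\chi_t,\upsilon_t$ collect the dependence on the initial state and the other noises and are thus independent of $\noise_s$, and $M_t,N_t$ are the linear propagation maps obeying $M_s=I$, $N_s=C$, $M_{t+1}=AM_t+BN_t$, $N_{t+1}=CM_{t+1}+DN_t$. Because $\noise_s$ is independent of $(\chi_t,\upsilon_t)$ and fully spans $\R^d$, the quantity $a^\top x_t+b^\top u_t$ can be a.s.\ constant only if its $\noise_s$-coefficient vanishes, i.e.\ $a^\top M_t+b^\top N_t=0$ for all $t\in\{s,\dots,\bT-1\}$. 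Unrolling the $u$-recursion gives $N_{s+i}=\sum_{j=0}^{i}D^jC\,M_{s+i-j}$, so these equations read $(a+C^\top b)^\top M_{s+i}+\sum_{j=1}^{i}b^\top D^jC\,M_{s+i-j}=0$. The case $i=0$ forces $a+C^\top b=0$ (as $M_s=I$), and a short induction on $i$ then yields $b^\top D^iC=0$ for $i=1,\dots,\bK-1$, that is $b^\top[DC,\dots,D^{\bK-1}C]=0$. This settles \textbf{part (b)}: if $\bK\geq 2$ and the platform action has full row rank, then $b=0$, hence $a=0$, so $(\star)$ holds and $B$ is identified.

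\emph{The boundary cases.} For \textbf{part (a)} ($\bK=1$) only the $i=0$ relation survives and $b$ stays free. Taking the point mass $x_{\bT-2}=u_{\bT-2}=0$ makes $x_{\bT-1}=\noise_{\bT-1}$ and $u_{\bT-1}=Cx_{\bT-1}$, so $(a,b)=(-C^\top b,\,b)$ violates $(\star)$ for any $b\neq 0$; I would exhibit the confounded family $B'=B+vb^\top$, $A'=A-vb^\top C$ (any $v\neq 0$), which keeps $A'+B'C=A+BC$ and hence leaves the law of $(x_{\bT-1},u_{\bT-1},x_\bT)$ unchanged while $B'\neq B$, and choosing $b$ with $b^\top(u'-u)\neq 0$ breaks any prescribed pair. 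For \textbf{part (c)} I would run the same construction in the degenerate regime, where $x_{s-1}=u_{s-1}=0$ and $\noise_t=0$ for $t>s$ make every observation the exact linear image $M_t\noise_s$, $N_t\noise_s$ of the single shock. If full row rank fails, pick $b^*\neq 0$ with $b^{*\top}[DC,\dots,D^{\bK-1}C]=0$, set $a^*=-C^\top b^*$, and perturb $[A',B']=[A,B]+v(a^{*\top},b^{*\top})$ keeping $C,D$ fixed; using $a^{*\top}M_t+b^{*\top}N_t=0$, an induction shows $M'_t=M_t$ and $N'_t=N_t$ for all observed $t$, so the two models induce identical observational laws yet differ in $B$, giving necessity.

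\emph{Main obstacle.} The step I expect to require the most care is the reduction in part (b) from the distributional statement $(\star)$ to the purely algebraic annihilator condition $a^\top M_t+b^\top N_t=0$: this is what prevents arbitrary additional noise in the rollout from interfering, and it rests squarely on the independence of $\noise_s$ from the remaining randomness together with its full support. The rest---unrolling $N_{s+i}$, the induction producing $b^\top D^iC=0$, and the bookkeeping of noise means when recovering slopes from covariances---is routine; I would also verify the degenerate-regime claim in (c) that matching $\{M_t,N_t\}$ suffices to match the full observational law, which holds because there every observable is a deterministic linear function of $\noise_s$.
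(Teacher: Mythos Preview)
Your proposal is correct and arrives at the same algebraic core as the paper---the annihilator condition $b^\top[DC,\dots,D^{\bK-1}C]=0$---but by a somewhat more concrete route. The paper packages the regressors into a single block matrix $Q_X$ built from the rollout, invokes a full-span lemma (their Lemma~D.2) to strip the shock and reduce distributional equality to a linear system $X=Q_X\hat\Theta_x$, and then reaches the rank condition by elementary row operations on $Q_X$. You instead track explicit propagation maps $M_t,N_t$, use independence plus full span of $\noise_s$ to kill its coefficient, and obtain $b^\top D^iC=0$ by a short induction; for part~(b) you frame identifiability via pooled regressor covariances rather than the paper's direct ``two models, same law'' argument. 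Both framings are equivalent here, and your construction for parts~(a) and~(c)---the rank-one perturbation $[A',B']=[A,B]+v(a^{*\top},b^{*\top})$ with $a^*=-C^\top b^*$ and the inductive check that $M'_t=M_t$, $N'_t=N_t$---is the same mechanism the paper uses (their $\Delta(x)=Wx$). Your presentation is arguably more transparent about \emph{why} the rank of $[DC,\dots,D^{\bK-1}C]$ appears, at the cost of not isolating the reusable lemmas (D.2 and D.3) that the paper states separately.
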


\Cref{thm:identifiabile-five-tuple} fully characterizes the tradeoff between identifiability, length of the observed rollout, and rank conditions on the platform dynamics matrices in the linear setting. 
Summarizing briefly,
one consumption shock is not enough to identify the steerability of consumption from only observations of $R_{\bK=1}$---just like in the general setting---but one consumption shock is enough to identify steerability of consumption from observations of $R_{\bK \geq 2}$ in the linear setting. Moreover, as $\bK$ gets larger, the rank assumptions required become easier to satisfy, allowing for more poorly conditioned dynamical systems to be identifiable. Thus, our linear dynamical system model enables us to take advantage of observing longer sequences of interactions between consumer and platform, ultimately making it easier to identify the steerability of consumption.

\newcommand{\xprv}{x_{-1}^\top }
\newcommand{\uprv}{u_{-1}^\top }

\section{Estimation from finite samples}
\label{sec:estimators}

The previous sections concerned identifiability---whether an auditor can estimate the steerability of consumption with infinite observations. In practice, the auditor will only have access to a finite number of observations. To this end, we propose two finite-sample estimators of the steerability of consumption. We introduce the \twostageregression estimator which leverages the structure of our data generation model and is reminiscent of double machine learning \citep{ChernozhukovChDeDuHaNeRo17}. This estimator can be applied if observations of $R_{\bK=2}$ are available. 
We also outline a non-parametric estimator based on the adjustment formula \Cref{eqn:adjustment-formula} that only requires observations of $R_{\bK=1}$. This estimator is also applicable to the standard causal model in \Cref{fig:general-causal-graph}, though at the cost of being less tailored to the time-aware model we propose. The analysis of the second estimator can be found in Appendix~\ref{sec:adjustment-estimator}.

\label{sec:double-ml}
\renewcommand{\loss}{\ell}

The \twostageregression estimator assumes that the auditor has iid observations of $R_{\bK=2}$. The estimator is always well defined, even when the overlap conditions needed for theoretical guarantees do not hold.
We will analyze this estimator in the linear setting from Section~\ref{sec:linear-model} and without loss of generality, we set $\bT=3$. Our results can be generalized to settings where $f, g, h, r$ are from a non-linear function class (e.g., via Rademacher complexity and VC-dimension arguments), but we focus on the simple linear setting for the sake of clarity. In particular, for the remainder of this section, assume data is generated according to the dynamical system \Cref{eqn:gen_model} with functions $f,g,h,r$ defined in \Cref{eqn:lin_dynamics}. 

We let $x_t\kth , u_t\kth $, $\noise_t\kth $ denote the $k$th observations of $x_t$, $u_t$, and $\noise_t$ respectively. Let $X_t \in \R^{d, n}$, $U_t \in \R^{p, n}$, and $E_t \in \R^{d, n}$ be matrices that comprise the $n$ samples of $x_t$, $u_t$, and $\noise_t$ respectively. 
The \twostageregression estimator is defined as $\what B$, where
\begin{align*}
    \what{C} &\defeq \argmin_{C \in \R^{p, d}} \frac{1}{2n} \lfro{U_1 - C X_1}^2 \\ 
    \what{H} &\defeq \argmin_{H \in \R^{d, d}} \frac{1}{2n} \lfro{X_2 - H X_1}^2  \\
    \what{B} &\defeq \argmin_{B \in \R^{d, p}} \frac{1}{2n} \lfro{X_3 - \what{H} X_2 - B(U_2 - \what{C} X_2)}^2.
\end{align*}
The intuition behind why this estimator works comes from the following relationship: $x_3 - (A+BC)x_2 = B(u_2 - Cx_2)$. We first estimate $H:=(A+BC)$ and $C$ using $\hat H$ and $\hat C$ respectively. Then, we regress $x_2 - \hat H x_1$ against $u_1 - \hat C x_1$ to get an estimate of $B$. Recall that knowing $B$ is sufficient to estimate the steerability of consumption $\steer_\bT(u,u')$ for any $u,u'$, as $\steer_\bT(u,u') = B(u' - u)$.

 We need the following assumption to be satisfied in order to present our convergence result for this estimator.
 
\begin{assumption}[$\rho$-Bounded System Dynamics]
\label{ass:rho-bound}
The linear dynamical system specified by \eqref{eqn:gen_model} and \eqref{eqn:lin_dynamics} has $\rho$-Bounded System Dynamics if $\opnorm{A+BC} \leq \rho {\sigma_{\min{}} (DC)}$.
\end{assumption}

To understand this assumption, consider the quantity $\frac{\ltwo{x_2 - \noise_2} }{ \ltwo{u_2}}$: this is the ratio between the magnitude of the state and platform action after one time step of evolution, ignoring noise and assuming the system starts from equilibrium $x_{0} = u_{0} = 0$. Because $\frac{\ltwo{x_2 - \noise_2} }{ \ltwo{u_2}} \leq \frac{\opnorm{A+BC} \ltwo{x_1}}{\sigma_{\min{}} (DC)\ltwo{x_1}}$, having Bounded System Dynamics ensures that the magnitude of state and platform actions are of the same scale. 
We will use the notation $\kappa_A $ to denote the condition number of a matrix $A$ and $\scov_1$  to denote the sample covariance of $\noise_1$, defined as
\[\kappa_A \defeq \frac{\sigma_{\max}(A) }{ \sigma_{\min}(A)}\quad \text{and} \quad \scov_1 \defeq \frac{1}{n}\sum_{k=1}^n \noise_1\kth (\noise_1\kth)^\top. \]

We now provide a convergence result for the \twostageregression estimator of $B$ in \Cref{thm:doubleml-linear-conv-rate}; the proof can be found in \Cref{proof:thm:doubleml-linear-conv-rate}. For simplicity, we let $\noise_3 = 0$; our analysis can be  extended to handle settings where $\noise_3 \neq 0$.

\begin{theorem}\label{thm:doubleml-linear-conv-rate}
Consider the dynamical system in \eqref{eqn:gen_model} with $x_{0} = u_{0} = 0$ and $\noise_3 = 0$, with functions $f, g, h, r$ defined in \eqref{eqn:lin_dynamics}, and with full-row-rank platform action over the span of $\bK=2$ steps.
Let the auditor observe $n$ iid samples of $R_{\bK=2}$.
Let $\E \ltwo{\noise_2}^2 = \sigma_2^2 d$, and \Cref{ass:rho-bound} hold. Let $\mc{G}$ denote the event where $X_1 X_1^\top $ is invertable. If $\E\left[\kappa_{\scov_1}^2\lambda_{\min{}} (\scov_1)^{-1}\right] \leq \tau_1$, then 
\begin{align*}
    \frac{1}{pd}\E \left[\lfro{\what{B} - B}^2 \mid \mc{G}\right] &\leq \frac{\sigma_2^2 \rho^2 \kappa_{DC}^2 \tau_1}{n}.
\end{align*}
In the case where $p=d$, if $\opnorm{\E\left[\scov_1\inv\right]} \leq \tau_2$ we have that 
\begin{align*}
    \frac{1}{d^2}\E \left[\lfro{\what{B} - B}^2\mid \mc{G}\right] &\leq \frac{\sigma_2^2 \rho^2 \tau_2}{n}.
\end{align*}
\end{theorem}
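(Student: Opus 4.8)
The plan is to exploit the degeneracies built into the hypotheses---$x_0=u_0=0$ and $\noise_3=0$---to collapse the entire error $\what{B}-B$ onto the estimation error of the single nuisance matrix $\what{H}$, and then to control that error by conditioning on the first-stage noise.

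First I would write the rollout in closed form. Because $x_0=u_0=0$ we have $X_1=E_1$ and $u_1=Cx_1$ \emph{exactly} (no noise enters the action update at $t=1$), so on $\mc{G}$ the first regression recovers $\what{C}=C$ exactly. Setting $H\defeq A+BC$, the state recursion gives $X_2=HX_1+E_2$, hence $\what{H}-H=E_2X_1^\top(X_1X_1^\top)\inv$. Two facts then make the rest clean: the second-stage regressor is free of $E_2$, since on the samples $W\defeq U_2-\what{C}X_2=DC\,X_1$ (because $u_2-Cx_2=Du_1=DCx_1$); and $\noise_3=0$ makes the response exact, $X_3-HX_2=B\,DC\,X_1=BW$.

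Next I would substitute these into the definition of $\what{B}$. The $BW$ component is fit perfectly, so the residual is driven entirely by $H-\what{H}$:
\[ \what{B}-B=(H-\what{H})\,X_2W^\top(WW^\top)\inv=(H-\what{H})\,\what{H}\,N,\qquad N\defeq S_1(DC)^\top\bigl(DC\,S_1(DC)^\top\bigr)\inv, \]
where $S_1\defeq X_1X_1^\top$ and I used $X_2W^\top=\what{H}S_1(DC)^\top$. Writing $\what{H}=H+(\what{H}-H)$ splits this into a part linear in $E_2$, $T_1\defeq(\what{H}-H)HN$, and a part quadratic in $E_2$, $T_2\defeq(\what{H}-H)^2N$. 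I would then condition on $E_1$ (equivalently on $S_1$, restricted to $\mc{G}$); there $E_2$ is centered and independent, with $\E[E_2^\top E_2\mid E_1]=\sigma_2^2 d\,I_n$, so $\E[\lfro{T_1}^2\mid E_1]=\sigma_2^2 d\,\mathrm{tr}(S_1\inv HNN^\top H^\top)$. Since $HN$ has rank at most $p$, I bound the trace by $p\,\opnorm{S_1\inv}\opnorm{HN}^2$, use $\opnorm{S_1\inv}=\tfrac1n\lambda_{\min}(\scov_1)\inv$, the $\rho$-bound of \Cref{ass:rho-bound} to get $\opnorm{H}\le\rho\,\sigma_{\min}(DC)$, and the operator estimate $\opnorm{N}\le\kappa_{\scov_1}\kappa_{DC}/\sigma_{\min}(DC)$, so that $\opnorm{HN}\le\rho\,\kappa_{\scov_1}\kappa_{DC}$ and the $\sigma_{\min}(DC)$ cancels. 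Dividing by $pd$ and taking expectation over $E_1$ with $\E[\kappa_{\scov_1}^2\lambda_{\min}(\scov_1)\inv]\le\tau_1$ yields the first bound. For $p=d$, $N$ collapses to $(DC)\inv$ (since $DC$ is then square and invertible), the factors $\kappa_{DC},\kappa_{\scov_1}$ disappear, the trace becomes $\mathrm{tr}(\scov_1\inv H(DC)\inv(DC)^{-\top}H^\top)\le\rho^2\,\mathrm{tr}(\scov_1\inv)$, and $\E\,\mathrm{tr}(\scov_1\inv)=\mathrm{tr}(\E[\scov_1\inv])\le d\,\tau_2$ gives the second bound.

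The hard part is that the estimator reuses the same samples, so the first-stage error $\what{H}-H$ is statistically coupled to the regressors $X_2$ feeding the final stage---precisely the quadratic term $T_2$ (and the cross term $\langle T_1,T_2\rangle$), which would vanish under idealized cross-fitting but does not here. I would dispatch these by the same conditioning: $\what{H}-H=\tfrac1n E_2E_1^\top\scov_1\inv$ has operator norm of order $n^{-1/2}$, so $\opnorm{T_2}=O(n^{-1})$ contributes $O(n^{-2})$ to the mean-squared error, while Cauchy--Schwarz bounds the cross term by $\sqrt{\E\lfro{T_1}^2\,\E\lfro{T_2}^2}=O(n^{-3/2})$; both are lower order than the $O(n^{-1})$ leading term and fit inside the slack already present in the $T_1$ estimate (note that $T_1$ alone did not require the full strength of $\kappa_{DC}^2\kappa_{\scov_1}^2$). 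Throughout, conditioning on $\mc{G}$ is what keeps every inverse well-defined, and the random-design dependence on $\scov_1$ is exactly why the statement is phrased through the moments $\tau_1$ and $\tau_2$ rather than pointwise.
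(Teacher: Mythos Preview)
Your derivation mirrors the paper's exactly: the closed forms $\what C=C$, $\what H=H+E_2X_1^\top(X_1X_1^\top)^{-1}$, and $\what B-B=-(\what H-H)\,\what H\,N$ are precisely the content of the paper's supporting lemma, and your bound on the leading piece $T_1$ is the paper's trace computation rewritten (the paper passes from the full error to what is in effect $\E\lfro{T_1}^2$ by replacing $X_2$ with $(A+BC)X_1$ in the trace without comment). You are in fact more explicit than the paper in isolating the quadratic remainder $T_2$; just note that making your ``lower order'' argument rigorous would require moment control on $\noise_2$ beyond the second moment stated in the theorem, and that the $\kappa_{DC}^2\kappa_{\scov_1}^2$ factor is already fully used by $T_1$, so the ``slack'' you invoke is not obviously there.
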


We note that rank condition on $DC$ (\Cref{ass:row-rank}) in this result is the same as the rank condition from the identifiability result in the linear setting (\Cref{thm:identifiabile-five-tuple}).
The $\tau_1,\tau_2$ conditions are a bit technical, but they essentially just require $\noise_1$ to be well behaved. 

To illustrate, consider a simple Gaussian noise example.
Suppose $\noise_1$ and $\noise_2$ are drawn iid from $\normal(0, \sigma_1^2 I_d)$ and $p=d$.  We have $\E \ltwo{\noise_2}^2 = \sigma_2^2 d$. For $n\geq d$, $\scov_1$ is almost surely invertible. 
$(X_1X_1^\top  / \sigma_1^2)\inv$ has an inverse Wishart distribution 
and thus, $\E[\scov_1\inv] = \frac{n}{(n-d-1) \sigma_1^2} I_d$ for $n > d+1$. \Cref{thm:doubleml-linear-conv-rate} gives us $\E \lfro{\what{B} - B}^2 \leq \frac{d^2\sigma_2^2 \rho^2}{(n-d-1) \sigma_1^2},$
which scales roughly like the standard linear regression error rate.

\section{Empirical investigations}
\subsection{Case study: price elasticity of demand}\label{sec:experiments}

We apply our model to the task of estimating the price elasticity of demand (PED) from time series data. Estimating the PED is an example of estimating steerability of consumption 
in the sense that we are interested in how the price (platform action) affects the demand (consumption). 
We use an avocado time series dataset \citep{kaggle-avocado} that consists of biweekly measurements of the prices of avocados and the amount of avocados purchased by region in the US from 2015 to 2018. For a week $t \in [N]$, $u_t$ corresponds to the logged average avocado price,
and $x_t$ corresponds the logged number of avocados purchased. Additional details can be found in \Cref{sec:additional-detail-exp}.
We posit the following:
\begin{align*}
    x_t = \tilde f(z_{t-1}) + g(u_{t-1}).
\end{align*}
where $z_{t-1}$ denotes the set of confounding variable that we adjust for, which we will specify shortly. In this model, the PED is defined as $\nabla g$. This quantity is a curve if the function $g$ is non-linear; however, in this section, we will assume that $g$ is linear, which reduces the problem of estimating the PED into one of estimating a scalar.

\paragraph{Varying the adjustment set to characterize overlap violations.}
Our primary focus in this section is to investigate whether the Markovian assumption on the system dynamics our model posits actually mitigates overlap violations. To do this, we vary the size of the confounding set to measure overlap violations, as well as variance and bias of different estimators.
We look at a sliding window over the data $\{(x_{t-\bK}, \ldots, x_{t}, u_{t-\bK}, \ldots, u_{t-1})\}_{t=\bK}^{N}$. We treat these samples as the iid observations of $R_\bK$ that the auditor observes. We will use $u_{t-1}$ as the treatment variable, $z_{t-1} \defeq (x_{t-\bK}, \ldots, x_{t-1})$ as the confounders, $x_{t}$ as the outcome.
We will vary $\bK$---the size of the confounding set---to explore how the size of the confounding set affects estimation.

\paragraph{Empirical setup.} We will analyze three estimators: adjustment formula estimator, random forest double ML (RF-DML), and linear regression double ML (LR-DML). The adjustment formula estimator relies on computing \eqref{eqn:adjustment-formula} on a discretized platform action and consumption variables. The discretization is important to ensure overlap over confounder and treatment variables, as the adjustment formula estimator is not well defined without overlap. 
In particular, let $\{\mc{Z}_\gamma\}_\gamma, \{\mc{U}_\beta\}_\beta$ denote discretizations of the confounders $z_{t-1}$ and platform action $u_{t-1}$, and let $\beta(u)$ be such that $u \in \mc{U}_{\beta(u)}$. We define the adjustment formula estimator as:
\begin{align*}
    \hat{x}(u) \defeq \sum_{\gamma} \left[ \frac{\sum_t  x_{t+1}\bindic{z_t \in \mc{Z}_\gamma, u_t \in \mc{U}_{\beta(u)} }}{\sum_t \bindic{z_t \in \mc{Z}_\gamma, u_t \in \mc{U}_{\beta(u)} }} \right] \frac{\sum_t \bindic{z_t \in \mc{Z}_\gamma }}{n}.
\end{align*}
Detailed discussion and theoretical guarantees regarding the adjustment formula can be found in \Cref{sec:adjustment-estimator}.
We discretize the logged price into two buckets: $(-0.479, 0.131], (0.131, 0.683]$ and the logged demand into two buckets $(14.539, 15.014], (15.014, 15.837]$. After using the adjustment formula estimator to estimate the effect price has on demand, we then use this estimator to assign predicted demands to all of the prices observed in the dataset. We then use linear regression to estimate the slope of the relationship between predicted demand and price---this is what we refer to as the adjustment formula estimate of the PED. This approach is motivated by methods suggested by \citet{PetersenPoGrWaVa12}. The double machine learning approach \citep{ChernozhukovChDeDuHaNeRo17} first uses half of the training data to residualize the confounders out of the treatment and effect. For LR-DML, the residualizing procedure uses linear regression; for RF-DML, the residualizing procedure uses a random forest model. Then, in the second step, both RF-DML and LR-DML use the other half of the training data to perform a slightly modified version linear regression---discussed in \citet{ChernozhukovChDeDuHaNeRo17}---on the residualized treatment and residualized effect. The slope of this estimated line is the estimated PED.

\begin{table}[t]
\begin{center}
\resizebox{0.7\columnwidth}{!}{
\begin{tabular}{@{}llllll@{}}
\toprule
                     & \thead{Price\\ (Intervention $u$)} & \thead{Estimated Effect\\ on demand} & \thead{Fraction of\\ undefined terms} & \thead{Probability mass\\ of undefined terms} \\ \midrule
\multirow{2}{*}{\bK=1} &   High    & 14.95       &        0 / 2          &    0.0\%    \\
                     &  Low      &  15.11     &        0 / 2         &          0.0\%          \\ \midrule
\multirow{2}{*}{\bK=3} &  High     & 14.96 
                                            &       0 / 8           &     0.0\%\\           
                     &  Low  & 15.11 
                                            &        0 / 8          &       0.0\%\\ 
                                            \midrule
\multirow{2}{*}{\bK=5} &  High     & N/A    
                                            &       5 / 31           &     4.6\%\\              
                     &  Low  & 15.10      
                                            &        0 / 31          &         0.0\%\\ \midrule
\multirow{2}{*}{\bK=7} &  High     & N/A   
                                            &       36 / 89           &     15.1\%\\             
                     &  Low  & N/A     
                                            &        16 / 89          &         9.0\%\\ \midrule 
\multirow{2}{*}{\bK=9} &  High     & N/A    
                                            &       73 / 145           &     25.9\%\\              
                     &  Low  & N/A   
                                            &        40 / 145          &         19.7\%\\ \midrule                                                
\end{tabular}}
\end{center}
\caption{Adjustment formula estimated effects on avocado demand for price interventions. }
\label{fig:causal-eff-ped-table}
\end{table}

\paragraph{Importance of shrinking adjustment set for overlap.}
We report what the adjustment formula estimator estimates for a discretized treatment $u$ in \Cref{fig:causal-eff-ped-table}. A ``Low'' price  in the treatment column corresponds to the logged price bucket $(-0.479, 0.131]$. A ``High'' price corresponds to $(0.131, 0.683]$. 
The ``Fraction of undefined terms'' column corresponds to the number of $\gamma$ values  where $\sum_t \bindic{z_t \in \mc{Z}_\gamma} > 0$ and $\sum_t \bindic{z_t \in \mc{Z}_\gamma, u_t \in \mc{U}_{\beta(u)} } = 0$ over the total number of values of $\gamma$ where $\sum_t \bindic{z_t \in \mc{Z}_\gamma} > 0$. If ``Fraction of undefined terms'' is non-zero, then $\hat x(u)$ is not well defined. $N/A$ denotes when this occurs.
The entries of ``Probability mass of undefined terms'' column is equal to $\sum_{\gamma} \frac{\sum_t \bindic{z_t \in \mc{Z}_\gamma}}{n} \bindic{\sum_t \bindic{z_t \in \mc{Z}_\gamma, u_t \in \mc{U}_{\beta(u)} } = 0 }$. We can see that as $\bK$ gets larger, the number of undefined estimates, the relative fraction of undefined values, and the mass of said values gets larger. This preliminary analysis already suggests that there are overlap issues as $\bK$ gets larger.

\paragraph{Effect of shrinking adjustment set on estimator variance.}
Next, we bootstrap the adjustment formula estimator and two double ML estimators.
We find that the number of confounders heavily affects the 
bootstrapped variance of the PED estimators, suggesting that the Markovian modeling assumption (i.e., setting $\bK=1$) used and by our theory is also useful in practice. 
We report the predicted PED for all of the estimators in \Cref{fig:avocado-PED} (left). For each estimator, we bootstrap the dataset 40 times to form confidence intervals. We report the standard deviation of the bootstrapped estimates in \Cref{fig:avocado-PED} (right). We see that the variance of the adjustment formula estimator increases as the number of confounders increases. 
The RF-DML and LR-DML variance curves are fairly stable with respect to $\bK$, suggesting that our Markovian assumption does not affect the variance of those estimators by much.

\begin{figure}[t]
    \centering
    \includegraphics[width=0.43\linewidth]{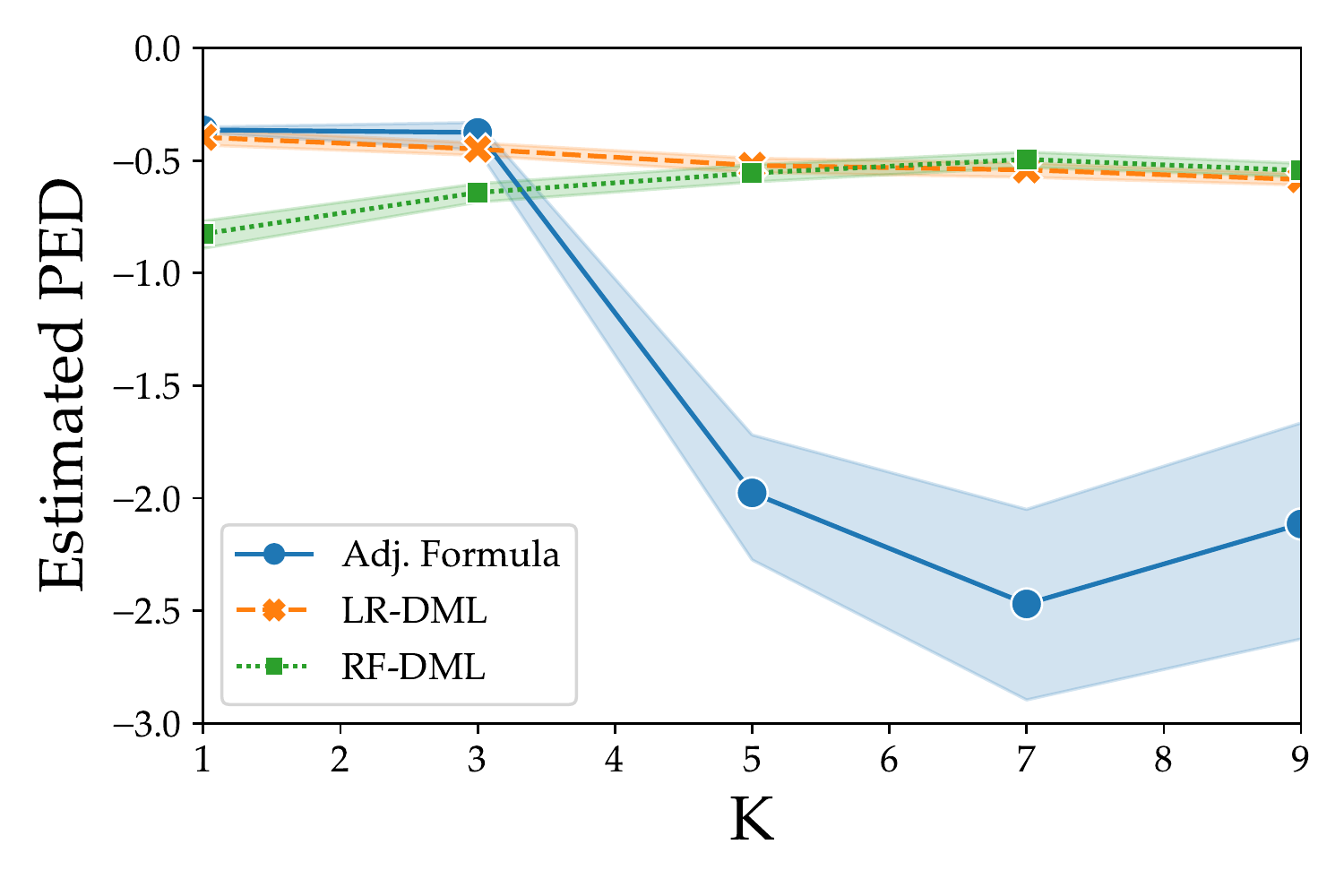}
    \includegraphics[width=0.43\linewidth]{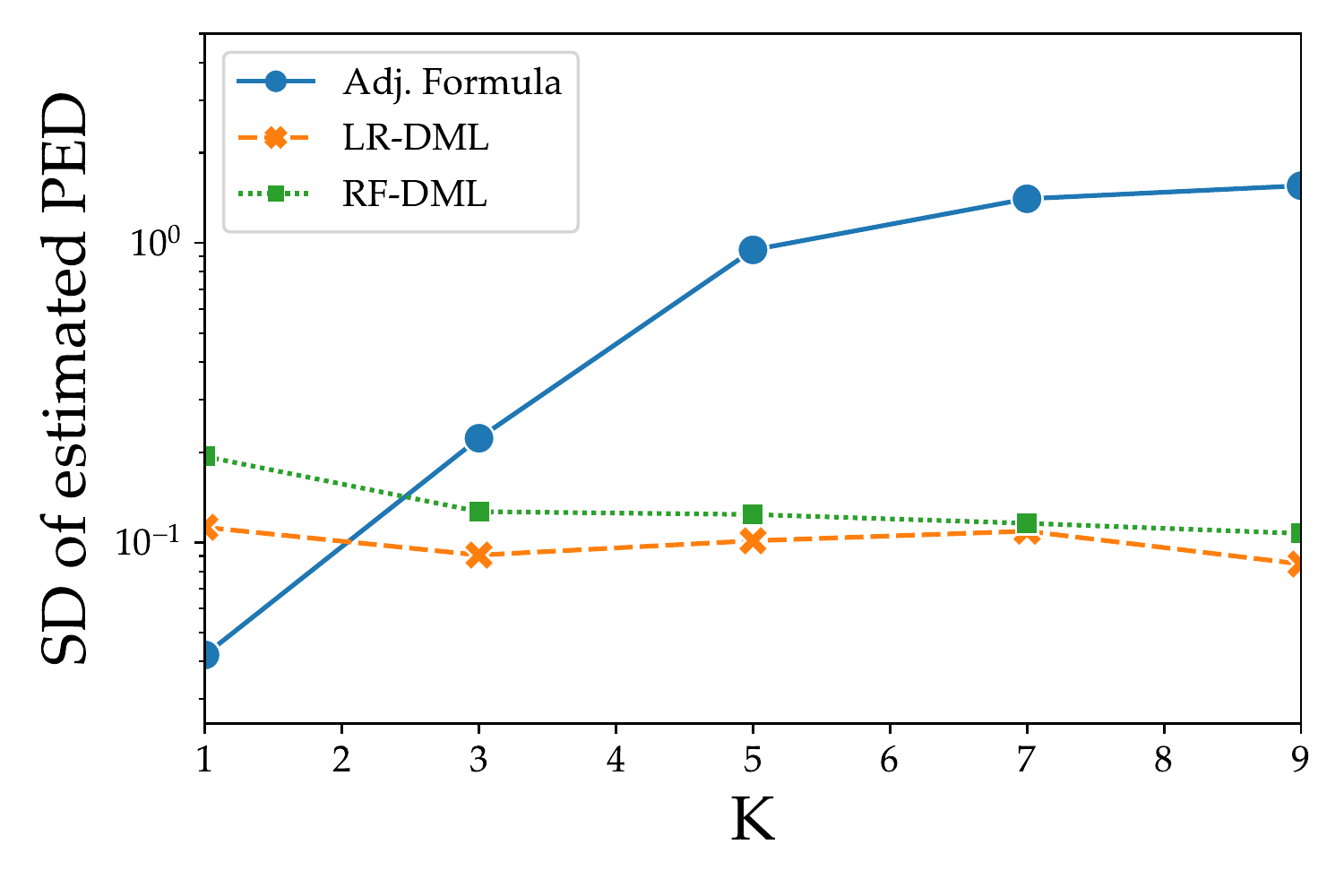}
    \caption{(left) Bootstrapped estimates of PED with 95\% confidence intervals. (right) Standard deviation of each estimator. }
    \label{fig:avocado-PED}
\end{figure}

\paragraph{Effect of shrinking adjustment set on estimator bias.}
Stronger assumptions enable identifiability, but they come at a price of potential modeling errors. We have motivated our Markovian assumption theoretically, and now we want to understand how well they reflect reality.
We use the bootstrapping technique proposed by \citet{PetersenPoGrWaVa12} for testing the bias of our estimators, which we describe now. Let $\Psi$ be the estimator of the PED we are testing, and $\Psi_a$ be the adjustment formula estimator of the PED. Further, let $y^\bK$ denote the avocado dataset for sequences of length $\bK$, and let $Y^\bK$ denote a bootstrapped sample constructed from $y^\bK$. We plot an empirical estimate of
\begin{align}\label{eqn:bootstrap-bias}
    \E[\Psi(Y^\bK)] - \Psi_a(y^\bK)
\end{align}
using 40 bootstrap samples with confidence intervals in \Cref{fig:avocado-PED-bias} (left). We see that the adjustment formula and LR-DML estimators have small bias for small values of $\bK$, and all estimators have larger bias for large values of $\bK$. 
We also plot the estimated bias defined using \eqref{eqn:bootstrap-bias} but with $\Psi_a$ replaced with $\Psi$ instead. We see that the bias still increases as $\bK$ gets larger, suggesting that more confounders also increases the bias of the estimator.

\begin{figure}[t]
    \centering
    \includegraphics[width=0.43\linewidth]{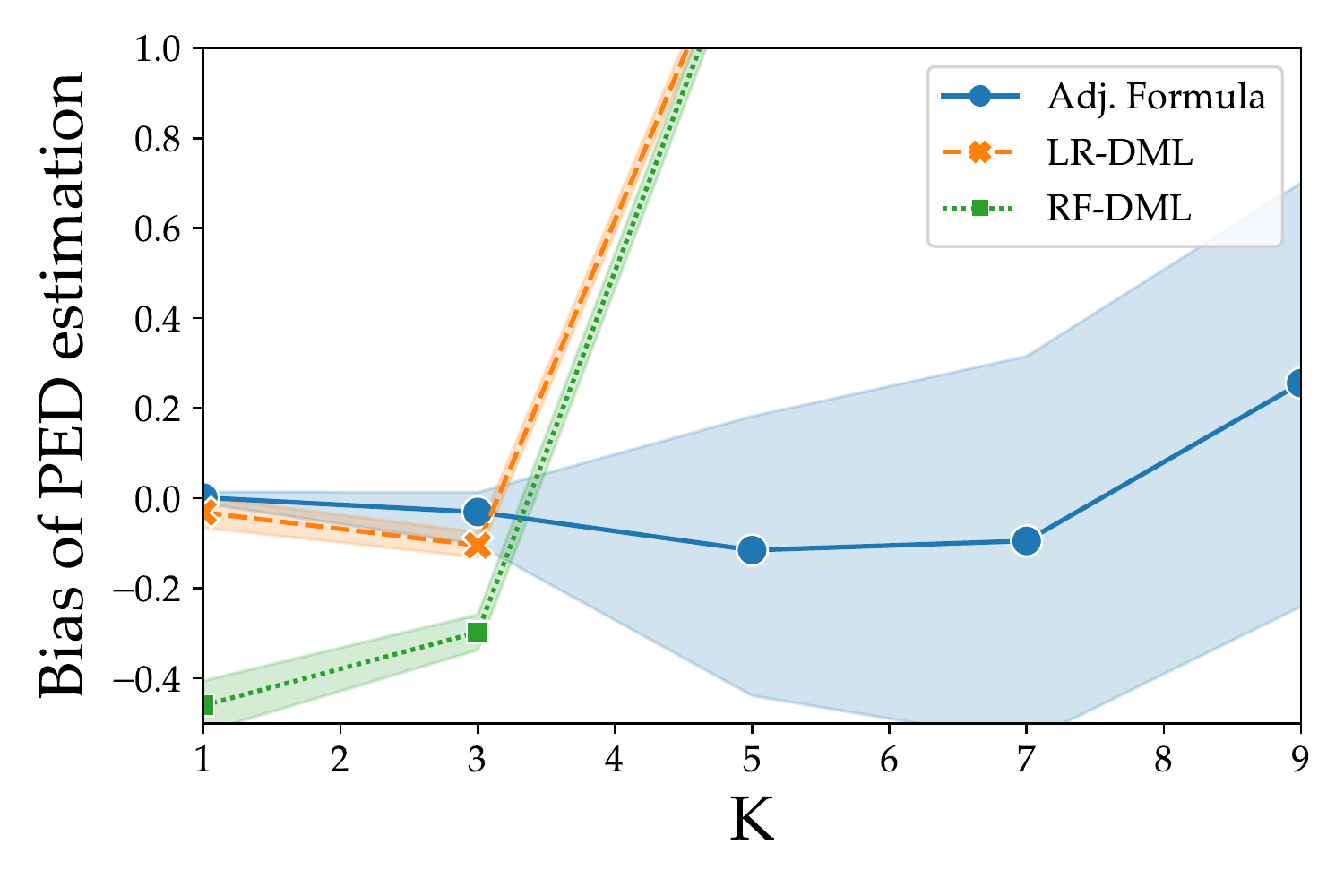}
    \includegraphics[width=0.43\linewidth]{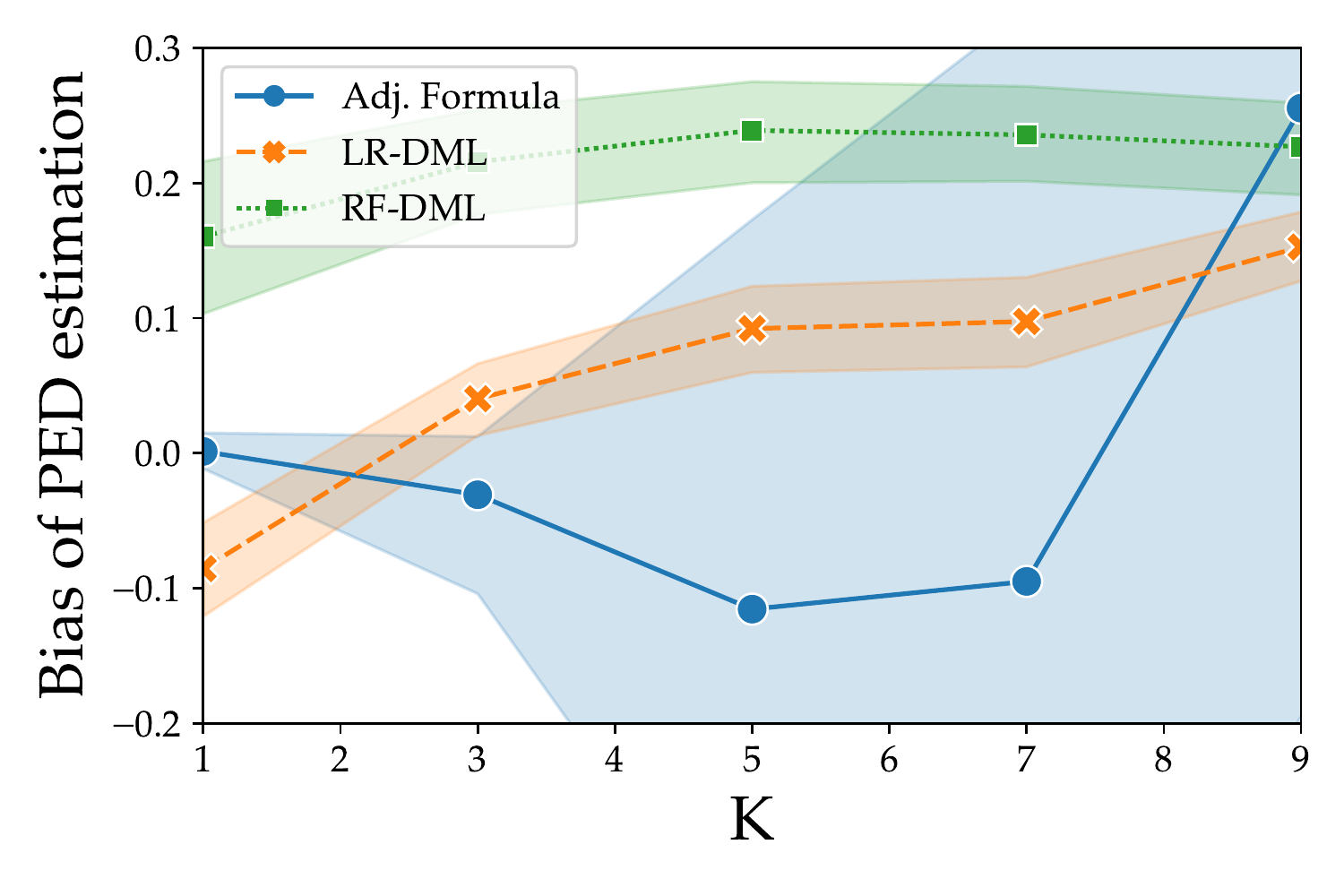}
    \caption{(left) Bias defined in \eqref{eqn:bootstrap-bias}. (right) Bias defined in \eqref{eqn:bootstrap-bias} with $\Psi_a$ replaced with $\Psi$. }
    \label{fig:avocado-PED-bias}
\end{figure}

Our experiments suggest that our Markovian assumption (i.e., $\bK=1$) does mitigate overlap issues while still accurately modeling reality. We believe the increase (with $\bK$) in bias and variance of the estimators is caused by overlap issues; as $\bK$ gets larger, the dimension of the confounders gets larger, making overlap harder to satisfy.

\begin{figure}[t]
    \centering
    \includegraphics[width=0.3\linewidth]{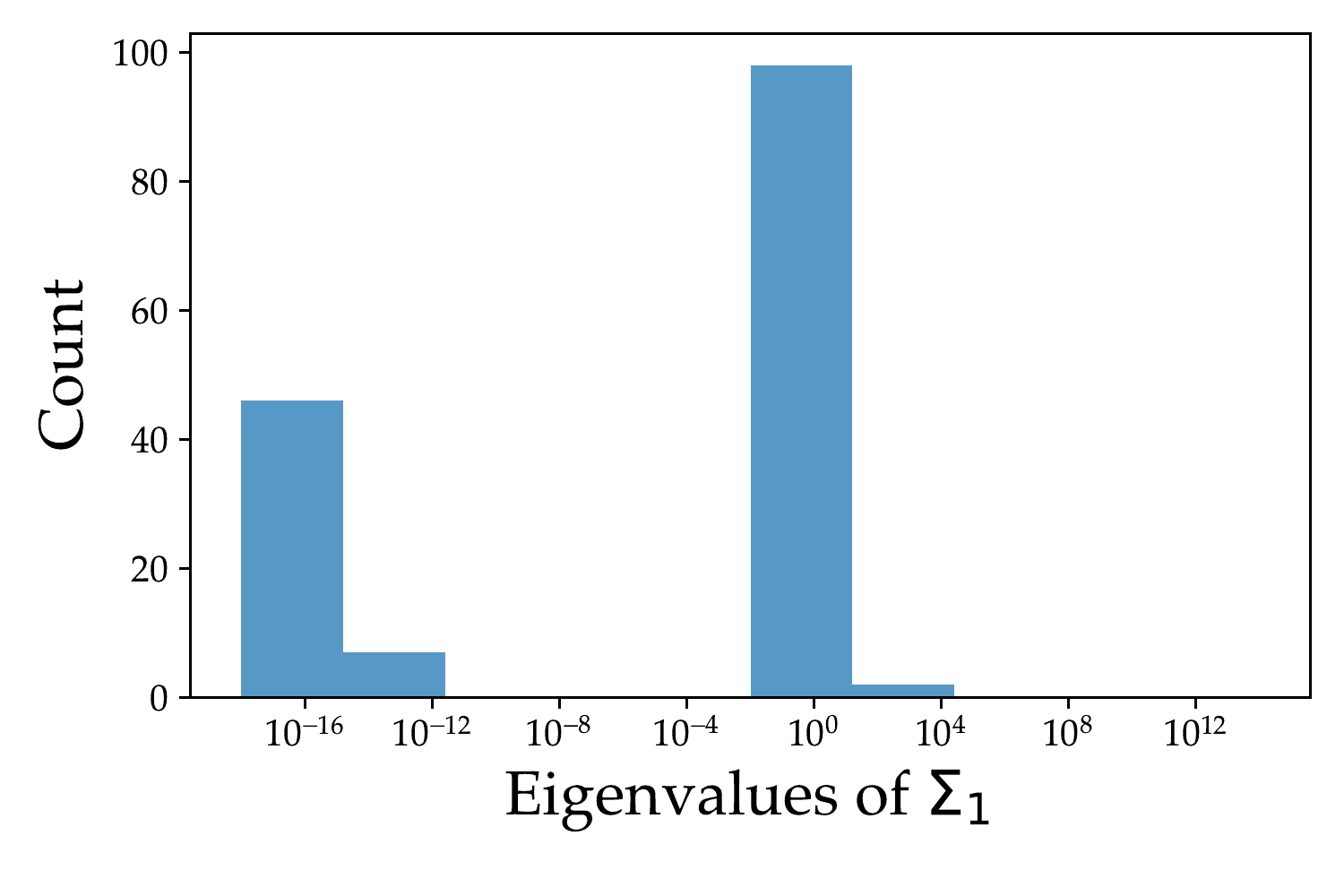}
    \includegraphics[width=0.3\linewidth]{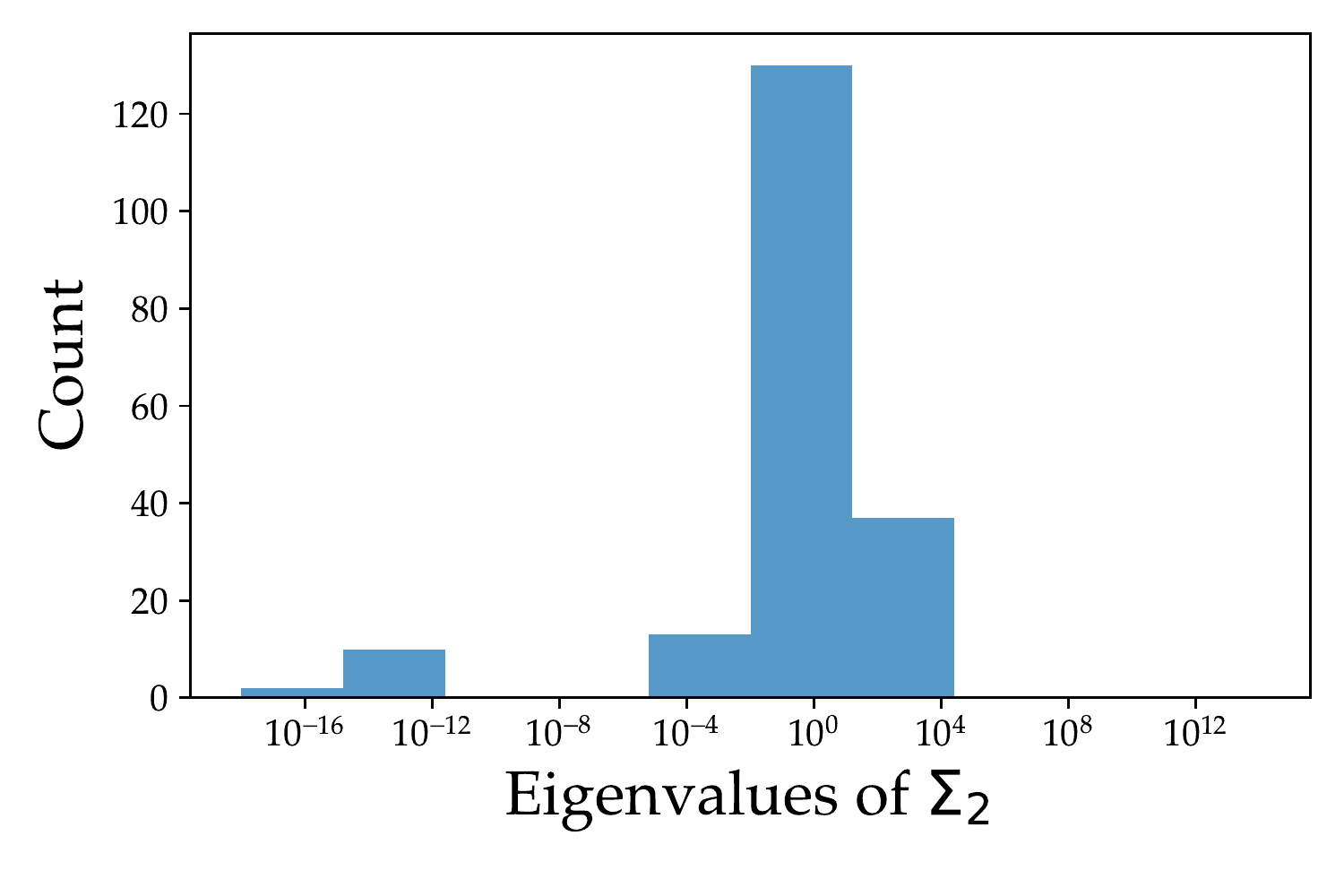}\\
    \includegraphics[width=0.3\linewidth]{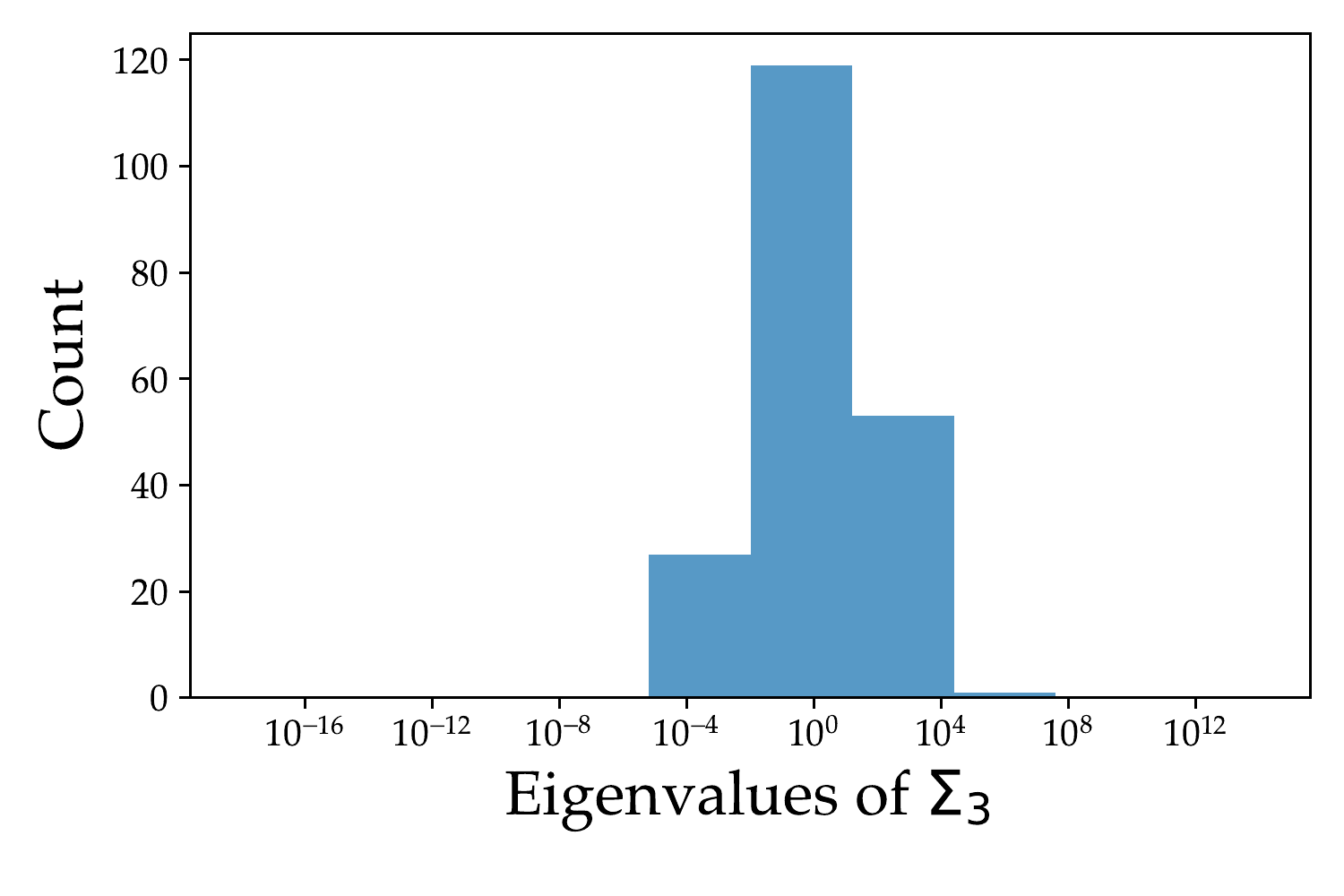}
    \includegraphics[width=0.3\linewidth]{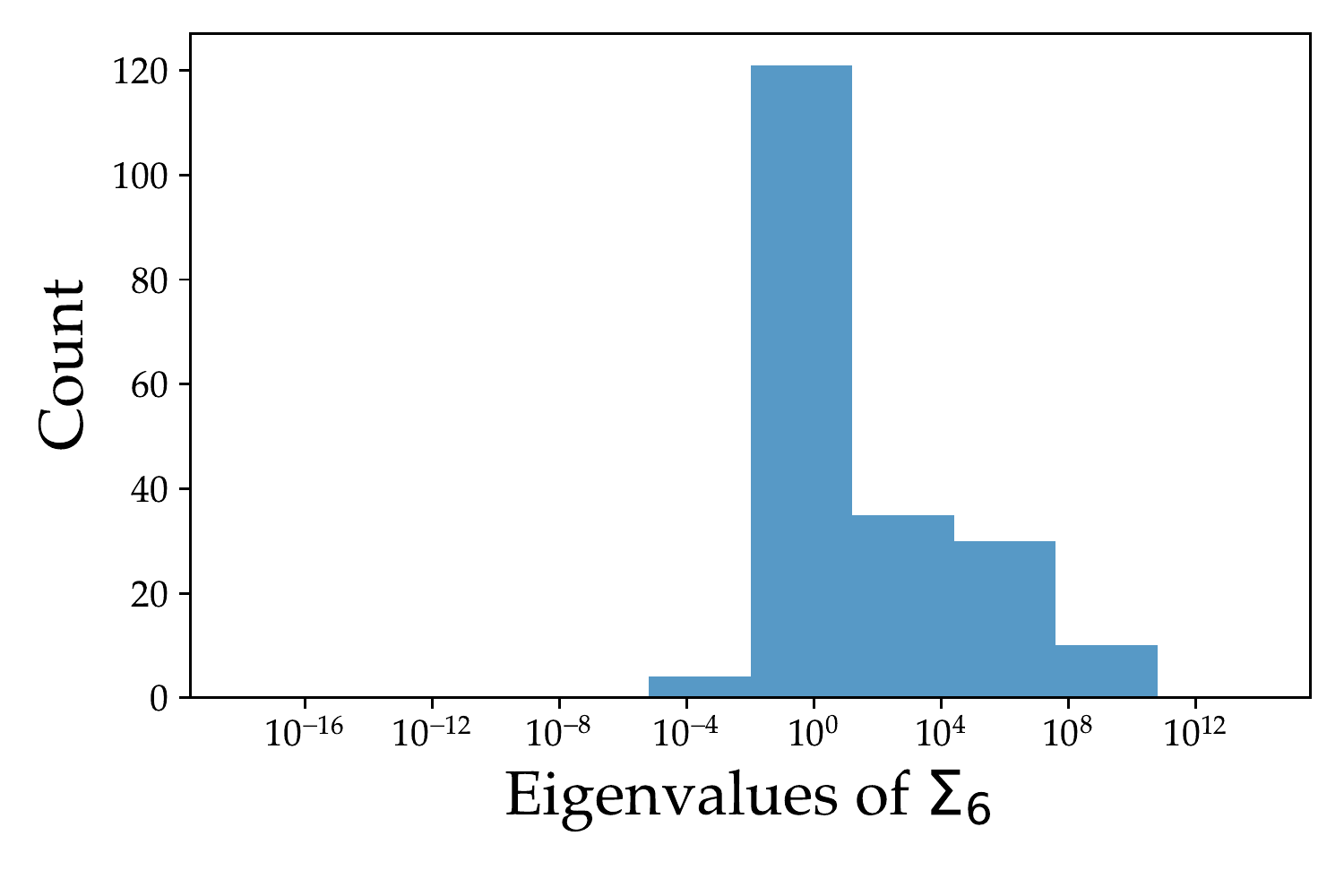}
    \caption{Histograms of eigenvalues of $\Sigma_t$ as defined in \Cref{sec:synthetic_experiments}.}
    \label{fig:synthetic_spectrum}
\end{figure}

\subsection{Synthetic experiments}

\label{sec:synthetic_experiments}
We analyze the dynamical system \eqref{eqn:gen_model} with linear dynamics \eqref{eqn:lin_dynamics} with independent Gaussian noise acting as consumption shocks $\noise$ on the states. We show how the conditioning of the problem evolves over time and how the presence of more consumption shocks in past time steps makes the steerability of consumption easier to estimate.

We let $\noise_t \eqd \normal(0, I)$ for all $t$, starting from $x_{0} = u_{0} = 0$. We consider the symmetric case where $d = p$. To generate $B$, we sample a random matrix $W$ in $\R^{d, n}$ for $n \gg d$ with 
independent standard Gaussians as its entries, and we set $B = W W^\top  / n$. We repeat this process to generated $A$ and $D$. This way of generating our dynamics matrices ensures the matrices are well conditioned. We generate $C$ the same except by instead setting $W \in \R^{d, r}$ for $r < d$, making $C$ rank $r$ instead of rank $d$. We set $d = 100$, $n = 2000$, and $r = 80$. 

For this system, we can explicitly write down how the covariance matrix of $(x_t, u_t)$, denoted $\Sigma_t$, evolves. Namely, from the dymanics
\begin{gather*}
    \begin{bmatrix}
    x_t\\
    u_t
    \end{bmatrix}
    =  J
    \begin{bmatrix}
    x_{t-1}\\
    u_{t-1}
    \end{bmatrix}
    +
    M
    \varepsilon_t \quad \text{where} \\
    \quad
    J \defeq \begin{bmatrix}
    A & B\\
    CA & CB+D
    \end{bmatrix}\qquad
    M \defeq \begin{bmatrix}
    I\\
    C
    \end{bmatrix}.
\end{gather*}
we can deduce that
\begin{align*}
    \Sigma_t = J \Sigma_{t-1} J^\top  + MM^\top  = \sum_{k=0}^{t-1}(J^k)MM^\top  (J^k)^\top .
\end{align*}
We now plot the histogram of the eigenvalues of $\Sigma_t$ for a random system that we generated. The important observable to look out for is whether $\Sigma_t$ is full rank.
Indeed, the steerability of consumption---in this case $B$ because the system is linear---is identifiable from observations of $(x_t, u_t, x_{t+1})$ if and only if $\Sigma_t$ is full rank.
To see why this is true, suppose $\Sigma_t$ is low rank and let $v$ be in the null space of $\Sigma_t$. Letting $G \defeq [A, B]$, $z \defeq [x_t^\top, u_t^\top]^\top$, and $\mathbf{1}$ denote the all one's vector of appropriate dimension, we have that
\begin{align*}
    x_{t+1} = G z = (G + \mathbf{1} v^\top) z -  \mathbf{1} v^\top z \eqd \ (G + \mathbf{1} v^\top) z,
\end{align*}
meaning that $G$ and $G + \mathbf{1} v^\top$ could have both generated the distribution observed. If $\Sigma_t$ is full rank, then linear regression will be able to recover $B$.

We note that in this system, $\rank DC = 80 =r < d$ and $\rank C = 80=r < d$. 
We believe there is an equivalence between the system in this section and the system from \Cref{thm:identifiabile-five-tuple} because of linearity, even though the settings are different---one consumption shock and full observation of each rollout (i.e., observations $R_\bK$) in the theory versus multiple  consumption shocks and one timestep of observation (i.e., observations of $R_{\bK = 1}$) in this section. We are not able to prove this equivalence, but we provide some empirical evidence supporting this conjecture.
\Cref{thm:identifiabile-five-tuple} suggests that observing $(x_2, u_2, x_3)$ is not sufficient for identifiability, as $\rank DC$ is not full row rank. This is consistent with the eigenvalue histogram of $\Sigma_2$ in \Cref{fig:synthetic_spectrum} as there are still $0$ eigenvalues. 
However, since in this system $\rank [DC, D^2C] = 100 =d$,
\Cref{thm:identifiabile-five-tuple} suggests that observing $(x_3, u_3, x_4)$ is sufficient for identifiability. This is also consistent with the eigenvalue histogram of $\Sigma_3$ in \Cref{fig:synthetic_spectrum}, as all eigenvalues are bounded away from $0$ at that time step. Moreover, we see that the eigenvalues of get larger as more time passes: e.g., the eigenvalue mass of $\Sigma_6$ is further to the right of the eigenvalue mass of $\Sigma_3$ in \Cref{fig:synthetic_spectrum}. This suggests that more noise spikes over more time steps make the observations better conditioned, likely making estimating the steerability of consumption easier for the auditor to estimate in practice; e.g., the condition number terms in \Cref{thm:doubleml-linear-conv-rate} will be smaller.

\section*{Acknowledgements}

The authors would like to thank Michael M\"uhlebach for stimulating discussions on the project, and Saminul Haque for helpful technical discussions surrounding \Cref{lem:positive-q-density}. This work was supported by the T\"ubingen AI Center. Gary Cheng acknowledges support from the Professor Michael J. Flynn Stanford Graduate Fellowship.

\bibliography{ more-bib}

\begin{thebibliography}{32}
\providecommand{\natexlab}[1]{#1}
\providecommand{\url}[1]{\texttt{#1}}
\expandafter\ifx\csname urlstyle\endcsname\relax
  \providecommand{\doi}[1]{doi: #1}\else
  \providecommand{\doi}{doi: \begingroup \urlstyle{rm}\Url}\fi

\bibitem[Abbasi-Yadkori and Szepesvari(2011)]{Abbasi-YadkoriSz11}
Yasin Abbasi-Yadkori and Csaba Szepesvari.
\newblock Regret bounds for the adaptive control of linear quadratic systems.
\newblock In \emph{COLT}, 2011.

\bibitem[Adomavicius et~al.(2013)Adomavicius, Bockstedt, Curley, and
  Zhang]{admoavicius13}
Gediminas Adomavicius, Jesse~C. Bockstedt, Shawn~P. Curley, and Jingjing Zhang.
\newblock Do recommender systems manipulate consumer preferences? a study of
  anchoring effects.
\newblock \emph{Information Systems Research}, 24\penalty0 (4):\penalty0
  956--975, 2013.

\bibitem[Barber{\'a} et~al.(2015)Barber{\'a}, Jost, Nagler, Tucker, and
  Bonneau]{Barber2015TweetingFL}
Pablo Barber{\'a}, John~T. Jost, Jonathan Nagler, Joshua~A. Tucker, and Richard
  Bonneau.
\newblock Tweeting from left to right.
\newblock \emph{Psychological Science}, 26:\penalty0 1531 -- 1542, 2015.

\bibitem[Bouabdallah et~al.(2004)Bouabdallah, Noth, and
  Siegwart]{BouabdallahNoSi04}
S.~Bouabdallah, A.~Noth, and R.~Siegwart.
\newblock {PID vs LQ control techniques applied to an indoor micro quadrotor}.
\newblock In \emph{IEEE/RSJ International Conference on Intelligent Robots and
  Systems (IROS)}, volume~3, 2004.

\bibitem[Bouneffouf and Rish(2019)]{Bouneffouf2019ASO}
Djallel Bouneffouf and Irina Rish.
\newblock A survey on practical applications of multi-armed and contextual
  bandits.
\newblock \emph{ArXiv}, abs/1904.10040, 2019.

\bibitem[Brown et~al.(2022)Brown, Bisbee, Lai, Bonneau, Nagler, and
  Tucker]{Brown2022EchoCR}
Megan Brown, James~Hodgdon Bisbee, Angela Lai, Richard Bonneau, Jonathan
  Nagler, and Joshua~A. Tucker.
\newblock Echo chambers, rabbit holes, and algorithmic bias: How youtube
  recommends content to real users.
\newblock \emph{SSRN Electronic Journal}, 2022.

\bibitem[Chaney et~al.(2018)Chaney, Stewart, and Engelhardt]{Chaney2018HowAC}
Allison June-Barlow Chaney, Brandon~M Stewart, and Barbara~E. Engelhardt.
\newblock How algorithmic confounding in recommendation systems increases
  homogeneity and decreases utility.
\newblock \emph{Proceedings of the 12th ACM Conference on Recommender Systems},
  2018.

\bibitem[Chernozhukov et~al.(2017)Chernozhukov, Chetverikov, Demirer, Duflo,
  Hansen, Newey, and Robins]{ChernozhukovChDeDuHaNeRo17}
Victor Chernozhukov, Denis Chetverikov, Mert Demirer, Esther Duflo, Christian
  Hansen, Whitney Newey, and James~M. Robins.
\newblock Double/debiased machine learning for treatment and structural
  parameters.
\newblock \emph{Econometrics: Econometric \& Statistical Methods - Special
  Topics eJournal}, 2017.

\bibitem[D'Amour et~al.(2017)D'Amour, Ding, Feller, Lei, and
  Sekhon]{DAmourDiFeLeSe17}
Alexander D'Amour, Peng Ding, Avi Feller, Lihua Lei, and Jasjeet~S. Sekhon.
\newblock Overlap in observational studies with high-dimensional covariates.
\newblock \emph{Journal of Econometrics}, 2017.

\bibitem[Dean and Morgenstern(2022)]{Dean2022PreferenceDU}
Sarah Dean and Jamie~H. Morgenstern.
\newblock Preference dynamics under personalized recommendations.
\newblock \emph{Proceedings of the 23rd ACM Conference on Economics and
  Computation}, 2022.

\bibitem[Dean et~al.(2019)Dean, Rich, and Recht]{Dean2019RecommendationsAU}
Sarah Dean, Sarah Rich, and Benjamin Recht.
\newblock Recommendations and user agency: the reachability of
  collaboratively-filtered information.
\newblock \emph{Proceedings of the Conference on Fairness, Accountability, and
  Transparency}, 2019.

\bibitem[Fleder et~al.(2010)Fleder, Hosanagar, and buja]{fleder10recom}
Daniel Fleder, Kartik Hosanagar, and andreas buja.
\newblock Recommender systems and their effects on consumers: the fragmentation
  debate.
\newblock 06 2010.

\bibitem[Hardt et~al.(2022)Hardt, Jagadeesan, and
  Mendler-D{\"u}nner]{HardtJaMe22}
Moritz Hardt, Meena Jagadeesan, and Celestine Mendler-D{\"u}nner.
\newblock Performative power.
\newblock In \emph{Advances in Neural Information Processing Systems}, 2022.

\bibitem[Jambor et~al.(2012)Jambor, Wang, and Lathia]{JamborWaLa12}
Tamas Jambor, Jun Wang, and Neal Lathia.
\newblock Using control theory for stable and efficient recommender systems.
\newblock \emph{Proceedings of the 21st international conference on World Wide
  Web}, 2012.

\bibitem[Jin et~al.(2019)Jin, Netrapalli, Ge, Kakade, and
  Jordan]{JinNeGeKaJo19}
Chi Jin, Praneeth Netrapalli, Rong Ge, Sham~M. Kakade, and Michael~I. Jordan.
\newblock A short note on concentration inequalities for random vectors with
  subgaussian norm.
\newblock \emph{ArXiv}, abs/1902.03736, 2019.

\bibitem[Kalimeris et~al.(2021)Kalimeris, Bhagat, Kalyanaraman, and
  Weinsberg]{Kalimeris2021PreferenceAI}
Dimitris Kalimeris, Smriti Bhagat, Shankar Kalyanaraman, and Udi Weinsberg.
\newblock Preference amplification in recommender systems.
\newblock \emph{Proceedings of the 27th ACM SIGKDD Conference on Knowledge
  Discovery \& Data Mining}, 2021.

\bibitem[Kiggins(2018)]{kaggle-avocado}
Justin Kiggins.
\newblock Avocado prices.
\newblock \url{https://www.kaggle.com/datasets/neuromusic/avocado-prices},
  2018.

\bibitem[Kramer et~al.(2014)Kramer, Guillory, and Hancock]{KramerGuHa14}
Adam D.~I. Kramer, Jamie Guillory, and Jeffrey~T. Hancock.
\newblock Experimental evidence of massive-scale emotional contagion through
  social networks.
\newblock \emph{Proceedings of the National Academy of Sciences of the United
  States of America}, 2014.

\bibitem[Krauth et~al.(2022)Krauth, Wang, and Jordan]{Krauth2022BreakingFL}
Karl Krauth, Yixin Wang, and M.I. Jordan.
\newblock Breaking feedback loops in recommender systems with causal inference.
\newblock \emph{ArXiv}, abs/2207.01616, 2022.

\bibitem[Langford and Zhang(2007)]{Langford2007TheEA}
John Langford and Tong Zhang.
\newblock The epoch-greedy algorithm for contextual multi-armed bandits.
\newblock In \emph{Advances in Neural Information Processing Systems}, 2007.

\bibitem[Ljung(2010)]{Ljung10}
Lennart Ljung.
\newblock Perspectives on system identification.
\newblock \emph{Annual Reviews in Control}, 34\penalty0 (1):\penalty0 1--12,
  2010.

\bibitem[Lowther()]{59115}
George Lowther.
\newblock Is the image of a null set under a differentiable map always null?
\newblock Mathematics Stack Exchange.
\newblock URL \url{https://math.stackexchange.com/q/59115}.
\newblock URL:https://math.stackexchange.com/q/59115 (version: 2011-08-25).

\bibitem[Mendler-D{\"u}nner et~al.(2022)Mendler-D{\"u}nner, Ding, and
  Wang]{mendler22causal}
Celestine Mendler-D{\"u}nner, Frances Ding, and Yixin Wang.
\newblock Anticipating performativity by predicting from predictions.
\newblock In \emph{Advances in Neural Information Processing Systems}, 2022.

\bibitem[Pearl(2009)]{pearl09book}
Judea Pearl.
\newblock \emph{Causality: Models, Reasoning and Inference}.
\newblock Cambridge University Press, USA, 2nd edition, 2009.
\newblock ISBN 052189560X.

\bibitem[Perdomo et~al.(2020)Perdomo, Zrnic, Mendler-D{\"u}nner, and
  Hardt]{PerdomoZrMeHa20}
Juan Perdomo, Tijana Zrnic, Celestine Mendler-D{\"u}nner, and Moritz Hardt.
\newblock Performative prediction.
\newblock In \emph{Proceedings of the 37th International Conference on Machine
  Learning}, 2020.

\bibitem[Petersen et~al.(2012)Petersen, Porter, Gruber, Wang, and van~der
  Laan]{PetersenPoGrWaVa12}
Maya~L. Petersen, Kristin~E. Porter, Susan Gruber, Yue Wang, and Mark~J.
  van~der Laan.
\newblock Diagnosing and responding to violations in the positivity assumption.
\newblock \emph{Statistical Methods in Medical Research}, 2012.

\bibitem[PNAS(2014)]{editorialcomment14}
PNAS.
\newblock Editorial expression of concern: Experimental evidence of
  massivescale emotional contagion through social networks.
\newblock \emph{Proceedings of the National Academy of Sciences}, 2014.

\bibitem[Santos()]{3216190}
Augusto Santos.
\newblock If $f\in \mathcal{C}^1$ and $\{\nabla f=0\}$ has lebesgue measure
  $0$, then $\{f\in b\}$ has lebesgue measure $0$ for all borel measurable $b
  \subset \mathbb{R}$ with lebesgue measure $0$.
\newblock Mathematics Stack Exchange.
\newblock URL \url{https://math.stackexchange.com/q/3216190}.
\newblock URL:https://math.stackexchange.com/q/3216190 (version: 2019-07-05).

\bibitem[Shah et~al.(2022)Shah, Dwivedi, Shah, and Wornell]{shah22steer}
Abhin Shah, Raaz Dwivedi, Devavrat Shah, and Gregory~W. Wornell.
\newblock On counterfactual inference with unobserved confounding, 2022.
\newblock URL \url{https://arxiv.org/abs/2211.08209}.

\bibitem[Shankar et~al.(2022)Shankar, Garcia, Hellerstein, and
  Parameswaran]{Shankar2022OperationalizingML}
Shreya Shankar, Rolando Garcia, Joseph~M. Hellerstein, and Aditya~G.
  Parameswaran.
\newblock Operationalizing machine learning: An interview study.
\newblock \emph{ArXiv}, abs/2209.09125, 2022.

\bibitem[Shmueli and Tafti(2020)]{shmueli20}
Galit Shmueli and Ali Tafti.
\newblock "{I}mproving" prediction of human behavior using behavior
  modification.
\newblock \emph{Arxiv:2008.12138}, 2020.

\bibitem[Thai et~al.(2016)Thai, Laurent-Brouty, and Bayen]{thai16traffic}
Jérôme Thai, Nicolas Laurent-Brouty, and Alexandre~M. Bayen.
\newblock Negative externalities of gps-enabled routing applications: A game
  theoretical approach.
\newblock In \emph{IEEE International Conference on Intelligent Transportation
  Systems (ITSC)}, 2016.

\end{thebibliography}
\bibliographystyle{plainnat}

\newpage
\appendix

\section{Relaxing \Cref{ass:independent_noise}} 
\label{sec:independence-discussion}
In context of \Cref{lem:finite-discrete-full-coverage}, we can replace \Cref{ass:independent_noise} with the following weaker assumption
\begin{assumption}
Let $\noise_\bT$ be such that $\E[\noise_\bT] = \E[\noise_\bT \mid u_1 = u, x_1 =z]$
for all $u \in \R^p$ and $z \in \R^d$.
\end{assumption}
This ``no-correlation'' type assumption is required for showing admissibility (\Cref{lem:admissibility}), and it only needs to apply to the exogenous variation affecting the time step $\bT$ we are interested in estimating $\steer_\bT$.
Having said that, \Cref{ass:independent_noise} is necessary for \Cref{thm:identifiabile-five-tuple}. Mutual independence is crucial for our proof technique.

\section{Additional Experimental Details in support of \Cref{sec:experiments}}
\label{sec:additional-detail-exp}

The avocado time series dataset is comprised of several time series spanning different regions of the United States. To construct the dataset we are operating on, we combine data from two regions---Southeast and Great Lakes---chosen by pricing and demand similarity.

For the bootstrapping experiments shown in \Cref{fig:avocado-PED,fig:avocado-PED-bias}, we have modified the adjustment formula estimator to silently fail when overlap does not hold. In particular, unlike our results in \Cref{fig:causal-eff-ped-table}, for terms in the adjustment formula $\hat{x}$ (defined in \Cref{sec:experiments}) where $\sum_t \bindic{z_t \in \mc{Z}_\gamma} > 0$ and $\sum_t \bindic{z_t \in \mc{Z}_\gamma, u_t \in \mc{U}_{\beta(u)} } = 0$, we set $ \frac{\sum_t  x_{t+1}\bindic{z_t \in \mc{Z}_\gamma, u_t \in \mc{U}_{\beta(u)}}}{\sum_t \bindic{z_t \in \mc{Z}_\gamma, u_t \in \mc{U}_{\beta(u)}}}$ equal to $0$.
This modification could cause the adjustment formula estimator to underestimate the PED for large $\bK$, potentially causing the bias to spike for larger $\bK$ for LR-DML and RF-DML. This is why we also plot the estimated bias defined using \eqref{eqn:bootstrap-bias} but with $\Psi_a$ replaced with $\Psi$ instead in \Cref{fig:avocado-PED-bias}. We see that the bias still increases as $\bK$ gets larger, suggesting that more counfounders does in fact increases the bias of the estimator.

\section{Adjustment formula estimator}
\label{sec:adjustment-estimator}
\newcommand{\familyx}{\mc{N}}
\newcommand{\familyu}{\mc{M}}
\newcommand{\setx}{\mc{X}}
\newcommand{\setu}{\mc{U}}
\newcommand{\idxx}{\alpha}
\newcommand{\idxu}{\beta}
\newcommand{\subidxx}{_{\idxx(x)}}
\newcommand{\subidxu}{_{\idxu(u)}}
\newcommand{\ds}{\mc{D}^n}
\newcommand{\constone}{c_1}
\renewcommand{\xdom}{\cup \familyx}
\renewcommand{\udom}{\cup \familyu}
\newcommand{\Zest}[1]{\what P(x_1 \in #1)}

Admissibility of the dynamical system we are studying (\Cref{lem:admissibility}) makes estimating the adjustment formula (\Cref{def:adjustment-formula}) sufficient for estimating the steerability of consumption.
Since $x$ and $u$ can take on continuous values we start with discretizations of $\R^d$ and $\R^p$ denoted as finite collections of bounded, non-intersecting sets $\familyx \defeq \{\setx_{\idxx}\}$ and $\familyu \defeq \{\setu_{\idxu}\}$ indexed by $\idxx$ and $\idxu$ respectively.
Suppose that every element of $\familyx$ and $\familyu$ has diameter at most $\varepsilon / 2$ and has Lebesgue measure greater than 0. For a point $x\in \cup \familyx$, define $\idxx(x)$ such that $x \in \setx\subidxx$. Define $\idxu(u)$ respectively. 
We will assume we have $n$ samples of the form $\ds=\{(x_1\kth, u_1\kth, x_2\kth)\}_{k=1}^n$,
where every sample is drawn iid from \eqref{eqn:gen_model}. With these quantities, we form estimates of the components of the adjustment formula; here without loss of generality, we set $\bT = 2$.
\begin{align*}
    \what\E[x_2 \mid u_1 \in \setu, x_1 \in \setx] &\defeq \frac{\sum_{k \in [n]} x_2\kth \bindic{u_1\kth \in \setu , x_1\kth \in \setx}}{\sum_{k \in [n]} \bindic{u_1\kth \in \setu , x_1\kth \in \setx}}\\
    \what P(x_1 \in \setx) &\defeq \frac{1}{n} \sum_{k=1}^n \bindic{x_1\kth \in \setx}.
\end{align*}
After combining, we have an estimate of the steerability of consumption:
\begin{align*}
    \hat{x}_2(u)
    \defeq \sum_{\idxx}  \what\E[x_2 \mid u_1 \in \setu\subidxu, x_1 \in \setx_\idxx]  \what P(x_1 \in \setx_\idxx).
\end{align*}

We will need some mild assumptions to prove a guarantee on the estimator. Our first assumption controls how much previous user state and platform actions affect future state actions. The magnitude of the effect must be bounded in proportion to the inputs. 

\begin{assumption}\label{ass:lipschitz-conditional-ev}
The relationship between $x_2$ and $x_1,u_1$ is $L$-Lipschitz continuous in the sense that for any $w,w'\in\R^d$ and $u,u'\in\R^p$, and with $v \defeq [u^\top , w^\top ]^\top $, it holds that
\begin{align*}
    \norm{\E[x_2 | u_1 = u, x_1 = w] - \E[x_2 | u_1 = u', x_1 = w']} 
    \leq L \norm{v - v'}.
\end{align*}
\end{assumption}

We also need to control how far the discretized conditional expectation 
$\E[x_2 \mid u_1 \in \setu, x_1 \in \setx]$
deviates from $\E[x_2|u_1 =u, x_1=x]$. To do this, we impose a regularity condition on the conditional distribution.

\begin{assumption}\label{ass:conditional-density-control}
Let $w, w'\in \R^d$ and $u, u' \in \R^p$, and with $v \defeq [u^\top , w^\top ]^\top $ be such that $\norm{v - v'} \leq \varepsilon$.
Then, for any $x\in\mathcal \cup \familyx$, the following condition on the density $p$ holds for some $\eta(\varepsilon) \in (0, 1)$ such that $\lim_{\varepsilon \to 0} \eta(\varepsilon) = 0$:
\begin{align*}
    1-\eta(\varepsilon) \leq \frac{p(u_1 = u, x_1 =w | x_2 = x)}{p(u_1 = u', x_1 =w' | x_2 = x)} \leq 1+\eta(\varepsilon).
\end{align*}
\end{assumption}

This assumption ensures that the conditional distribution is ``stable'' in any $\varepsilon$-neighborhood. Finally, we need one more assumption which guarantees we obtain enough samples for every slice of data. \Cref{ass:enough_samples} is defined with respect to the variables: cover granularity $\varepsilon >0$, error tolerances $\delta \in (0, 1)$ and $\gamma >0$, and failure probability tolerance $\rho \in (0, 1)$.
\begin{assumption}\label{ass:enough_samples}
    Let $n_{\setu, \setx} \defeq \sum_{k \in [n]} \bindic{u_1\kth \in\setu, x_1\kth \in \setx}$. Let $n_{\setu, \setx} \geq \frac{2d\sigma^2}{\gamma^2} \log(4 |\familyx| / \rho)$ for all $\setx \in \familyx$ and $\setu \in \familyu$. Further let $n \geq \max_{\setx \in \familyx} \frac{1}{2 \delta^2 P(x_1 \in \setx)^2}\log(4 |\familyx| / \rho)$.
\end{assumption}

 We present our convergence result now in \Cref{thm:adjustment-estimator-nonasymptotic}.

\begin{theorem}\label{thm:adjustment-estimator-nonasymptotic}
Consider the dynamical system in \eqref{eqn:gen_model} with any arbitrary $P_{-1}$. Let the auditor observe $n$ iid samples of $(x_1, u_1, x_2)$. 
Suppose $x_2$ is $\sigma^2$-subgaussian conditioned on $u_1$ and $x_1$.
Let $\E[\noise_2|x_1= x, u_1= w] =0$, $\E[\norm{\noise_2} \mid x_1 =x, u_1=w] \leq \constone$ for all $x \in \R^d$ and $w \in \R^p$. Let $f$ and $g$ be continuous functions, and define $\diam$ such that $\sup_{x \in \xdom, w \in \udom }\max\left( \norm{f(x)}, \norm{g(w)}\right)  \leq \diam$.
Let the conditions of \Cref{lem:finite-discrete-full-coverage} hold, \Cref{ass:lipschitz-conditional-ev} hold with $L$, \Cref{ass:conditional-density-control} hold with $\eta$, and \Cref{ass:enough_samples} hold.
For any specified $u \in \cup \familyu$ with probability at least $1 - \rho$, the following holds
\begin{align*}
    \norm{\hat{x}_2(u)  - \E[x_2 | do(u_1 \defeq u)]} 
    & \leq \delta\gamma + 2\delta \constzero \diam + \gamma  + \frac{2\eta(\varepsilon)}{1 - \eta(\varepsilon)} \left(
    2 \constzero \diam 
    + \constone \right)\\
    &\quad + L\varepsilon + \constzero \E[\norm{f(x_1)} \bindic{x_1 \not\in \cup \familyx}] + \constzero(1 - P_{x_1}(\cup \familyx)) \diam.
\end{align*}
\end{theorem}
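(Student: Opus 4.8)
The plan is to bound the error by decomposing it into several interpretable pieces: the sampling error in estimating the conditional expectations, the sampling error in estimating the marginal probabilities $P(x_1 \in \setx_\idxx)$, the discretization bias from replacing pointwise conditional expectations with their averaged (binned) counterparts, and a tail/truncation error from the fact that the discretization $\familyx$ covers only a bounded region of $\R^d$. Since admissibility holds by \Cref{lem:admissibility}, the population adjustment formula equals the target $\E[x_2 \mid \mathrm{do}(u_1 \defeq u)]$, so I would write $\hat{x}_2(u) - \E[x_2 \mid \mathrm{do}(u_1 \defeq u)]$ and insert the population adjustment formula as an intermediate anchor, then split into the above components via repeated triangle inequalities.

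First I would handle the \emph{sampling error of the conditional expectations}. Using the subgaussian assumption on $x_2$ and the per-cell sample-count lower bound from \Cref{ass:enough_samples}, a Hoeffding/subgaussian concentration bound gives that $\what\E[x_2 \mid u_1 \in \setu, x_1 \in \setx]$ is within $\gamma$ of its population analogue with high probability per cell; a union bound over the $|\familyx|$ cells (whose failure budget is folded into $\rho$) controls all cells simultaneously. Second, I would handle the \emph{marginal probability estimates} $\what P(x_1 \in \setx_\idxx)$: these are empirical means of indicators, so a second Hoeffding bound together with the sample-count requirement $n \geq \max_{\setx} \frac{1}{2\delta^2 P(x_1 \in \setx)^2}\log(4|\familyx|/\rho)$ gives multiplicative closeness to within factor $\delta$, again union-bounded. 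Combining these two, the product $\what\E \cdot \what P$ is controlled against its population product, and expanding the product of two approximations produces the leading terms $\delta\gamma + 2\delta \constzero \diam + \gamma$ after bounding the magnitude of the population conditional expectation by $\constzero \diam + \constone$-type quantities (using the continuity of $f, g$, the diameter bound $\diam$, and the noise moment bound $\E[\norm{\noise_2} \mid \cdot] \leq \constone$).

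Third, I would bound the \emph{discretization bias}. Here \Cref{ass:lipschitz-conditional-ev} gives that pointwise conditional expectations move by at most $L\varepsilon$ across a cell of diameter $\varepsilon/2$, yielding the $L\varepsilon$ term. The subtler piece is that the binned conditional expectation $\E[x_2 \mid u_1 \in \setu, x_1 \in \setx]$ is a \emph{reweighting} of the pointwise expectations according to the conditional density, and the population adjustment integral weights by the marginal of $x_1$; \Cref{ass:conditional-density-control} controls the ratio of conditional densities within an $\varepsilon$-neighborhood to lie in $[1-\eta(\varepsilon), 1+\eta(\varepsilon)]$, which converts this reweighting mismatch into the factor $\frac{2\eta(\varepsilon)}{1-\eta(\varepsilon)}(2\constzero \diam + \constone)$. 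Finally, the \emph{truncation error} arises because $\cup\familyx$ is bounded: the mass outside, $1 - P_{x_1}(\cup\familyx)$, and the contribution $\E[\norm{f(x_1)}\bindic{x_1 \notin \cup\familyx}]$ are each multiplied by $\constzero$ (and $\diam$ where appropriate) to give the last two terms.

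The main obstacle I anticipate is the third step, specifically the careful accounting of the density-ratio reweighting in the discretization bias. Unlike the concentration terms, which follow from standard subgaussian/Hoeffding machinery, this term requires relating the binned conditional expectation (an integral weighted by the conditional law of $x_1$ given $x_2$) to the target integral (weighted by the marginal of $x_1$), and making the $\eta(\varepsilon)$ stability assumption do exactly the right work so that the numerator and denominator corrections combine into the $\frac{2\eta}{1-\eta}$ form. Keeping the magnitude bounds $2\constzero\diam + \constone$ consistent across both the numerator reweighting and the outer multiplications—without double-counting the noise contribution $\constone$ or the deterministic $f,g$ contribution $\constzero\diam$—is where the bookkeeping is most delicate, and I would handle it by first establishing a clean uniform bound $\norm{\E[x_2 \mid u_1=u, x_1=w]} \leq 2\constzero\diam + \constone$ and reusing it throughout.
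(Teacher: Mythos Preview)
Your proposal is correct and mirrors the paper's proof almost exactly: the same four-term triangle-inequality decomposition (sampling error in $\what P$, sampling error in the binned conditional expectation, density-ratio discretization bias, and Lipschitz plus truncation error), the same concentration tools (subgaussian/Hoeffding with union bound over $|\familyx|$ cells), and the same identification of the density-ratio step as the delicate piece. The only cosmetic slip is that the uniform bound you want to reuse is on $\E[\norm{x_2}\mid u_1=u,x_1=w]$ (which picks up the $\constone$ from the noise moment), not on $\norm{\E[x_2\mid u_1=u,x_1=w]}$; the paper handles this via a short lemma (\Cref{lem:y-control}) that bounds the binned-versus-pointwise conditional expectation by $\frac{2\eta}{1-\eta}\E[\norm{x_2}\mid Z=z]$.
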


The proof of \Cref{thm:adjustment-estimator-nonasymptotic} can be found in \Cref{proof:thm:adjustment-estimator-nonasymptotic}.
Let us go through all the terms in the bound, to verify that they can all be made arbitrarily small (with sufficient samples). $\delta$ and $\gamma$ can be made smaller, so long as the auditor receives proportionally enough samples. The auditor can create a finer discretization to make $\varepsilon$ smaller and therefore $\eta$ smaller as well. If we assume that $\E[\norm{f(x_1)}]\leq \infty$, then the last two terms tend to zero as the auditor's approximation of $\R^d$---i.e., $\cup \familyx$---covers more of the space.

\renewcommand{\xdom}{\cup \familyx}
\renewcommand{\udom}{\cup \familyu}

\newpage
\section{Proofs}
\subsection{Auxiliary results}
\label{app:poof3}
\begin{lemma}[Multivariate change of variables]\label{lem:change-of-variables}
Let $X$ be a random variable with density $p_X$ and let $Y = g(X)$ where $g$ is an invertible mapping with Jacobian $J_g$, then $p_Y(a) = p_X(g\inv(a)) |J_g(a)|\inv$.
\end{lemma}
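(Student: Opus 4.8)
The plan is to derive the density of $Y$ directly from the pushforward of probability under $g$, using the standard multivariate substitution rule for integration together with the inverse function theorem. Throughout I would treat $g$ as a diffeomorphism on the relevant domain, so that $J_g$ is nonsingular and $g\inv$ is itself continuously differentiable; these are precisely the regularity conditions that make both tools applicable.

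First I would fix an arbitrary Borel set $A$ and compute the probability that $Y$ lands in $A$. Since $g$ is invertible, the event $\{g(X) \in A\}$ coincides with $\{X \in g\inv(A)\}$, so that
\[
    P(Y \in A) = P(X \in g\inv(A)) = \int_{g\inv(A)} p_X(x)\,dx.
\]
Next I would apply the change-of-variables formula for integrals with the substitution $x = g\inv(a)$, whose Jacobian is $J_{g\inv}$, converting the integral over $g\inv(A)$ into one over $A$:
\[
    \int_{g\inv(A)} p_X(x)\,dx = \int_A p_X(g\inv(a))\,\lvert J_{g\inv}(a)\rvert\,da.
\]

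Because this identity holds for every Borel set $A$, the integrand must be a version of the density of $Y$, namely $p_Y(a) = p_X(g\inv(a))\,\lvert J_{g\inv}(a)\rvert$. The final step is to rewrite the Jacobian factor: by the inverse function theorem, $J_{g\inv}(a) = [J_g(g\inv(a))]\inv$, and hence $\lvert J_{g\inv}(a)\rvert = \lvert J_g(g\inv(a))\rvert\inv$, which yields the stated formula (under the convention that $J_g$ is evaluated at the preimage $g\inv(a)$).

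There is no substantive obstacle here; the result is classical and the entire argument is two applications of standard theorems. The only points that require care are the regularity hypotheses—continuous differentiability of $g$ and nonvanishing of $\det J_g$—that justify invoking the substitution rule and the inverse function theorem, and the mild notational imprecision in the statement, where the argument of the Jacobian should be understood as $g\inv(a)$ rather than $a$ itself.
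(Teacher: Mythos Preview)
Your proposal is correct and follows essentially the same route as the paper: compute $P(Y\in A)=P(X\in g\inv(A))$, apply the integral change-of-variables formula to convert the integral over $g\inv(A)$ into one over $A$, and then read off the density. The only minor addition is that you spell out the inverse-function-theorem step and flag the notational ambiguity in where the Jacobian is evaluated, but the argument is otherwise identical.
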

\begin{proof}
\begin{align*}
    P(Y \in A) &= P(X \in g\inv(A)) = \int_{g\inv(A)} p_X(x) dx = \int_{A} p_X(g\inv(x)) |J_{g\inv}(x)|dx \\
    &=  \int_{A} p_X(g\inv(x)) |J_{g}(x)|\inv dx.
\end{align*}
The definition of density gives the result.
\end{proof}

\begin{definition}[Lusin's (N) condition]\label{def:lucin}
A function $f: \R^d \to \R^p$ satisfies Lusin's (N) condition if for every Lebesgue-measure 
$0$ set $A \subset \R^d$, $f(A)$ has Lebesgue-measure $0$.
\end{definition}

\begin{definition}[Non-singular measurable transformation]\label{def:nonsingular}
A function $f: \R^d \to \R^p$ is a non-singular measurable transformation if for every Lebesgue-measure $0$ set $A \subset \R^p$, the preimage of $A$, $f\inv(A)$ has Lebesgue-measure $0$.
\end{definition}

\begin{lemma}\label{lem:positive-q-density}
    For a measurable function $h: \R^d \to \R^p$, let $h\inv$ denote the preimage. 
    Let $h$ be a non-singular measurable transformation which satisfies Lusin's (N) condition.
    Let $X$ be a $\R^d$-valued random variable with measure $P_X$ and density $p_X$, and let $Y \defeq h(X)$ be a $\R^p$-valued random variable. Then the following is true:
    \begin{enumerate}
        \item $P_Y$ has a density $p_Y$ with respect to the Lebesgue measure.
        \item if $p_X(a) > 0$ for almost all $a \in \R^d$ with respect to the Lebesgue measure, then $p_Y(b) > 0$ for almost all $b\in \R^p$ with respect to the Lebesgue measure.
    \end{enumerate}
\end{lemma}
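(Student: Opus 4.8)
The plan is to prove the two claims by exploiting the two regularity hypotheses in opposite directions: non-singularity of $h$ to get a density (part 1), and Lusin's (N) condition to propagate positivity (part 2).

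For part 1, I would show that the pushforward $P_Y$ is absolutely continuous with respect to the Lebesgue measure $\lambda_p$ on $\R^p$ and then invoke the Radon--Nikodym theorem (legitimate since $\lambda_p$ is $\sigma$-finite and $P_Y$ is a finite measure). Concretely, fix a Borel set $B \subseteq \R^p$ with $\lambda_p(B) = 0$. Since $Y = h(X)$, the pushforward identity gives $P_Y(B) = P(X \in h^{-1}(B)) = P_X(h^{-1}(B))$. Non-singularity of $h$ (\Cref{def:nonsingular}) converts $\lambda_p(B) = 0$ into $\lambda_d(h^{-1}(B)) = 0$, and because $X$ has a density we have $P_X \ll \lambda_d$, so $P_X(h^{-1}(B)) = 0$. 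Hence $P_Y(B) = 0$, establishing $P_Y \ll \lambda_p$ and producing the density $p_Y$.

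For part 2, I would study the zero set $Z \defeq \{b \in \R^p : p_Y(b) = 0\}$ and aim to show $\lambda_p(Z) = 0$. By definition of the density, $P_Y(Z) = \int_Z p_Y \, d\lambda_p = 0$, so the pushforward identity gives $\int_{h^{-1}(Z)} p_X \, d\lambda_d = P_X(h^{-1}(Z)) = P_Y(Z) = 0$. Since $p_X > 0$ $\lambda_d$-almost everywhere, a set on which the integral of $p_X$ vanishes must itself be $\lambda_d$-null, so $\lambda_d(h^{-1}(Z)) = 0$. Now Lusin's (N) condition (\Cref{def:lucin}) applies to the null set $h^{-1}(Z)$ and yields $\lambda_p\big(h(h^{-1}(Z))\big) = 0$. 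Since $h(h^{-1}(Z)) = Z \cap \mathrm{range}(h)$, this gives $\lambda_p\big(Z \cap \mathrm{range}(h)\big) = 0$.

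The main obstacle is precisely this last step, where care is needed: Lusin's (N) condition only controls $Z \cap \mathrm{range}(h)$, and the positivity conclusion can genuinely fail on $Z \setminus \mathrm{range}(h)$ because $Y$ never lands outside $\mathrm{range}(h)$, forcing $p_Y$ to vanish there (e.g.\ an injective $C^1$ map onto a proper open subset). Thus the claim ``$p_Y > 0$ almost everywhere'' is correct exactly when $\mathrm{range}(h)$ is co-null in $\R^p$, which holds in the paper's applications where $h$ is surjective (the footnote applies it to $h(x) = a^\top x$ with $a \neq 0$, and $q_c$ is surjective in \Cref{ass:expressive-control}). Under surjectivity, $\mathrm{range}(h) = \R^p$, so $\lambda_p(Z) = 0$ follows directly and the proof closes. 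The two hypotheses play symmetric but opposite roles: non-singularity pulls $\lambda_p$-null sets back to $\lambda_d$-null sets (part 1), while Lusin's (N) pushes $\lambda_d$-null sets forward to $\lambda_p$-null sets (part 2).
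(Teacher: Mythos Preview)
Your proposal is correct and follows essentially the same approach as the paper: both parts use the same chain of absolute-continuity implications (non-singularity for part~1, Lusin's~(N) for part~2), though the paper packages part~2 via mutual absolute continuity of $P_Y$ and $\lambda$ rather than directly analyzing the zero set of $p_Y$. Your explicit discussion of the surjectivity requirement is a genuine refinement---the paper's proof silently writes $\lambda(h^{-1}(B)) = 0 \Rightarrow \lambda(B) = 0$, which, as you correctly observe, only follows from Lusin's~(N) once $h(h^{-1}(B)) = B$, and this needs surjectivity (satisfied in all of the paper's applications).
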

\begin{proof}
    Recall that a $\sigma$-finite measure $\nu$ has a density with respect to $\sigma$-finite measure $\mu$ if and only if $\nu$ is absolutely continuous with respect to $\mu$ (denoted as $\nu \ll \mu$).

    We prove the first point first. We will show that the measure of $Y$, $P_Y$, is absolutely continuous with respect to the Lebesgue measure $\lambda$.  Let $A \subset \R^p$ be such that $\lambda(A) = 0$, then
    \begin{align*}
        \lambda(A) = 0 \implies \lambda(h\inv(A))= 0 \implies P_X(h\inv(A)) =0 \implies P_Y(A) =0.
    \end{align*}
    The first implication is because $h$ is a non-singular measurable transformation. The second implication is because $P_x \ll \lambda$ as $X$ has a density with respect to $\lambda$. 

    To prove the second point, we first show that $p_X(a) > 0$ for all $a \in \R^d$ implies $P_X \gg \lambda$. To see this, observe that for any $A$, $\lambda(A) = \int_A \frac{1}{p_X(y)} p_X(y) dy \lambda =\int_A \frac{1}{p_X(y)} P_X(dy)$. With this we show that $P_Y \gg \lambda$. Let $A \subset \R^p$ be such that $P_Y(A) = 0$, then
    \begin{align*}
        P_Y(A) = 0 \implies P_X(h\inv(B)) = 0 \implies \lambda (h\inv(B)) = 0 \implies \lambda(B) =0.
    \end{align*}
    The second implication is because $P_X \gg \lambda$ and the third implication is because $h$ satisfies Lucin's condition. We prove that $p_Y > 0$ almost everywhere by contradiction. Because $P_Y$ and $\lambda$ are mutually absolutely continuous, there exists $q$ such that $\lambda = q P_Y$. Then because $P_Y = p_Y \lambda$, $\lambda = q p_Y \lambda$. Thus, $q p_Y$ must equal 1 almost everywhere with resepct to the Lebesgue measure, $p_Y$ must be non-zero almost everywhere.
\end{proof}
\subsection{Proof of \Cref{lem:admissibility}}\label{proof:lem:admissibility}
Without loss of generality we consider $T=2$. Recall that the do action alters the data generation model by deleting incoming edges into $u_1$. 
\begin{align*}
    \E[x_2\hid | do(u_1\hid \defeq u\hid)]&=\E[\Hx{f(x_{1}\hid )}{ g(u\hid)} + \noise_2]\\
    &=\int \E[\Hx{f(z)}{g(u\hid)} + \noise_2 \mid x_1 = z] p_{x_1}(z) dz\\
    &= \int \E[\Hx{f(x_1)}{g(u_1\hid)} + \noise_2 \mid u_1\hid = u\hid, x_1 = z] p_{x_1}(z) dz\\
    &= \int \E[x_2 \mid u_1\hid = u\hid, x_1 = z] p_{x_1}(z) dz.
\end{align*}
The second and third equalities use the fact that $\noise_2$ is independent of $x_1, u_1$.

\subsection{Proof of \Cref{lem:finite-discrete-full-coverage}} 
\label{proof:lem:finite-discrete-full-coverage}

Without loss of generality we will set $\bT = 2$ in this proof.

\subsubsection{Part 1: Identifiability}
\paragraph{Showing overlap}
We will first show that $(x_1, u_1)$ has full support, which automatically implies overlap.
Let $\umo \defeq (u_{-1}, x_{-1})$.
Because $$p_{u_1, x_1}(u, x) = \int p_{u_1, x_1 | \umo = z}(u, x) p_{\umo}(z)dz,$$ it suffices to show that $p_{u_1, x_1 | \umo = z}$ has full support for any $z \in \R^{p + d}$. For this reason, in this proof, we fix $\umo$---i.e., $u_{-1}$ and $x_{-1}$ will be treated like constants---and for notional simplicity, we omit explicitly conditioning on the event $\umo=z$. Let $c \defeq r(u_{-1})$, $d\defeq g(u_{-1}) + f(x_{-1})$, and $\noise_0' \defeq \noise_0 + d$. Observe that $(\noise_0', \noise_1)$ still has full support. Using this modified notation, we have
\begin{align*}
    x_0 &= \noise_0'\\
    u_0 &= \Hu{h(\noise_0')}{ c}\\
    x_1 &= \Hx{f(\noise_0')}{ g(\Hu{h(\noise_0')}{ c)}} + \noise_1\\
    u_1 &= \Hu{h(x_1)}{ r(\Hu{h(\noise_0')}{ c})}
\end{align*}
We first show that $x_1$ has full support. 
Recall $\noise_1$ has positive density over $\R^d$. 
Because addition by a constant is an invertible, differentiable function,
\Cref{lem:change-of-variables} implies that $\Hx{f(\noise_0')}{ g(\Hu{h(\noise_0')}{ c})} + \noise_1 | \noise_0'$ has positive density over $\R^d$. Since $\noise_0'$ also has positive density over $\R^d$, integration tells us that $x_1 = \Hx{f(\noise_0')}{ g(\Hu{h(\noise_0')}{ c})} + \noise_1$ has positive density over $\R^d$. 

Because $p_{x_1\hid, u_1\hid} = p_{u_1\hid|x_1\hid} p_{x_1\hid}$ and $x_1\hid$ has full support, it suffices to show that $u_1\hid | x_1\hid$ has full support over $\R^p \times \R^d$. 
It is sufficient to show that $p_{q_c(\noise_0') | x_1\hid}$ is positive everywhere. 
To see this, 
observe that $u_1\hid|x_1\hid = \Hu{h(x_1)}{ q_c(\noise_0') } | x_1\hid$. Because addition by a constant is an invertible, differentiable function, if $q_c(\noise_0') | x_1\hid$ had positive density everywhere, then  
\Cref{lem:change-of-variables} tells us that $u_1\hid | x_1\hid$ would have positive density everywhere.
One can show that the class of continuously differentiable, surjective functions with either full row-rank or full column rank Jacobian satisfy \Cref{def:lucin,def:nonsingular} \citep{3216190, 59115}.
Thus, because $q_c$ satisfies \Cref{ass:expressive-control}, the conditions of \Cref{lem:positive-q-density} hold, and thus, it suffices to show $\noise_0' | x_1\hid$ has positive density everywhere. We observe that
\begin{align*}
    p_{\noise_0' | x_1\hid}(a, b) = \frac{p_{x_1\hid | \noise_0'}(b, a) p_{ \noise_0'}(a) }{p_{x_1\hid}(b)}.
\end{align*}
Since $x_1$ has full support, the denominator is positive. Since $\noise_0'$ has full support, $p_{ \noise_0'}(a) > 0$ as well. Finally, we had already shown earlier in the proof that $x_1| \noise_0'$ (i.e., $\Hx{f(\noise_0')}{ g(\Hu{h(\noise_0')}{ c})} + \noise_1 | \noise_0'$) has positive density everywhere as well.

\paragraph{Concluding argument}
Because $p_{u_1, x_1}$ is positive everywhere, $\E[x_2 \mid u_1 = u, x_1=z]$ is well defined. Additionally, because $x_1$ has density, $\int_{z} \E[x_2 \mid u_1 = u, x_1=z] p_{x_1}(z) dz$ is well defined as well. Finally because our model is admissible as stated in \Cref{lem:admissibility}, $\E[x_2 \mid do(u_1 \defeq u)] = \int_{z} \E[x_2 \mid u_1 = u, x_1=z] p_{x_1}(z) dz$. The right hand side of this relationship is well defined and can be computed from knowledge of the distribution of $(x_1, u_1, x_2)$;
thus, $\E[x_2 \mid do(u_1 \defeq u)]$ can be computed from the distribution of observations $(x_1, u_1, x_2)$. Because this quantity identifiable, the steerability of consumption $\steer(u, u')$ is also identifiable for any $u, u' \in \R^d$.

\subsubsection{Part 2: Unidentifiability}
\label{sec:proof-thm1-unidentifiability}
Let $P_0$ be the point mass over the $0$ vector; i.e., $x_0 = u_0 =0$. Define a measurable function $\Delta: \R^p \to \R^d$ such that $\Delta \neq 0$. For any functions $f, g, h$, define, $\hat{f}(a)\defeq  f(a) + \Delta(h(a))$, $\hat g(b) \defeq g(b) - \Delta(b)$, and $\hat{h}(c) = h(c)$. For noise variables $(\noise_1, \noise_2)$, let $(\hat\noise_1, \hat\noise_2)$ an identically distributed copy.
Let $R_2 = (x_1, u_1, x_2)$ be sampled 
according to the dynamics specified by \eqref{eqn:gen_model} using the functions $f, g, h$, noise variables $(\noise_1, \noise_2)$, and with initial conditions $x_0 =u_0 =0$. Let $\hat{R}_2 = (\hat x_1, \hat u_1, \hat x_2)$ be sampled 
according to the dynamics specified by \eqref{eqn:gen_model} using the functions $\hat f, \hat g, \hat h$ in place of $f, g, h$, noise variables $(\hat\noise_1, \hat\noise_2)$ in place of $(\noise_1, \noise_2)$, and with initial conditions $\hat{x}_0 = \hat{u}_0 =0$. We see that
\begin{align*}
    x_1 &\eqd \noise_1 \eqd \hat x_1\\
    u_1 &\eqd h(\noise_1) \eqd \hat h(\noise_1)\eqd \hat u_1\\
    x_2 &\eqd f(\noise_1) + g(h(\noise_1)) + \noise_2\\
    & \eqd f(\noise_1) +\Delta(h(\noise_1)) + g(h(\noise_1)) - \Delta(h(\noise_1)) + \noise_2\\
    & \eqd \hat f(\noise_1) + \hat g(\hat h(\noise_1)) + \noise_2 \eqd \hat x_2.
\end{align*}

\subsection{Proof of \Cref{thm:identifiabile-five-tuple}}\label{proof:thm:identifiabile-five-tuple}
\subsubsection{Supporting lemmas}
We first outline a series of helpful supporting lemmas. This first lemma draws an equivalence between matrices and the probability distributions induced by these matrices, allowing us to reason about one by reasoning about the other.
\begin{lemma}\label{lem:dist_equal_cond}
Let $\{ \noise_i\}_{i=1}^n$ be a set of mutually independent random vectors in $\R^d$ with full span. Let $\{ A_i\}_{i=1}^n$ be a set of deterministic matrices in $\R^{d,p}$. Let $v \in \R^d$ be a random vector in $\R^d$ mutually independent of $\{ \noise_i\}_{i=1}^n$. $A_i = 0$ for all $i \in [n]$ and $v\eqas0$ if and only if $v + \sum_{i=1}^{n} A_i \noise_i \eqas 0$.
\end{lemma}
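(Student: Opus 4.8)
The plan is to prove the two implications separately, with essentially all the work in the reverse direction. The forward implication is immediate: if $A_i = 0$ for every $i$ and $v \eqas 0$, then $v + \sum_{i=1}^n A_i \noise_i \eqas 0$ by linearity, so I would dispatch it in one line and focus on the converse.

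For the converse, suppose $v + \sum_{i=1}^n A_i \noise_i \eqas 0$. First I would reduce to a statement purely about degeneracy of independent summands. Writing $w_i \defeq A_i \noise_i$, the determinism of the $A_i$ together with mutual independence of the $\noise_i$ makes $w_1,\dots,w_n$ mutually independent, and since $v$ is independent of $\{\noise_i\}$, the whole collection $v, w_1, \dots, w_n$ is mutually independent. The hypothesis then says that a sum of mutually independent random vectors is almost surely the zero (hence constant) vector.

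The one step requiring care is the claim that a sum of mutually independent random vectors which is almost surely constant forces each summand to be almost surely constant. I would establish this through characteristic functions rather than variances, since we are not given that the $\noise_i$ have finite second moments. Writing $\phi_X$ for the characteristic function of a random vector $X$, independence yields $\phi_{v+\sum_i w_i}(t) = \phi_v(t)\prod_{i} \phi_{w_i}(t)$; because $v + \sum_i w_i$ is constant, the left-hand side has modulus $1$ for all $t$, while each factor on the right has modulus at most $1$. Hence $|\phi_v(t)| = |\phi_{w_i}(t)| = 1$ for all $t$ and all $i$, which forces $v \eqas c_0$ and $w_i \eqas c_i$ for deterministic vectors with $c_0 + \sum_i c_i = 0$.

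Finally I would invoke the full-span hypothesis to upgrade ``$A_i \noise_i$ is almost surely constant'' to ``$A_i = 0$''. If some $A_i \neq 0$, it has a nonzero row $a^\top$ with $a \neq 0$, and the corresponding coordinate of $w_i$ equals $a^\top \noise_i$, which by the fully-spanning property (\Cref{ass:noise-span}: $a^\top \noise_i$ is almost surely non-constant for every $a \neq 0$) cannot be almost surely constant, contradicting $w_i \eqas c_i$. Thus $A_i = 0$ for all $i$, so each $c_i = 0$, whence $c_0 = 0$ and $v \eqas 0$, completing the reverse direction. The characteristic-function degeneracy argument is the main obstacle; the remaining steps are routine bookkeeping with independence and the definition of full span.
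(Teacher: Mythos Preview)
Your proof is correct and takes a genuinely different route from the paper. The paper argues by cases on whether $v$ is almost surely constant: when it is, a further split on how many $A_i$ are nonzero is handled by noting that if the sum is almost surely zero then conditioning on all summands except $A_j\noise_j$ forces $A_j\noise_j$ to be a.s.\ determined by them, contradicting independence unless $A_j\noise_j$ was already degenerate; when $v$ is not a.s.\ constant, the same conditioning trick applied to $v$ gives the contradiction. Your characteristic-function argument replaces this case analysis with a single uniform step: the factorization $|\phi_v(t)|\prod_i|\phi_{A_i\noise_i}(t)| = 1$ with each factor bounded by $1$ forces every summand to be degenerate simultaneously. This is cleaner and sidesteps the slightly informal conditional-distribution reasoning in the paper; the cost is that you invoke the standard but nontrivial fact that $|\phi_X|\equiv 1$ characterizes degenerate distributions, whereas the paper stays with bare independence. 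Both approaches finish identically by reading off $A_i=0$ from the full-span hypothesis via a nonzero row.
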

\begin{proof}
The left to right direction is obvious. We now prove the right to left direction by cases.
Suppose $v$ is almost surely a constant vector. Suppose that only one $j\in[n]$ such that $A_j \neq 0$, then its not possible that $A_j\noise_j \eqas -v$ by definition of full span. Suppose there exists $j,k\in[n]$ such that $A_j \neq 0$ and $A_k \neq 0$. This means that $A_j \noise_j$ is almost surely not a constant. We also know that conditioned on $\{ A_i\noise_i\}_{i\neq j}$, $A_j\noise_j$ is almost surely a constant. This implies that $P_{A_j\noise_j} \neq P_{A_j\noise_j| \{ A_i\noise_i\}_{i\neq j}}$ which contradicts the assumption of mutual independence.
Suppose  $v$ is almost surely not a constant vector. Then $P_{v} \neq P_{v| \{ A_i\noise_i\}_{i\in[n]}}$ as $v$ is almost surely a constant vector conditioned on $\{ A_i\noise_i\}_{i\in[n]}$. This contradicts mutual independence.
\end{proof}

For our next lemma and for the rest of the proof, we need to define some notation. 
Consider the following variables:
\begin{gather*}
    \Theta_x \defeq 
    \begin{bmatrix}
    A^\top  \\
    B^\top 
    \end{bmatrix} \qquad
    \Theta_u \defeq \begin{bmatrix}
    C^\top  \\
    D^\top 
    \end{bmatrix}\qquad
    \noise_x \defeq \begin{bmatrix}
    \noise_1^\top \\
    \vdots\\
    0
    \end{bmatrix}\qquad
    \noise_u \defeq 0.
\end{gather*}
Let $\hat \Theta_x, \hat \Theta_u$ be defined with respect to $\hat A, \hat B, \hat C, \hat D$. Let $\hat \noise_x \eqd \noise_x$ and $\hat \noise_u \eqd \noise_u$. Let $(A, B, C, D)$ and $\noise_x, \noise_u$ induce $\Plin_\bT$ and let $(\hat A, \hat B, \hat C, \hat D)$ and $\hat \noise_x, \hat \noise_u$ induce $\hat \Plin_\bT$.
Let $x \defeq [x_1, \ldots, x_\bT]^\top $ and $u \defeq [u_1, \ldots, u_{\bT-1}]^\top $ be observations from $\Plin_\bT$ and let $\hat x$ and $\hat u$ defined with hat variables be observations from $\hat \Plin_\bT$. Finally let $z \defeq (x, u)$ and $\hat z \defeq (\hat x, \hat u)$.
Finally, we define matrices $Q_x$, $Q_u$, $\hat Q_x$, and $\hat Q_u$ such that the following relationships hold 
\begin{align*}
    x -\noise_x \eqd Q_x \Theta_x \qquad
    u -\noise_u \eqd Q_u \Theta_u\\
    \hat{x} - \hat{\noise}_x \eqd \hat{Q}_x \hat{\Theta}_x\qquad
    \hat{u} - \hat{\noise}_u \eqd \hat{Q}_u \hat{\Theta}_u.
\end{align*}
Our next lemma translates relationships about one set of dynamics matrices into relationships about the other set of dynamics relationships.

\begin{lemma}\label{lem:backward_step}
If $x - \noise_x \eqd Q_x \hat{\Theta}_x$, then $x - \noise_x \eqd Q_x \hat{ \Theta}_x \eqd \hat{Q}_x \hat{\Theta}_x \eqd \hat{x} - \hat{\noise}_x$. Similarly,
if $u - \noise_u \eqd Q_u \hat{\Theta}_u$, then $u - \noise_u \eqd Q_u \hat{ \Theta}_u \eqd \hat{Q}_u \hat{\Theta}_u \eqd \hat{u} - \hat{\noise}_u$.
\end{lemma}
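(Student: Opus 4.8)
The plan is to read the displayed chain as three separate claims and dispatch them in turn. The outer two are immediate: the leftmost equality $x-\noise_x \eqd Q_x\hat\Theta_x$ is exactly the hypothesis, and the rightmost equality $\hat Q_x\hat\Theta_x \eqd \hat x - \hat\noise_x$ is the defining relation for $\hat Q_x$ recorded in the setup (namely $\hat x - \hat\noise_x \eqd \hat Q_x\hat\Theta_x$). So the entire content of the lemma is the middle equality $Q_x\hat\Theta_x \eqd \hat Q_x\hat\Theta_x$, and the proof reduces to establishing it; the $u$-case will be handled by the same argument with $\Theta_u, Q_u$ in place of $\Theta_x, Q_x$ and $\noise_u = \hat\noise_u = 0$.

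First I would unpack what $Q_x$ and $\hat Q_x$ are. By construction $Q_x$ is the regressor matrix of the linear recursion whose block rows are $[\,x_{t-1}^\top,\ u_{t-1}^\top\,]$, chosen so that $x - \noise_x \eqd Q_x\Theta_x$; the crucial structural observation is that the highest time index appearing in these rows is $\bT-1$, so $Q_x$ is a fixed measurable function $\phi$ of the sub-rollout $(x_0,u_0,\ldots,x_{\bT-1},u_{\bT-1})$ and never of the terminal state $x_\bT$. The matrix $\hat Q_x = \phi(\hat x_0,\hat u_0,\ldots,\hat x_{\bT-1},\hat u_{\bT-1})$ is the very same function evaluated on the hatted sub-rollout.

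The key step is then $Q_x \eqd \hat Q_x$ as random matrices. This holds because, at the point where the lemma is invoked, the two systems already agree in distribution on precisely the coordinates that feed into $\phi$ — the sub-rollout up to time $\bT-1$, together with matching initial conditions — so applying the common deterministic map $\phi$ preserves equality in distribution. Since $\hat\Theta_x$ is a deterministic matrix, right-multiplication $M \mapsto M\hat\Theta_x$ is again a fixed measurable map, hence $Q_x\hat\Theta_x \eqd \hat Q_x\hat\Theta_x$. Chaining the three equalities yields the $x$-claim, and the symmetric argument (rows $[\,x_t^\top,\ u_{t-1}^\top\,]$ for $Q_u$, and $u - \noise_u = u$ on both sides since $\noise_u = \hat\noise_u = 0$) yields the $u$-claim.

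The main obstacle is justifying $Q_x \eqd \hat Q_x$ cleanly: one must confirm both that $Q_x$ depends only on the sub-rollout strictly preceding $x_\bT$ and that the distributional match of exactly those feeding coordinates is genuinely available where the lemma is applied. This is where the surrounding induction in the proof of \Cref{thm:identifiabile-five-tuple} does the real work; once that bookkeeping is pinned down, preservation of equality in distribution under the fixed linear map $\hat\Theta_x$ is routine, and the lemma follows.
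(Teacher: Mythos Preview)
You correctly isolate the content of the lemma as the middle equality $Q_x\hat\Theta_x \eqd \hat Q_x\hat\Theta_x$, and you correctly note that this would follow from $Q_x \eqd \hat Q_x$. But your justification for $Q_x \eqd \hat Q_x$ is the gap. You write that ``at the point where the lemma is invoked, the two systems already agree in distribution on precisely the coordinates that feed into $\phi$,'' and you defer the work to ``the surrounding induction in the proof of \Cref{thm:identifiabile-five-tuple}.'' Neither is available. When the theorem applies this lemma (in the non-uniqueness $\Rightarrow$ non-identifiability direction), the only information in hand is the hypothesis $x-\noise_x \eqd Q_x\hat\Theta_x$, a statement purely about the \emph{unhatted} rollout evaluated against the \emph{hatted} parameters; it says nothing a priori about $\hat x$ or $\hat u$. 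Establishing $(x,u)\eqd(\hat x,\hat u)$ is precisely the conclusion you are after, so assuming the sub-rollouts already match is circular, and there is no external induction in the theorem proof to fall back on.

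The paper closes exactly this gap by running an induction \emph{inside} the lemma over a topological ordering $\sigma$ of the DAG in \Cref{fig:autoregressive-causal-graph}. The hypothesis $x - \noise_x \eqd Q_x\hat\Theta_x$ (together with its $u$-analogue) says each node $z_{\sigma(j+1)}$ is, in distribution, a fixed linear function of its parents $\{z_{\sigma(i)}\}_{i\le j}$, the noise $\noise$, and the hatted parameters $\hat\Theta_x,\hat\Theta_u$. Since the hatted system satisfies the identical functional relation with $\hat\noise \eqd \noise$, the inductive hypothesis $\{z_{\sigma(i)}\}_{i\le j}\eqd\{\hat z_{\sigma(i)}\}_{i\le j}$ propagates to $j+1$. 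This is the argument you need to supply in place of the appeal to an outside induction; once $z\eqd\hat z$ is established this way, $Q_x \eqd \hat Q_x$ follows because $Q_x$ is a function of $z$.
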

\begin{proof}
Recall that the random variables in the vector $z$ corresponds to nodes in the causal directed acyclic graph shown in \Cref{fig:autoregressive-causal-graph}.
Define $\sigma: \Z \to \Z$ such that $z_{\sigma(i)}$ is in sorted DAG order with respect to the DAG in \Cref{fig:autoregressive-causal-graph} (i.e, the parents of $z_{\sigma(i)}$ have $\sigma$ indices smaller than $\sigma(i)$ and its children have $\sigma$ indices larger than $\sigma(i)$). We proceed inductively to show that $z_{\sigma(i)} \eqd \hat z_{\sigma(i)}$.

\textit{Base case:} $z_{\sigma(1)} \eqd L(\noise, \hat{\Theta}_x, \hat{\Theta}_u)$, where $L$ is some function, linear in each of its inputs. Since $\noise \eqd \hat \noise$,  we have that $z_{\sigma(1)} \eqd L(\hat \noise, \hat{\Theta}_x, \hat{\Theta}_u) = \hat z_{\sigma(1)}$; the last equality follows from definition.

\textit{Inductive step:} suppose $z_{\sigma(j)} \eqd \hat z_{\sigma(j)}$ jointly over all $j$. We know that $z_{\sigma(j+1)} \eqd L(\{z_{\sigma(i)}\}_{i < j}, \noise, \hat{\Theta}_x, \hat{\Theta}_u)$ where $L$ is linear in $\{z_{\sigma(i)}\}_{i < j}$, linear in $\noise$, linear with respect to $\hat{\Theta}_x$, and linear in $\hat{\Theta}_u$. By the inductive hypothesis we know that $z_{\sigma(j+1)} \eqd L(\{z_{\sigma(i)}\}_{i < j}, \noise, \hat{\Theta}_x, \hat{\Theta}_u)$ which in turn is equal in distribution to $L(\{\hat z_{\sigma(i)}\}_{i < j}, \hat \noise, \hat{\Theta}_x, \hat{\Theta}_u) \eqd \hat z_{\sigma(j+1)}$, as all the inputs to the function are equal in distribution.

Because the entries of $Q$ ($\hat Q$ respectively) are comprised of entries of $z$ ($\hat z$ respectively), we have that $\hat{Q} \eqd Q$. This proves the desired result.
\end{proof}

\subsubsection{Part 1: Unidentifiability when $\bK = 1$}
Without loss of generality, let $\bT=2$. The proof of this result proceeds exactly as the proof of the unidentifiability result in \Cref{lem:finite-discrete-full-coverage} in \Cref{sec:proof-thm1-unidentifiability} except with $f, g, h, r$ defined as in \Cref{eqn:lin_dynamics} and with $\Delta: \R^p\to \R^d$ set to any linear function $\Delta(x) = Wx$ where $W \in \R^{d, p}$ is such that $W \neq 0$.

\subsubsection{Parts 2 and 3: Identifiability when $\bK \geq 2$}
Now that we have established our supporting lemmas, we can now prove our desired result. Without loss of generality, we will set $\bT = \bK + 1$.

\paragraph{Necessity and sufficiency when $x_{0} = u_{0} = \noise_t = 0$ for $t \geq 2$.}

Let $\xmat_t \in \R^{d,d}$ and $\umat_t \in \R^{p, d}$ be defined such that $x_t = \xmat_t \noise_1$ and $u_t = \umat_t \noise_1$. Further define the following random matrix:
\begin{gather*}
    Q_x \defeq \begin{bmatrix}
    x_{0}^\top  & u_{0}^\top  \\
    x_1^\top  & u_1^\top \\
    \vdots & \vdots\\
    x_{\bT-1}^\top  & u_{\bT-1}^\top 
    \end{bmatrix}.
\end{gather*}
Define hat versions of all variables accordingly. 
We have that $x - \noise_x \eqd \hat x - \hat \noise_x$ and $u - \noise_u \eqd \hat u - \hat \noise_u$. Moreover, $Q_x$ is comprised of entries of $x$ and $u$, $Q_x \eqd \hat Q_x$ (jointly). Thus,
\begin{align}\label{eqn:temp0}
    \begin{split}
    x - \noise_x&\eqd \hat x - \noise_x \eqd \hat Q_x \hat \Theta_x \eqd Q_x \hat \Theta_x\\
    u - \noise_u&\eqd \hat u - \noise_u \eqd \hat Q_u \hat \Theta_u \eqd Q_u \hat \Theta_u.
    \end{split}
\end{align}
Finally, defining the fixed matrices $\xmat \defeq [\xmat_2, \ldots, \xmat_\bT]^\top $, $\umat \defeq [\umat_2, \ldots, \umat_{\bT-1}]^\top $, and
\begin{align*}
    Q_\xmat \defeq \begin{bmatrix}
    \xmat_1^\top  & \umat_1^\top \\
    \vdots & \vdots\\
    \xmat_{\bT-1}^\top  & \umat_{\bT-1}^\top 
    \end{bmatrix},
\end{align*}
we can rewrite \eqref{eqn:temp0} as
\begin{align}\label{eqn:temp1}
\begin{split}
    \begin{bmatrix}
    \noise_1^\top \\
    \vdots\\
    \noise_1^\top 
    \end{bmatrix}
    \odot
    X
    &=
    \begin{bmatrix}
    \noise_1^\top \\
    \vdots\\
    \noise_1^\top 
    \end{bmatrix}
    \odot 
    Q_\xmat
    \hat \Theta_x.
\end{split}
\end{align}
Note, that in this reparameterization, we omit the $x_{1} - \noise_1 = Ax_{0} + B u_{0}$, as these terms are equal to 0.
Using \Cref{lem:dist_equal_cond} we know the above equality holds if and only if the following holds
\begin{align}\label{eqn:temp2}
\begin{split}
    X
    &=
    Q_\xmat
    \hat \Theta_x.
\end{split}
\end{align}
\newcommand{\mynull}{\mathop{\rm null}}
\newcommand{\myspan}{\mathop{\rm span}}
\newcommand{\col}{\mathop{\rm col}}
\Cref{lem:backward_step} tells us $B$ is identifiable if and only if the entries of $\Theta_x$ corresponding to $B$ is unique \eqref{eqn:temp2}. Indeed, if there exists two solutions $(\hat \Theta_x,\hat \Theta_u) \neq (\Theta_x, \Theta_u)$ such that $B \neq \hat B$, we can use \Cref{lem:backward_step} to show that $\hat\Plin_\bT = \Plin_\bT$; i.e., the system is not identifiable. The other direction is trivial, as $B$ being identifiable implies that $B$ is unique.

We now give equivalent conditions for when $B$ is unique. Let $\mc S \defeq \{e_j\}_{j=d+1}^{d+p}$ where $e_j$ is the $j$th standard basis vector in $\R^{d+p}$. $B$ is unique (i.e., $\hat B = B$) if an only if $\mynull(Q_X) \perp \myspan(\mc{S})$. Indeed suppose $v \in \null(Q_X)$ is such that $v$ is not orthogonal to $\myspan(\mc{S})$, then $\hat \Theta_x = \Theta_x + v1^\top $ is also a solution to \eqref{eqn:temp2}; moreover, $\hat B \neq B$ because $v$ is not orthogonal to $\myspan(\mc{S})$. 
Conversely suppose for all $v \in \null(Q_X)$, $v$ is orthogonal to $\myspan(\mc{S})$. Then, any alternative solution $\hat\Theta_x \neq \Theta_x$ must satisfy $\mc{C}(\hat\Theta_x - \Theta_x) \perp\myspan(\mc{S})$, where $\mc{C}$ denotes the column span, which implies that $\hat B = B$.

Note that if $M$ is a full rank matrix, $M Q$ has the same null space as $Q$. Further observe that by using elementary row operations, we know that there exists full rank square matrices $M_1$ and $M_2$ such that 
\begin{align*}
    Q_\xmat &= \begin{bmatrix}
    I & C^\top \\
    \xmat_2^\top  & (C \xmat_2 + D \umat_1)^\top \\
    \vdots & \vdots\\
    \xmat_{\bT-1}^\top  & (C \xmat_{\bT-1} + D \umat_{\bT-2})^\top 
    \end{bmatrix}
    = M_1\begin{bmatrix}
    I & C^\top \\
    0 & ( D \umat_1)^\top \\
    \vdots & \vdots\\
    0 & ( D \umat_{\bT-2})^\top 
    \end{bmatrix}
    = M_2 \begin{bmatrix}
    I & C^\top \\
    0 & (D C)^\top \\
    \vdots & \vdots\\
    0 & ( D^{\bT-2} C)^\top 
    \end{bmatrix}.
\end{align*}
$M_1$ and $M_2$ are products of full rank matrices corresponding to elementary row operations. $M_2$ is constructed by repeatedly applying the fact $\umat_t =C \xmat_t + D \umat_{t-1}$. Thus, $B$ is unique if and only if $\mynull(\tilde{Q}_X) \perp \myspan(\mc{S})$ where 
\begin{align*}
    \tilde{Q}_X \defeq \begin{bmatrix}
    I & C^\top \\
    0 & (D C)^\top \\
    \vdots & \vdots\\
    0 & ( D^{\bT-2} C)^\top 
    \end{bmatrix}.
\end{align*}
This is equivalent to $\myspan(\mc{S}) \subset \mc{R}(\tilde{Q}_X)$, where $\mc R$ denotes row span, which is then equivalent to $[DC, \ldots, D^{\bT-2}C]$ being full row rank (recall $\bT = \bK + 1$).
Tracing back all the if and only if statements gives the result.

\paragraph{Sufficiency even when $x_{0} \neq 0$ and $u_{0} \neq 0$.}
In this setting, the proof for Claim 1 holds up to Equation \eqref{eqn:temp1}. Equation \eqref{eqn:temp1} changes to the following
\begin{align*}
    \begin{bmatrix}
    \noise_1^\top \\
    \vdots\\
    \noise_1^\top 
    \end{bmatrix}
    \odot
    X &+ w_1(x_{0}, u_{0}, \noise_{> 1})=
    \begin{bmatrix}
    \noise_1^\top \\
    \vdots\\
    \noise_1^\top 
    \end{bmatrix}
    \odot 
    Q_\xmat
    \hat \Theta_x + w_2(x_{0}, u_{0}, \noise_{> 1}).
\end{align*}

By \Cref{lem:dist_equal_cond}, we know that these equalities hold if and only if Equation \eqref{eqn:temp2} holds, $w_1(x_{0}, u_{0}, \noise_{>1})= w_2(x_{0}, u_{0}, \noise_{>1})$ holds.
$\mynull(Q_\xmat) \perp \myspan(\mc S)$ \textit{suffices} (but is no longer necessary as there is one other relationships we are not accounting for) in showing there is a unique B in any solution of the linear system in Equation \eqref{eqn:temp2}. 
The rest of the argument in Claim 1 follows identically.

\subsection{Proof of \Cref{thm:doubleml-linear-conv-rate}}\label{proof:thm:doubleml-linear-conv-rate}
We first introduce a helpful supporting lemma.
\begin{lemma}
Suppose $n$ samples are drawn iid from $\Plin_2$.
If $X_1 X_1^\top$ is invertable, then $\what{C} = C$ and $\what{H} = A+BC + E_2 X_1^\top(X_1 X_1^\top)\inv$. If $X_1 X_1^\top$ is invertable and $DCX_1X_1^\top C^\top D^\top$ is invertable, then $\what{B} = B- E_2X_1^\top(X_1X_1^\top)\inv X_2(DCX_1)^\top(DCX_1X_1^\top C^\top D^\top)\inv$.
\end{lemma}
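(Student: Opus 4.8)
The plan is to reduce each of the three $\argmin$'s to its normal-equation solution and then substitute the linear dynamics together with the boundary conditions $x_0 = u_0 = 0$ and $\noise_3 = 0$. For an unconstrained least-squares problem of the form $\min_M \lfro{Y - M Z}^2$, the minimizer is $Y Z^\top (Z Z^\top)\inv$ whenever the Gram matrix $Z Z^\top$ is invertible. The single structural fact that drives the whole computation is that $u_0 = 0$ forces $U_0 = 0$, so the action update $U_1 = C X_1 + D U_0$ collapses to $U_1 = C X_1$, and likewise $U_2 = C X_2 + D U_1 = C X_2 + D C X_1$. I would record these identities first.

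For the first two stages, invertibility of $X_1 X_1^\top$ gives $\what C = U_1 X_1^\top (X_1 X_1^\top)\inv$ and $\what H = X_2 X_1^\top (X_1 X_1^\top)\inv$. Plugging $U_1 = C X_1$ into the former immediately yields $\what C = C$. For $\what H$, I would substitute $X_2 = A X_1 + B U_1 + E_2 = (A + BC) X_1 + E_2$ (again using $U_1 = C X_1$), so that $\what H = (A+BC) X_1 X_1^\top (X_1 X_1^\top)\inv + E_2 X_1^\top (X_1 X_1^\top)\inv = A + BC + E_2 X_1^\top (X_1 X_1^\top)\inv$, as claimed.

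For $\what B$, set $Y \defeq X_3 - \what H X_2$ and $W \defeq U_2 - \what C X_2$. Since $\what C = C$, we get $W = U_2 - C X_2 = D C X_1$, whence $W W^\top = D C X_1 X_1^\top C^\top D^\top$, so the stated invertibility condition is exactly what makes $\what B = Y W^\top (W W^\top)\inv$ well defined. Using $\noise_3 = 0$ I would write $X_3 = A X_2 + B U_2 = (A+BC) X_2 + B D C X_1$, and subtract $\what H X_2 = (A+BC)X_2 + E_2 X_1^\top (X_1 X_1^\top)\inv X_2$ to obtain $Y = B D C X_1 - E_2 X_1^\top (X_1 X_1^\top)\inv X_2$. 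Finally I would compute $Y W^\top = B (D C X_1 X_1^\top C^\top D^\top) - E_2 X_1^\top (X_1 X_1^\top)\inv X_2 (D C X_1)^\top$; multiplying on the right by $(W W^\top)\inv = (D C X_1 X_1^\top C^\top D^\top)\inv$ collapses the first term exactly to $B$ and leaves the stated residual.

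There is no genuine obstacle here: the argument is a bookkeeping exercise in the linear algebra of ordinary least squares, and the only things to keep straight are (i) invoking the two Gram-matrix invertibility hypotheses at the right moments to legitimize the normal-equation solutions, and (ii) consistently propagating the boundary conditions $u_0 = 0$ and $\noise_3 = 0$ so that the cross terms telescope. The one point to be slightly careful about is that the regressor $W$ in the third stage depends on the plug-in $\what C$ rather than the true $C$; this is harmless precisely because the first stage already established $\what C = C$ \emph{exactly} (not merely approximately), which is itself a consequence of the deterministic $u_0 = 0$ initialization.
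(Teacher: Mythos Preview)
Your proposal is correct and follows essentially the same route as the paper: both write down the normal-equation solutions, substitute $U_1 = CX_1$ (from $u_0=0$) and $X_2 = (A+BC)X_1 + E_2$ to obtain $\what C$ and $\what H$, and then use $U_2 - CX_2 = DCX_1$ together with $X_3 - (A+BC)X_2 = B(U_2 - CX_2)$ (equivalently, your $X_3 = (A+BC)X_2 + BDCX_1$ via $\noise_3=0$) to simplify the third-stage regression. The only cosmetic difference is that the paper phrases the key identity as ``subtract $BCX_2$ from both sides of $X_3 - AX_2 = BU_2$'' rather than expanding $X_3$ directly.
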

\begin{proof}
Substituting $U_1 = CX_1$ and $X_2 = (A+BC)X_1 + E_2$ into the closed form solutions of $\what{C}$ and $\what{H}$ respectively gives the first result.

To get the second result, we use the fact that $\what{C} = C$ and $\what{H} = A+BC+ E_2 X_1^\top(X_1 X_1^\top)\inv$ by the first result. We observe that $U_2 = CX_2 + DU_1 = CX_2 + DCX_1$ to get that $\what{B} = (X_3 - \what{H}X_2)(DCX_1)^\top(DCX_1X_1^\top C^\top D^\top)\inv$. Then we use the fact that subtracting $BCX_2$ from both sides of the relationship $X_3 - AX_2 = BU_2$ gives us that $X_3 - (A+BC)X_2 = B(U_2 - CX_2)$. Using our invertability assumptions, this gives us $\what{B} = B - E_2X_1^\top(X_1X_1^\top)\inv X_2(DCX_1)^\top(DCX_1X_1^\top C^\top D^\top)\inv$.
\end{proof}

With this, we can analyze the quantities of interest. Let $\scov_1 = \frac{1}{n}X_1X_1^\top$. Let $Q \defeq DC\scov_1C^\top D^\top $.
\begin{align*}
    \E \left[\lfro{\what{B} - B}^2 \mid \mc{G}\right]&= \frac{1}{n^2}\tr (\E[Q\inv DC X_1 X_2^\top  (X_1X_1^\top )\inv X_1 E_2^\top  E_2 X_1^\top  (X_1X_1^\top )\inv X_2 X_1^\top  C^\top  D^\top  Q\inv])\\
    &= \frac{\sigma_2^2 d}{n} \tr (\E[Q\inv DC \scov_1 (A+BC)^\top  \scov_1\inv (A+BC) \scov_1 C^\top  D^\top  Q\inv])\\
    &\leq \frac{\sigma_2^2 pd}{n} \kappa_{DC}^2 \left(\frac{\opnorm{A+BC} }{\sigma_{\min{}} (DC)}\right)^2  \E\left[\frac{\kappa_{\scov_1}^2}{\lambda_{\min{}} (\scov_1)}\right].
\end{align*}
Rearranging and using the definition of $\tau_1$ gives the result.

If $p=d$, then $DC$ is a square, invertible matrix,
\begin{align*}
    \E \left[\lfro{\what{B} - B}^2\mid \mc{G}\right] 
    &= \frac{\sigma_2^2 d}{n} \tr[(C^\top  D^\top )\inv (A+BC)^\top \E\left[\scov_1\inv\right] (A+BC) (DC)\inv ]\\
    &\leq\frac{\sigma_2^2 d^2}{n}\left(\frac{\opnorm{A+BC} }{\lambda_{\min{}} (DC)}\right)^2 \opnorm{\E\left[\scov_1\inv\right]}.
\end{align*}
Rearranging and using the definition of $\tau_2$ gives the result.

\subsection{Proof of \Cref{thm:adjustment-estimator-nonasymptotic}}
\label{proof:thm:adjustment-estimator-nonasymptotic}

\newcommand{\hY}{\hat{Y}}
\newcommand{\hZ}{\hat{Z}}
We let $Y(\setu, \setx) \defeq \E[x_2 \mid u_1 \in \setu, x_1 \in \setx]$, $Z(\setx) \defeq  Z(\setx_\idxx)$, $\hY(\setu, \setx) \defeq \what\E[x_2 \mid u_1 \in \setu, x_1 \in \setx]$, and $\hZ(\setx) \defeq \what Z(\setx_\idxx)$. The proof proceeds by bounding each of the following terms:
\begin{align*}
    \norm{\sum_{\alpha} \hY(\setu\subidxu, \setx_\idxx)\hZ(\setx_\idxx) - \E[x_2 | do(u_1 \defeq u)]} &\leq
     \norm{\sum_{\alpha} \hY(\setu\subidxu, \setx_\idxx)\hZ(\setx_\idxx)  - \sum_{\alpha}  \hY(\setu\subidxu, \setx_\idxx) Z(\setx_\idxx) }\\
    &\ + \norm{\sum_{\alpha}  \hY(\setu\subidxu, \setx_\idxx) Z(\setx_\idxx)- \sum_{\alpha}  Y(\setu\subidxu, \setx_\idxx) Z(\setx_\idxx) }\\
    &\ + \norm{\sum_{\alpha}  Y(\setu\subidxu, \setx_\idxx) Z(\setx_\idxx) - \sum_{\alpha} \E[x_2 | u_1 =u , x_1 =x] Z(\setx_\idxx)}\\
    &\ + \norm{ \sum_{\alpha} \E[x_2 | u_1 =u , x_1 =x] Z(\setx_\idxx)- \E[x_2 | do(u_1 \defeq u)]}.
\end{align*}

\subsubsection{Supporting lemmas}
We begin with a series of supporting lemmas that will aid us in bounding these terms.

\begin{lemma}\label{lem:cover-samples}
Let the conditions of \Cref{lem:finite-discrete-full-coverage} hold and let $\lambda$ denote the Lebesgue measure for $\R^{d+p}$. 
For all $A \in \familyx$ and $B\in \familyu$, the following implication is true: $\lambda(A \times B) > 0 \implies (x_1 \in A, u_1 \in B) >0$.
\end{lemma}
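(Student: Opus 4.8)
The plan is to reduce the statement entirely to the full-support property of $(x_1, u_1)$ that was already established inside the proof of \Cref{lem:finite-discrete-full-coverage}. Recall that under the conditions of that theorem (a responsive platform action, \Cref{ass:independent_noise}, and two consumption shocks prior to $\bT$), the ``showing overlap'' step of its Part~1 proves that the pair $(x_1, u_1)$ admits a joint density $p_{x_1, u_1}$ with respect to the Lebesgue measure on $\R^d \times \R^p$ that is strictly positive everywhere: it first argues (via the change-of-variables \Cref{lem:change-of-variables} and positivity of $\noise_1$) that $x_1$ has a density positive on all of $\R^d$, and then that the conditional $u_1 \mid x_1$ has positive density on all of $\R^p$ using the responsiveness of $q_c$ through \Cref{lem:positive-q-density}. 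This is exactly the input I need, so the first step is simply to invoke it.

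Next I would unpack the hypothesis $\lambda(A \times B) > 0$. Since $A \in \familyx$ is a subset of $\R^d$ and $B \in \familyu$ is a subset of $\R^p$, the product $A \times B$ is a subset of $\R^{d+p}$ and $\lambda(A \times B)$ is its $(d+p)$-dimensional Lebesgue measure. By Tonelli's theorem $\lambda(A \times B)$ factors as the product of the $d$- and $p$-dimensional measures of $A$ and $B$, so $\lambda(A \times B) > 0$ forces both marginal sets to have positive measure; in particular $A \times B$ itself has strictly positive Lebesgue measure in $\R^{d+p}$.

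Finally, I would write the target probability as an integral against the joint density and conclude by positivity:
\[
P(x_1 \in A,\, u_1 \in B) = \int_{A \times B} p_{x_1, u_1}(x, u)\, dx\, du > 0,
\]
where the inequality holds because the integrand is strictly positive everywhere and the domain $A \times B$ has positive Lebesgue measure, so the integral of a strictly positive function over a positive-measure set is strictly positive. This yields the desired implication. I do not anticipate a substantive obstacle: the entire difficulty has been absorbed into the full-support claim imported from the earlier proof, and the only points requiring care are (i) confirming that ``the conditions of \Cref{lem:finite-discrete-full-coverage}'' indeed supply the responsiveness and two-shock hypotheses that guarantee full support, and (ii) stating explicitly the elementary measure-theoretic fact that a strictly positive density integrated over a set of positive Lebesgue measure gives a strictly positive probability.
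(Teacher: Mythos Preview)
Your proposal is correct and is essentially identical to the paper's proof: invoke the strictly positive joint density $p_{x_1,u_1}$ established in the overlap step of \Cref{lem:finite-discrete-full-coverage}, then integrate it over the positive-measure set $A\times B$ to get a strictly positive probability. The Tonelli detour is harmless but unnecessary, since the hypothesis $\lambda(A\times B)>0$ already refers to the $(d{+}p)$-dimensional Lebesgue measure.
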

\begin{proof}
\begin{align*}
    P(x_1 \in A, u_1 \in B) =  \int_B \int_{A} p_{x_1, u_1}(x, u) dx du > 0
\end{align*}
We know the RHS is positive because the function being integrated is positive by \Cref{lem:finite-discrete-full-coverage}  and the set it's being integrated over has measure greater than 0.
\end{proof}

\begin{lemma}\label{lem:cover-stability}
Let $f: \R^d \to \R^p$ be a $L$-Lipschitz function. If every element of $\familyx$ has diameter at most $\varepsilon$ with respect to $\norm{\cdot}$, then for all $\setx \in \familyx$, for all $x, y \in \setx$, $\norm{f(x) - f(y)} \leq L \varepsilon$.
\end{lemma}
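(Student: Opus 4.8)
The plan is to combine the definition of Lipschitz continuity directly with the diameter bound imposed on the cover elements; no auxiliary construction is needed. First I would recall that the diameter of a set $\setx$ with respect to $\norm{\cdot}$ is $\sup_{x,y \in \setx} \norm{x - y}$. Hence the hypothesis that every element of $\familyx$ has diameter at most $\varepsilon$ means precisely that for any fixed $\setx \in \familyx$ and any two points $x, y \in \setx$, we have $\norm{x - y} \leq \varepsilon$. Next I would invoke the $L$-Lipschitz property of $f$, which by definition gives $\norm{f(x) - f(y)} \leq L \norm{x - y}$ for all $x, y \in \R^d$, and in particular for $x, y$ lying in a common $\setx$. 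Chaining these two facts yields $\norm{f(x) - f(y)} \leq L \norm{x - y} \leq L\varepsilon$, which is exactly the claimed bound.

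There is no substantive obstacle here: the statement is an immediate consequence of unwinding the two definitions in sequence. The only point that warrants a moment of care is the uniformity of the conclusion over the choice of cover element and over the pair of points within it. This is automatic, however, since the Lipschitz constant $L$ is a single global constant for $f$ and the diameter bound $\varepsilon$ applies to \emph{every} element of $\familyx$ simultaneously; thus the inequality holds with the same $L\varepsilon$ regardless of which $\setx \in \familyx$ and which $x, y \in \setx$ are selected.
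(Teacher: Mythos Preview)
Your proposal is correct and matches the paper's own argument, which simply states that the result follows directly from the definitions of diameter and Lipschitz continuity. There is nothing to add or change.
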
 
\begin{proof}
Follows directly from definitions of diameter and Lipschitz Continuity.
\end{proof}

\begin{lemma}\label{lem:treatment-effect-approximation}
Consider the data generation model of \eqref{eqn:gen_model}. Let \Cref{ass:lipschitz-conditional-ev} hold. Let $x_1$ have full support. 
Then, 
\begin{align*}
    \norm{ \E[x_2 | do(u_1\defeq u)] - \sum_{\alpha} \E[x_2 | u_1 =u , x_1 =x] Z(\setx_\idxx)} \leq L \varepsilon + \norm{\int_{ \R^d \setminus \xdom} \E[x_2 \mid u_1 = u, x_1 = z] p_{x_1}(z) dz}
\end{align*}
\end{lemma}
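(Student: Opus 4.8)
The plan is to begin from the exact integral form of the interventional mean and then separately control two error sources: the quadrature error from replacing the integral by the finite representative-point sum over the cover $\familyx$, and the truncation error from the region $\R^d \setminus \xdom$ the cover never reaches. First I would appeal to admissibility. Since $x_1$ has full support, every conditional expectation $\E[x_2 \mid u_1 = u, x_1 = z]$ is well defined, so \Cref{lem:admissibility} yields the adjustment formula
\[
    \E[x_2 \mid do(u_1 \defeq u)] = \int_{\R^d} \E[x_2 \mid u_1 = u, x_1 = z]\, p_{x_1}(z)\, dz.
\]
Because the cells of $\familyx$ are pairwise non-intersecting with union $\xdom$, I would then split the domain as $\int_{\R^d} = \sum_{\idxx} \int_{\setx_\idxx} + \int_{\R^d \setminus \xdom}$, which already isolates the truncation term appearing verbatim on the right-hand side of the claim.

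The heart of the argument is a per-cell quadrature estimate. For each $\idxx$, writing $Z(\setx_\idxx) = \int_{\setx_\idxx} p_{x_1}(z)\, dz$, the cell's contribution to the difference is $\int_{\setx_\idxx} \bigl( \E[x_2 \mid u_1=u, x_1=x] - \E[x_2 \mid u_1=u, x_1=z] \bigr)\, p_{x_1}(z)\, dz$, where $x \in \setx_\idxx$ is the representative point used in the sum. The key observation is that $w \mapsto \E[x_2 \mid u_1 = u, x_1 = w]$ is $L$-Lipschitz: freezing the first block of $v$ at $u$ in \Cref{ass:lipschitz-conditional-ev} makes $\norm{v - v'} = \norm{w - w'}$. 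Since $x$ and $z$ lie in the same cell, \Cref{lem:cover-stability} bounds the integrand in norm by $L\varepsilon$, so the per-cell error is at most $L\varepsilon\, Z(\setx_\idxx)$. Summing over $\idxx$ via the triangle inequality and using $\sum_{\idxx} Z(\setx_\idxx) = P(x_1 \in \xdom) \le 1$ controls the total quadrature error by $L\varepsilon$.

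Assembling the pieces, the triangle inequality gives
\[
    \norm{ \sum_{\idxx} \E[x_2 \mid u_1 = u, x_1 = x]\, Z(\setx_\idxx) - \E[x_2 \mid do(u_1 \defeq u)] } \le L\varepsilon + \norm{\int_{\R^d \setminus \xdom} \E[x_2 \mid u_1 = u, x_1 = z]\, p_{x_1}(z)\, dz},
\]
which is the claim. I do not anticipate a serious obstacle; the two places needing care are (i) checking that $w \mapsto \E[x_2 \mid u_1 = u, x_1 = w]$ inherits the constant $L$ from \Cref{ass:lipschitz-conditional-ev}, which is immediate once the platform-action coordinate is frozen at $u$, and (ii) ensuring the representative point $x$ stays inside its own cell $\setx_\idxx$ so the diameter bound behind \Cref{lem:cover-stability} applies. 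The truncation term is deliberately left untouched, since it is dispatched later in the proof of \Cref{thm:adjustment-estimator-nonasymptotic} under the integrability hypothesis $\E\norm{f(x_1)} < \infty$ together with a cover that exhausts $\R^d$.
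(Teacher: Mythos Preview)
Your proposal is correct and follows essentially the same approach as the paper's proof: invoke admissibility to obtain the integral representation, split the domain into the covered cells and the complement $B = \R^d \setminus \xdom$, then bound the per-cell quadrature error via the Lipschitz assumption together with \Cref{lem:cover-stability}, and sum using $\sum_\idxx Z(\setx_\idxx) \le 1$. Your write-up is in fact a bit more explicit than the paper's in spelling out why freezing the $u$-coordinate in \Cref{ass:lipschitz-conditional-ev} yields the needed $L$-Lipschitz map $w \mapsto \E[x_2 \mid u_1 = u, x_1 = w]$, and in flagging that the representative point $x$ must lie in its own cell $\setx_\idxx$; the paper leaves both points implicit.
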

\begin{proof}
Let $B \defeq \R^d \setminus \cup\familyx$ denote the set of points not covered by $\cup\familyx$. Then, we have the following inequalities:
\begin{align*}
    &\norm{ \E[x_2 | do(u_1\defeq u)] - \sum_{\alpha} \E[x_2 | u_1 =u , x_1 =x] Z(\setx_\idxx)}\\
    &\qquad \leq \Big\| \sum_{\idxx} \int_{\setx_\idxx} \E[x_2 \mid u_1 = u, x_1 = z] p_{x_1}(z) dz  - \sum_{\idxx}\E[x_2 \mid u_1 = u, x_1 = r] Z(\setx_\idxx)\Big\|\\
    &\qquad\qquad+ \norm{\int_B \E[x_2 \mid u_1 = u, x_1 = z] p_{x_1}(z) dz}\\
    &\qquad \leq   \sum_{\idxx} \Big\| \int_{\setx_\idxx} \E[x_2 \mid u_1 = u, x_1 = z] p_{x_1}(z) dz  - \E[x_2 \mid u_1 = u, x_1 = r] Z(\setx_\idxx)\Big\|\\
        &\qquad\qquad+ \norm{\int_B \E[x_2 \mid u_1 = u, x_1 = z] p_{x_1}(z) dz}\\
    &\qquad \leq \sum_{\idxx}\int_{\setx_\idxx} \Big\|  \E[x_2 \mid u_1 = u, x_1 = z] - \E[x_2 \mid u_1 = u, x_1 = r] \Big\|p_{x_1}(z) dz\\
        &\qquad\qquad+ \norm{\int_B \E[x_2 \mid u_1 = u, x_1 = z] p_{x_1}(z) dz}\\
    &\qquad \leq L\varepsilon
        + \norm{\int_{\R^d \setminus \xdom} \E[x_2 \mid u_1 = u, x_1 = z] p_{x_1}(z) dz}.
\end{align*}
The first and second inequality is from triangle inequality. The third comes from Jensen's inequality. The fourth inequality comes \Cref{ass:lipschitz-conditional-ev} and \Cref{lem:cover-stability}. 
\end{proof}

\Cref{lem:treatment-effect-approximation} tells us that it suffices to create an estimator that estimates $\sum_{\idxx}\E[x_2 \mid u_1 = u, x_1 = r] Z(\setx_\idxx)$---supposing that $\xdom$ is a good approximation of $\R^d$ with respect to $x_1$.

\begin{lemma}\label{lem:y-control}
Consider the data generating process from \eqref{eqn:gen_model}. Let $x_1, u_1$ have full support. Let \Cref{ass:conditional-density-control} hold, then
\begin{align*}
    \norm{\E[Y(\setu\subidxu, \setx\subidxx)] - \E[x_2|u_1 =u, x_1=x]} \leq \frac{2\eta(\varepsilon)}{1 - \eta(\varepsilon)} \E[\norm{x_2} | u_1=u , x_1 = x].
\end{align*}
\end{lemma}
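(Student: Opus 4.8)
The plan is to bound the deviation between the \emph{discretized} conditional expectation $\E[Y(\setu\subidxu, \setx\subidxx)]$ and the \emph{pointwise} conditional expectation $\E[x_2 \mid u_1 = u, x_1 = x]$ by comparing the conditional densities that define these two quantities. First I would write out $\E[Y(\setu\subidxu, \setx\subidxx)]$ explicitly: since $Y(\setu,\setx) = \E[x_2 \mid u_1 \in \setu, x_1 \in \setx]$, this is an average of $x_2$ weighted by the conditional density $p(u_1 = u', x_1 = w' \mid x_2)$ integrated over $(u',w') \in \setu \times \setx$. The key observation is that both $\E[Y(\setu\subidxu, \setx\subidxx)]$ and $\E[x_2 \mid u_1 = u, x_1 = x]$ can be written as integrals against $x_2$, differing only in the conditional density weighting the $x_2$ values.

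The main tool will be \Cref{ass:conditional-density-control}: because every element of $\familyx$ and $\familyu$ has diameter at most $\varepsilon/2$, any $(u', w')$ in the relevant cell satisfies $\norm{v' - v} \leq \varepsilon$ with $v = [u^\top, x^\top]^\top$, so the density ratio bound $1 - \eta(\varepsilon) \leq \frac{p(u_1=u', x_1=w' \mid x_2)}{p(u_1=u, x_1=x \mid x_2)} \leq 1 + \eta(\varepsilon)$ applies. I would rewrite the difference $\E[Y(\setu\subidxu, \setx\subidxx)] - \E[x_2 \mid u_1 = u, x_1 = x]$ by expressing both terms with a common reference density (the one at the point $(u,x)$), then use the density-ratio bound to control how much the cell-averaged density deviates. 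Concretely, the numerator and denominator defining $\E[Y]$ each pick up multiplicative factors within $[1 - \eta(\varepsilon), 1 + \eta(\varepsilon)]$ relative to their pointwise counterparts, and after forming the ratio and subtracting, the error factor collapses to something of order $\frac{\eta(\varepsilon)}{1 - \eta(\varepsilon)}$, which when combined with a triangle inequality over the positive and negative parts of $x_2$ yields the factor $\frac{2\eta(\varepsilon)}{1 - \eta(\varepsilon)}$ multiplying $\E[\norm{x_2} \mid u_1 = u, x_1 = x]$.

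The main obstacle I anticipate is the bookkeeping around the ratio of integrals. Since $\E[Y(\setu,\setx)]$ is itself a ratio (conditional expectation over a cell = integral of $x_2 \cdot p$ divided by integral of $p$), perturbing both numerator and denominator by factors in $[1-\eta, 1+\eta]$ requires care to track how the errors compound. The clean way is to fix the reference point $(u,x)$, factor out the pointwise density, and observe that $\frac{a(1+\eta)}{b(1-\eta)}$ and $\frac{a}{b}$ differ by at most $\frac{2\eta}{1-\eta} \cdot \frac{a}{b}$ in the scalar case, then lift this to the vector-valued $x_2$ by applying the bound coordinatewise or via $\norm{\cdot}$ after splitting $x_2$ into magnitude contributions. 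I expect the argument to rely crucially on $x_1, u_1$ having full support (guaranteed by the hypotheses of \Cref{lem:finite-discrete-full-coverage}, which are assumed), ensuring all the conditioning events have positive probability so the conditional expectations are well defined.
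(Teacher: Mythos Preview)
Your proposal is correct and follows essentially the same route as the paper. The paper writes both conditional expectations via Bayes' rule as integrals against the marginal $p_{x_2}$, obtaining
\[
\E[x_2\mid Z\in A]-\E[x_2\mid Z=z]=\int x\Bigl[\tfrac{P(Z\in A\mid x_2=x)}{P(Z\in A)}-\tfrac{p(Z=z\mid x_2=x)}{p(Z=z)}\Bigr]p_{x_2}(x)\,dx,
\]
pulls the norm inside by Jensen, and bounds the bracket by $\tfrac{2\eta}{1-\eta}\cdot\tfrac{p(Z=z\mid x_2=x)}{p(Z=z)}$ using \Cref{ass:conditional-density-control}; your ratio-of-integrals bookkeeping and the $\tfrac{a(1+\eta)}{b(1-\eta)}$ heuristic reproduce exactly this bound, and the Jensen step makes the coordinatewise splitting you worried about unnecessary.
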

\begin{proof}
Fix any $u \in \udom, x\in \xdom$. Let $Z\defeq(x_1, u_1)$, $z \defeq (x, u)$, and $A \defeq \setx\subidxx \times \setu\subidxu$. Observe that $\E[Y(\setu\subidxu, \setx\subidxx)] = \E[x_2\mid Z\in A]$. Note that these conditional expectations exist because $Z$ has full support and by construction $A$ has positive Lebesgue measure. The following holds
\begin{align*}
    \norm{\E[x_2\mid Z\in A] - \E[x_2\mid Z=z]}
    &= \norm{\int_{\R^d} x\left[ \frac{P(Z \in A | x_2 = x)}{P(Z \in A)} - \frac{p(Z=z |x_2=x)}{p(Z=z)}\right] p_{x_2}(x)  dx }\\
    &\leq \int_{\R^d} \norm{ x} \absval{ \frac{P(Z \in A | x_2 = x)}{P(Z \in A)} - \frac{p(Z=z |x_2=x)}{p(Z=z)}}p_{x_2}(x)  dx\\
    &\leq \frac{2\eta(\varepsilon)}{1 - \eta(\varepsilon)} \int_{\R^d} \norm{ x}\frac{p(Z=z |x_2=x)}{p(Z=z)}p_{x_2}(x) dx\\
    &= \frac{2\eta(\varepsilon)}{1 - \eta(\varepsilon)} \E[\norm{x_2} | Z = z].
\end{align*}
The first inequality is an application of Jensen's inequality. The second inequality is an application of \Cref{ass:conditional-density-control} and the fact that the diameter of $A$ is no more than $\varepsilon$. 
\end{proof}

\subsubsection{Applying lemmas to bound terms}
Armed with these lemmas we can proceed with bounding each of the aforementioned terms.\\~\\
\textbf{First term:}
Recall that the following holds for a $\tau^2$-subgaussian random variable $X$ 
\begin{align*}
    P(|X - \E[X]| > \delta |\E[X]|) \leq 2 \exp\left(\frac{-\delta^2 \E[X]^2}{2\tau^2}\right).
\end{align*}

For any $\idxx$, $\hZ(\setx_\idxx)$ is $\frac{1}{4n}$ subgaussian. This means we need $n = \frac{1}{2 \delta^2 Z(\setx_\idxx)^2}\log(4|\familyx|/\rho)$ samples to get $\setu\subidxu\hZ(\setx_\idxx)$ within error of $\delta Z(\setx_\idxx)$ of $Z(\setx_\idxx)$ with probability $\rho/(2|\familyx|)$. 
Using union bound, we have that with probability with at least $1-\rho/2$, 
\begin{align*}
    &\norm{\sum_{\idxx} \hY(\setu\subidxu, \setx_\idxx)\hZ(\setx_\idxx) - \sum_{\idxx}  \hY(\setu\subidxu, \setx_\idxx) Z(\setx_\idxx) }\\
    &\leq \sum_{\idxx} \norm{\hY(\setu\subidxu, \setx_\idxx)} |\hZ(\setx_\idxx)- Z(\setx_\idxx)|\\
    &\quad\leq \delta \sum_{\idxx} \norm{\hY(\setu\subidxu, \setx_\idxx)} Z(\setx_\idxx)\\
    &\quad\leq \delta \sum_{\idxx} \norm{\hY(\setu\subidxu, \setx_\idxx) -Y(\setu\subidxu, \setx_\idxx)} Z(\setx_\idxx)
    + \delta \sum_{\idxx} \norm{Y(\setu\subidxu, \setx_\idxx)} Z(\setx_\idxx)\\
    &\quad\leq \delta \gamma + \delta \sum_{\idxx} \norm{Y(\setu\subidxu, \setx_\idxx)} Z(\setx_\idxx)\\
    &\quad\leq\delta\gamma + \delta \constzero \diam +\delta  \constzero \E[\norm{g(u_1)} \mid u_1 \in \setu\subidxu, x_1 \in \setx_\idxx]\\
    &\quad\leq \delta\gamma + 2\delta \constzero \diam
\end{align*}
where the first inequality comes from triangle inequality. The second inequality comes from subgaussianity. The third inequality is from triangle inequality. The fourth inequality is from the bound of the \textbf{Second term} below. The fifth and sixth inequalities are from
triangle inequality, compactness, and from the fact $\E[\noise_t] =0$. \\~\\
\textbf{Second term:}
For any $\idxx$, $\hY(\setu\subidxu, \setx_\idxx)$ is $\frac{\sigma^2}{n_{u, x}}$ subgaussian, which means its $\frac{d\sigma^2}{n_{u, x}}$ norm-subgaussian by Lemma 1 from \cite{JinNeGeKaJo19}. Thus, the following inequality holds
\begin{align*}
    P(\norm{\hY(\setu\subidxu, \setx_\idxx) - \E[\hY(\setu\subidxu, \setx_\idxx)]} \geq t) 
    \leq 2 \exp\left( - \frac{t^2 n_{u, x}}{2 d \sigma^2 }\right).
\end{align*}
This means we need $n_{u, x} = \frac{2d \sigma^2}{\gamma^2} \log(4|\familyx|/\rho)$ samples to get $\hY(\setu\subidxu, \setx_\idxx)$ with error $\gamma$ of $\E[\hY(\setu\subidxu, \setx_\idxx)]$ with probability $\rho/ (2|\familyx|)$. Moreover, because the conditions of \Cref{lem:cover-samples} are met, we know these requirements will hold for all $n_{u, x}$ for large enough $n$. Using union bound, we have that with probability with at least $1-\rho/2$,
\begin{align*}
    &\norm{\sum_{\idxx} \hY(\setu\subidxu, \setx_\idxx) Z(\setx_\idxx) - \sum_{\idxx} Y(\setu\subidxu, \setx_\idxx) Z(\setx_\idxx) }
    \leq \sum_{\idxx}\norm{ \hY(\setu\subidxu, \setx_\idxx) - Y(\setu\subidxu, \setx_\idxx) }  Z(\setx_\idxx) \leq \gamma 
\end{align*}
The first inequality comes from Jensen's inequality. The second comes from subgaussianity.\\~\\
\textbf{Third term:}
\begin{align*}
    &\norm{\sum_{\idxx} Y(\setu\subidxu, \setx_\idxx) Z(\setx_\idxx) - \sum_{\idxx} \E[x_2 | u_1 =u , x_1 =x] Z(\setx_\idxx)}\\
    &\qquad \leq \sum_{\idxx}\norm{ Y(\setu\subidxu, \setx_\idxx) - \E[x_2 | u_1 =u , x_1 =x] )} Z(\setx_\idxx)\\
    &\qquad \leq \frac{2\eta}{1 - \eta} \sum_{\idxx} \E[\norm{x_2} | u_1=u , x_1 = x]Z(\setx_\idxx)\\
    &\qquad \leq \frac{2\eta}{1 - \eta} \left(
    2\constzero \diam 
    + \constone \right)
\end{align*}
The first inequality comes from Jensen's inequality. The second comes from \Cref{lem:y-control}. The third inequality comes from triangle inequality. \\~\\
\textbf{Fourth term:}
Recalling that $B \defeq \R^d \setminus \cup\familyx$.
\begin{align*}
    &\norm{ \sum_{\idxx} \E[x_2 | u_1 =u , x_1 =x] Z(\setx_\idxx) - \E[x_2 | do(u_1 \defeq u)]}\\
    &\qquad\leq L \varepsilon + \norm{\int_B \E[x_2 \mid u_1 = u, x_1 = z] p_{x_1}(z) dz}\\
    &\qquad\leq L\varepsilon + \constzero \E[\norm{f(x_1)} \bindic{x_1 \in B}]
    + \constzero P_{x_1}(B) \diam.
\end{align*}
The first inequality comes from \Cref{lem:treatment-effect-approximation}. The second inequality comes from $\E \noise_2 = 0$,
triangle inequality, Jensen's inequality, and the definition of $\diam$.

Union bounding over the two events and bounding the first and second terms and combining all the inequalities gives the result.

\end{document}